\def\colorful{1}
\newif\ifhyper\IfFileExists{hyperref.sty}{\hypertrue}{\hyperfalse}
\ifhyper\usepackage{hyperref}\fi
\newcommand{\repeattheorem}[1]{\begingroup
  \renewcommand{\thetheorem}{\ref{#1}}\expandafter\expandafter\expandafter\theorem
  \csname reptheorem@#1\endcsname
  \endtheorem
  \endgroup
}
\newcommand{\repeatlemma}[1]{\begingroup
  \renewcommand{\thelemma}{\ref{#1}}\expandafter\expandafter\expandafter\lemma
  \csname replemma@#1\endcsname
  \endlemma
  \endgroup
}
\def\nnewcolor{1}
\newtheorem{theorem}{Theorem}[section]
\newtheorem{lemma}[theorem]{Lemma}
\newtheorem{informal theorem}[theorem]{Theorem (informal statement)}
\newtheorem{corollary}[theorem]{Corollary}
\newtheorem{claim}[theorem]{Claim}
\newtheorem{fact}[theorem]{Fact}
\theoremstyle{definition}
\newtheorem{definition}[theorem]{Definition}
\newcommand{\eqdef}{\stackrel{{\mathrm {\footnotesize def}}}{=}}
\providecommand{\customgenericname}{}
\newcommand{\newcustomtheorem}[2]{\newenvironment{#1}[1]
  {\renewcommand\customgenericname{#2}\renewcommand\theinnercustomgeneric{##1}\innercustomgeneric
  }
  {\endinnercustomgeneric}
}
\newcommand{\sigMO}[1]{ e^{-\frac{#1}{\sigma }}}
\newcommand{\density}{\gamma}
\newcommand{\W}{ {\cal W} }
\newcommand{\ramp}{r_{\sigma}}
\newcommand{\lp}{\left}
\newcommand{\rp}{\right}
\newcommand\norm[1]{\left\| #1 \right\|}
\newcommand\snorm[2]{\left\| #2 \right\|_{#1}}
\renewcommand\vec[1]{\mathbf{#1}}
\DeclareMathOperator*{\Prob}{\mathbf{Pr}}
\DeclareMathOperator*{\E}{\mathbf{E}}
\newcommand{\proj}{\mathrm{proj}}
\newcommand{\me}{\mathrm{e}}
\newcommand{\mrm}{\mathrm}
\newcommand{\SL}{\mathcal{L}_{\sigma}}
\newcommand{\SLS}{\mathcal{L}_{\sigma}^{\text{ramp}}}
\def\d{\mathrm{d}}
\DeclareMathOperator*{\argmin}{argmin}
\newcommand{\bx}{\mathbf{x}}
\newcommand{\by}{\mathbf{y}}
\newcommand{\bw}{\mathbf{w}}
\newcommand{\Sp}{\mathbb{S}}
\newcommand{\err}{\mathrm{err}}
\newcommand{\R}{\mathbb{R}}
\newcommand{\Z}{\mathbb{Z}}
\newcommand{\eps}{\epsilon}
\newcommand{\pr}{\mathbf{Pr}}
\newcommand{\poly}{\mathrm{poly}}
\newcommand{\sgn}{\mathrm{sign}}
\newcommand{\sign}{\mathrm{sign}}
\newcommand{\opt}{\mathrm{OPT}}
\newcommand{\D}{\mathcal{D}}
\newcommand{\Ind}{\mathds{1}}
\newcommand{\1}{\Ind}
\newcommand{\littlesum}{\mathop{\textstyle \sum}}
\newcommand{\wh}{\widehat}
\newcommand{\dotp}[2]{\left\langle #1, #2 \right\rangle}
\newcommand{\wstar}{\bw^{\ast}}
\newcommand{\citep}{\cite}
\title{Learning Halfspaces with Massart Noise \\ Under Structured Distributions}
\author{
Ilias Diakonikolas\thanks{Supported by NSF Award CCF-1652862 (CAREER) and a Sloan Research Fellowship.}\\
University of Wisconsin-Madison\\
{\tt ilias@cs.wisc.edu}\\
\and
Vasileios Kontonis\\
University of Wisconsin-Madison\\
{\tt kontonis@wisc.edu }\\
\and
Christos Tzamos\\ University of Wisconsin-Madison\\
{\tt tzamos@wisc.edu}
\and
Nikos Zarifis\\
University of Wisconsin-Madison\\
{\tt zarifis@wisc.edu}\\
}
\begin{document}

\maketitle

\begin{abstract}
We study the problem of learning halfspaces with Massart noise in the distribution-specific PAC model.
We give the first computationally efficient algorithm for this problem
with respect to a broad family of distributions,
including log-concave distributions. This resolves an open question posed in a number of prior works.
Our approach is extremely simple: We identify a smooth {\em non-convex} surrogate loss with the
property that any approximate stationary point of this loss defines a halfspace
that is close to the target halfspace. Given this structural result, we can use SGD
to solve the underlying learning problem.
\end{abstract}

\setcounter{page}{0}
\thispagestyle{empty}
\newpage

\section{Introduction} \label{sec:intro}

\subsection{Background and Motivation} \label{ssec:background}

Halfspaces, or Linear Threshold Functions, are Boolean functions $h_{\bw}: \R^d \to \{ \pm 1\}$ of the form
$h_{\bw}(\bx) = \sgn \left(\langle \bw, \bx \rangle \right)$, where $\bw \in \R^d$ is the associated weight vector.
(The univariate function $\sign(t)$ is defined as $\sgn(t)=1$, for $t \geq 0$, and $\sgn(t)=-1$ otherwise.)
Halfspaces have been a central object of study in various fields, including complexity theory, optimization,
and machine learning~\citep{MinskyPapert:68, Yao:90, GHR:92, CristianiniShaweTaylor:00, AoBF14}.
Despite being studied over several decades, a number of basic structural and algorithmic questions involving halfspaces
remain open.

The algorithmic problem of learning an unknown halfspace from random labeled examples
has been extensively investigated since the 1950s --- starting with Rosenblatt's Perceptron
algorithm~\citep{Rosenblatt:58} --- and has arguably been one of
the most influential problems in the field of machine learning.
In the realizable case, i.e., when all the labels are consistent with the target halfspace,
this learning problem amounts to linear programming, hence can be solved in polynomial time
(see, e.g.,~\cite{MT:94, CristianiniShaweTaylor:00}).
The problem turns out to be much more challenging algorithmically
in the presence of noisy labels, and its computational complexity crucially depends on the noise model.

In this work, we study the problem of distribution-specific PAC
learning of halfspaces in the presence of Massart noise~\citep{Massart2006}.
In the Massart noise model, an adversary can flip each label independently
with probability {\em at most} $\eta<1/2$, and the goal of the learner is to reconstruct
the target halfspace to arbitrarily high accuracy. More formally, we have:

\begin{definition}[Distribution-specific PAC Learning with Massart Noise] \label{def:massart-learning}
Let $\mathcal{C}$ be a concept class of Boolean functions over $X= \R^d$,
$\mathcal{F}$ be a {\em known family} of structured distributions on $X$,  $0 \leq \eta <1/2$,
and $0< \eps <1$.
Let $f$ be an unknown target function in $\mathcal{C}$.
A {\em noisy example oracle}, $\mathrm{EX}^{\mathrm{Mas}}(f, \mathcal{F}, \eta)$,
works as follows: Each time $\mathrm{EX}^{\mathrm{Mas}}(f, \mathcal{F}, \eta)$ is invoked,
it returns a labeled example $(\bx, y)$, such that: (a) $\bx \sim \D_{\bx}$, where $\D_{\bx}$ is a fixed
distribution in $\mathcal{F}$, and (b) $y = f(\bx)$ with probability $1-\eta(\bx)$
and $y = -f(\bx)$ with probability $\eta(\bx)$, for an {\em unknown} parameter $\eta(\bx) \leq \eta$.
Let $\D$ denote the joint distribution on $(\bx, y)$ generated by the above oracle.
A learning algorithm is given i.i.d. samples from $\D$
and its goal is to output a hypothesis $h$ such that with high probability $h$ is $\eps$-close to
$f$, i.e., it holds $\pr_{\bx \sim \D_{\bx}} [h(\bx) \neq f(\bx)] \leq \eps$.
\end{definition}

Massart noise is a realistic model of random noise that has attracted significant attention
in recent years (see Section~\ref{ssec:related} for a summary of prior work).
This noise model goes back to the 80s, when it was
studied by Rivest and Sloan~\citep{Sloan88, RivestSloan:94}
under the name ``malicious misclassification noise'', and a very similar asymmetric
noise model was considered even earlier by Vapnik~\citep{Vapnik82}.
The Massart noise condition lies in between the
Random Classification Noise (RCN)~\citep{AL88} -- where each label is flipped
independently with probability {\em exactly} $\eta<1/2$ --
and the agnostic model~\citep{Haussler:92, KSS:94} -- where an adversary can flip
any small constant fraction of the labels.

The sample complexity of PAC learning with Massart noise is well-understood.
Specifically, if $\mathcal{C}$ is the class of $d$-dimensional halfspaces,
it is well-known~\citep{Massart2006} that $O(d/(\eps \cdot (1-2\eta)^2))$
samples information-theoretically suffice to determine a hypothesis $h$
that is $\eps$-close to the target halfspace $f$ with high probability
(and this sample upper bound is best possible).
The question is whether a computationally efficient algorithm exists.

The algorithmic question of efficiently computing an accurate hypothesis in the distribution-specific PAC setting
with Massart noise was initiated in~\cite{AwasthiBHU15}, and subsequently studied in a sequence of
works~\citep{AwasthiBHZ16, ZhangLC17, YanZ17, MangoubiV19}. This line of work has given
polynomial-time algorithms for learning halfspaces with Massart
noise, when the underlying marginal distribution $\D_{\bx}$ is the uniform distribution
on the unit sphere (i.e., the family $\mathcal{F}$ in Definition~\ref{def:massart-learning} is a singleton).

The question of designing a computationally efficient learning algorithm for this problem that succeeds
under more general distributional assumptions remained open, and has been posed as an open problem
in a number of places~\citep{AwasthiBHZ16, Awasthi:18-ttic, BH20}.
Specifically,~\cite{AwasthiBHZ16} asked whether there exists a polynomial-time
algorithm for all log-concave distributions, and the same question was more recently highlighted in~\cite{BH20}.
In more detail,~\cite{AwasthiBHZ16} gave an algorithm that succeeds under any log-concave distribution,
but has sample complexity and running time $d^{2^{\poly(1/(1-2\eta))}}/\poly(\eps)$, i.e., doubly exponential
in $1/(1-2\eta)$.
\cite{BH20} asked whether a $\poly(d, 1/\eps, 1/(1-2\eta))$ time algorithm exists for log-concave marginals.
As a corollary of our main algorithmic result (Theorem~\ref{thm:main-inf}), we answer this question in the affirmative. Perhaps surprisingly, our algorithm is extremely simple
(performing SGD on a natural non-convex surrogate) and succeeds
for a broader family of structured distributions,
satisfying certain (anti)-anti-concentration and tail bound properties.
In the following subsection, we describe our main contributions in detail.

\subsection{Our Results} \label{ssec:results}

The main result of this paper is the first polynomial-time algorithm for learning
halfspaces with Massart noise with respect to a broad class of well-behaved distributions.
Before we formally state our algorithmic result, we define the
family of distributions $\mathcal{F}$ for which our algorithm succeeds:

\begin{definition}[Bounded distributions] \label{def:bounds}
Fix $U, R >0$ and $t: (0,1) \rightarrow \R_+$.
An isotropic (i.e., zero mean and identity covariance) distribution $\D_{\bx}$ on $\R^d$
is called $(U, R, t)$-bounded if for any projection $(\D_{\bx})_V$ of $\D_{\bx}$
onto a $2$-dimensional subspace $V$ the corresponding pdf $\gamma_V$ on $\R^2$
satisfies the following properties:
\begin{enumerate}
  \item  $\gamma_V(\vec x) \geq 1/U$, for all $\vec x \in V$ such that $\snorm{2}{\vec x} \leq R$ (anti-anti-concentration).
  \item $\gamma_V(\vec x) \leq U$ for all $x \in V$ (anti-concentration).
  \item For any $\eps \in (0,1)$, $\pr_{\vec x \sim \gamma_V}[ \snorm{2}{\vec x} \geq t(\eps)] \leq \eps$
  (concentration).
\end{enumerate}
We say that $\D_{\bx}$ is $(U, R)$-bounded if concentration is not required to hold.
\end{definition}

The main result of this paper is the following:

\begin{theorem}[Learning Halfspaces with Massart Noise] \label{thm:main-inf}
There is a computationally efficient algorithm that learns halfspaces in the presence of Massart noise
with respect to the class of $(U, R, t)$-bounded distributions on $\R^d$. Specifically, the algorithm
draws $m = \poly\left(U/R, t(\eps/2), 1/(1-2\eta) \right) \cdot O(d/\eps^4)$ samples from a noisy example
oracle at rate $\eta<1/2$, runs in sample-polynomial time, and outputs a hypothesis halfspace $h$ that is $\eps$-close to the target with probability at least $9/10$.
\end{theorem}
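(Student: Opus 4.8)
\medskip
\noindent\textbf{Proof plan.}
The plan is to reduce learning to finding an approximate stationary point of a fixed smooth, bounded, \emph{non-convex} surrogate, and then run projected SGD on the sphere. Write $\wstar$ for the unit weight vector of the target, so $f(\bx)=\sgn(\dotp{\wstar}{\bx})$, and for a unit vector $\bw$ let $\theta(\bw)=\angle(\bw,\wstar)$. Fix a width $\sigma=\poly(\eps,\,1-2\eta,\,R/U)$ to be chosen, let $\ramp:\R\to[-1,1]$ be an odd non-decreasing ``soft step'' at scale $\sigma$ (e.g.\ $\ramp(t)=\sgn(t)(1-e^{-t^2/\sigma^2})$, so $\ramp'(t)=(2|t|/\sigma^2)e^{-t^2/\sigma^2}$ is a nonnegative bump on $|t|\lesssim\sigma$ with $\|\ramp'\|_\infty,\|\ramp''\|_\infty=\poly(1/\sigma)$), and define on the unit sphere
\[
\SLS(\bw)\;=\;\E_{(\bx,y)\sim\D}\!\left[\,\ramp\!\big(-y\,\dotp{\bw}{\bx}\big)\,\right]\;=\;\E_{\bx}\!\left[\,-(1-2\eta(\bx))\,f(\bx)\,\ramp\!\big(\dotp{\bw}{\bx}\big)\,\right],
\]
using that $\ramp$ is odd, $y=\pm1$, and $\E[y\mid\bx]=(1-2\eta(\bx))f(\bx)$. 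Since $1-2\eta(\bx)>0$, this is a bounded smooth surrogate for an affine image of the $0/1$ loss whose global minimizer is $\wstar$. By the concentration property we first truncate $\bx$ to the ball of radius $t(\eps/2)$, which perturbs $\SLS$ (and every stochastic gradient) by at most $\eps/2$; keep the same names.

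\smallskip
\noindent\textbf{Structural lemma (the key step).}
I claim there is $\sigma=\poly(\eps,1-2\eta,R/U)$ and a threshold $\tau_0=\poly\!\big(\eps,\,1-2\eta,\,R/U,\,1/t(\eps/2)\big)>0$ so that any $\bw\in\Sp^{d-1}$ with $\|\mathrm{grad}\,\SLS(\bw)\|\le\tau_0$ (Riemannian gradient on the sphere) satisfies $\min\{\theta(\bw),\pi-\theta(\bw)\}\le\eps/(CU)$ for an absolute constant $C$. This suffices: after projecting to the plane $V=\mathrm{span}(\bw,\wstar)$, the disagreement region of $\sgn(\dotp{\bw}{\cdot})$ and $f$ (resp.\ of $-\sgn(\dotp{\bw}{\cdot})$ and $f$) is a double wedge of angular width $\theta(\bw)$ (resp.\ $\pi-\theta(\bw)$), which by anti-concentration (Property~2, density $\le U$ on $V$) has probability at most $CU\cdot\min\{\theta,\pi-\theta\}\le\eps/2$; together with the $\eps/2$ lost to truncation this gives true error $\le\eps$ for $\sgn(\dotp{\bw}{\cdot})$ or for $-\sgn(\dotp{\bw}{\cdot})$.

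\smallskip
\noindent\textbf{Proving the lemma.}
Pair the gradient with the unit tangent $\bv\propto\wstar-\dotp{\wstar}{\bw}\bw$ at $\bw$ toward $\wstar$. Since $\ramp'$ is even (and $y=\pm1$),
\[
\dotp{\mathrm{grad}\,\SLS(\bw)}{\bv}\;=\;\E_{\bx}\!\left[-(1-2\eta(\bx))\,f(\bx)\,\dotp{\bx}{\bv}\,\ramp'\!\big(\dotp{\bw}{\bx}\big)\right],
\]
and the integrand depends on $\bx$ only through its projection $\bx_V$ (as $\bw,\wstar,\bv\in V$). In polar coordinates on $V$, the factor $\ramp'(\dotp{\bw}{\bx_V})$ concentrates the mass on the thin strip $S=\{|\dotp{\bw}{\bx_V}|\lesssim\sigma\}$ around $\bw^\perp\cap V$; on $S$ the signs of $f(\bx)=\sgn(\dotp{\wstar}{\bx_V})$ and of $\dotp{\bx}{\bv}$ agree except on a sub-strip of angular width $O(\theta)$ near $\wstar^\perp$, so $-f(\bx)\dotp{\bx}{\bv}\le 0$ on the bulk of $S$ and is ``wrong-signed'' only there. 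Lower-bounding the bulk term via anti-anti-concentration (Property~1, density $\ge 1/U$ out to radius $R$) gives a negative contribution of size $\ge c\,(1-2\eta)\,g(\sigma,R,U)$; upper-bounding the sub-strip term via anti-concentration, $\|\ramp'\|_\infty=\poly(1/\sigma)$, and $|\dotp{\bx}{\bv}|\le t(\eps/2)$ gives an error $\le C\,U\,\theta\cdot\poly(1/\sigma)\cdot t(\eps/2)$. Choosing $\sigma$ polynomially small so the bulk dominates whenever $\min\{\theta,\pi-\theta\}\ge\eps/(CU)$ forces $|\dotp{\mathrm{grad}\,\SLS(\bw)}{\bv}|\ge\tau_0>0$, contradicting the hypothesis. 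Getting this balance right --- extracting an $\Omega(1-2\eta)$-scale ``signal'' from anti-anti-concentration while controlling the boundary ``noise'' from anti-concentration and the truncation radius --- is the main obstacle, and it is what pins down the polynomial rates in $\sigma$ and $\tau_0$.

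\smallskip
\noindent\textbf{Optimization and sample complexity.}
The truncated $\SLS$ is $L$-smooth on $\Sp^{d-1}$ with $L=\poly(1/\sigma,\,t(\eps/2))$ (from bounded $\|\ramp'\|_\infty,\|\ramp''\|_\infty$, the truncation bound, and $\E[\bx\bx^\top]\preceq I$ by isotropy), and $-y\,\ramp'(\dotp{\bw}{\bx})\,\bx$ is an unbiased stochastic gradient with second moment $\poly(1/\sigma)\cdot d$. Projected SGD on the sphere for $N=\poly(L,1/\tau_0)\cdot d$ steps with one fresh sample per step produces an iterate $\bw$ with $\E\|\mathrm{grad}\,\SLS(\bw)\|\le\tau_0$ by the standard analysis of SGD on smooth non-convex objectives, while uniform convergence for the bounded $\poly(1/\sigma)$-Lipschitz loss over the $(d+1)$-dimensional class of linear functions on the radius-$t(\eps/2)$ ball controls the empirical-vs-population gap at scale $\tau_0$. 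Tracking $\tau_0=\poly(\eps,1-2\eta,R/U,1/t(\eps/2))$ through $N$ and the uniform-convergence rate yields $m=\poly(U/R,\,t(\eps/2),\,1/(1-2\eta))\cdot O(d/\eps^4)$. Finally, to disambiguate ``$\theta$ small'' from ``$\pi-\theta$ small'' and to boost the constant success probability of SGD, repeat the run $O(1)$ times and, on a fresh validation set of $O(d/\eps)$ samples, output whichever of $\{\sgn(\dotp{\bw}{\cdot}),\,-\sgn(\dotp{\bw}{\cdot})\}$ over all runs has smallest empirical error; by the structural lemma some candidate has true error $\le\eps$, so with probability $\ge 9/10$ so does the output.
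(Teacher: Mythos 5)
Your plan is essentially the paper's: replace the $0$-$1$ loss by a smooth surrogate, prove a structural lemma that every approximate stationary point on the sphere is angle-close to $\pm\wstar$ by projecting to $\mathrm{span}(\bw,\wstar)$ and playing a ``good-region'' lower bound (anti-anti-concentration, density $\geq 1/U$ up to radius $R$) against a ``bad-region'' upper bound (anti-concentration, density $\leq U$), and then run projected SGD to stationarity and pick the best candidate by empirical $0$-$1$ error (cf.\ Lemma~\ref{lem:structural_massart}, Lemma~\ref{lem:PSGD}, Claim~\ref{lem:angle_zero_one}). The differences (odd Gaussian-type soft sign instead of the logistic $S_\sigma$, an upfront truncation to radius $t(\eps/2)$, selection over $O(1)$ repeated runs rather than over all iterates) are cosmetic.

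Two quantitative steps are mis-stated, though, and the first one matters for the key balancing argument. Your bad-region bound $C\,U\,\theta\cdot\poly(1/\sigma)\cdot t(\eps/2)$ \emph{increases} as $\sigma$ decreases, while the bulk term is $\Omega\bigl((1-2\eta)R^2/U\bigr)$ independent of $\sigma$, so as written ``choose $\sigma$ polynomially small so the bulk dominates'' cannot close. The correct estimate uses that inside the strip $|\dotp{\bw}{\bx}|\lesssim\sigma$ the wrong-signed double wedge (between $\bw^{\perp}$ and $(\wstar)^{\perp}$) forces $|\dotp{\bx}{\bv}|\lesssim\sigma/\tan\theta$, so its contribution is $O\bigl(U\sigma^{2}/\tan^{2}\theta\bigr)$ and \emph{does} vanish with $\sigma$; this is exactly the rectangle bound in the ramp warm-up and the bound \eqref{eq:sec3Bad} in the sigmoid case, where the exponential tail of $S'_\sigma$ makes your truncation unnecessary (the paper uses $t(\cdot)$ only in the angle-to-error conversion of Claim~\ref{lem:angle_zero_one}, which also carries a $t(\eps/2)^2$ factor your conversion drops). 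Second, in your validation step the empirical error is measured against the noisy labels, so the empirically best candidate is only guaranteed small excess error $\err_{0-1}^{\D}-\opt$; concluding $\eps$-closeness to the target $f$ requires dividing by $1-2\eta$ (Fact~\ref{fact:massart_error}), i.e.\ validating to accuracy $\eps(1-2\eta)$ as in Algorithm~\ref{alg:full} — with tolerance $\eps$ you only get $O(\eps/(1-2\eta))$-closeness. Both fixes are routine and are absorbed by the $\poly(U/R,t(\eps/2),1/(1-2\eta))$ factor in the theorem, but as written these two steps do not go through.
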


See Theorem~\ref{thm:main_massart} for a more detailed statement.
Theorem~\ref{thm:main-inf} provides the first polynomial-time algorithm
for learning halfspaces with Massart noise under a fairly broad family of well-behaved
distributions. Specifically, our algorithm runs in $\poly(d, 1/\eps, 1/(1-2\eta))$ time, as long
as the parameters $R, U$ are bounded above by some $\poly(d)$,
and the function $t(\eps)$ is bounded above by some $\poly(d/\eps)$.
These conditions do not require a specific parametric or nonparametric form for the underlying density
and are satisfied for several reasonable continuous distribution families.
We view this as a conceptual contribution of this work.

It is not hard to show that the class of isotropic log-concave distributions
is $(U,R, t)$-bounded, for $U, R = O(1)$ and $t(\eps) = O(\log(1/\eps))$
(see Fact~\ref{fact:RU-lc}). Similar implications hold for a broader class of distributions, known as
$s$-concave distributions. (See Appendix~\ref{app:lc-sc}.)
Using Fact~\ref{fact:RU-lc},
we immediately obtain the following corollary:

\begin{corollary}[Learning Halfspaces with Massart Noise Under Log-concave Distributions]  \label{cor:lc}
There exists a polynomial-time algorithm that learns halfspaces with Massart noise under any isotropic log-concave distribution.
The algorithm has sample complexity $m = \widetilde{O}(d/\eps^4) \cdot \poly(1/(1-2\eta))$ and runs in sample-polynomial time.
\end{corollary}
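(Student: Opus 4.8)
The plan is to obtain Corollary~\ref{cor:lc} as a direct instantiation of Theorem~\ref{thm:main-inf}: it suffices to show that every isotropic log-concave distribution on $\R^d$ is $(U, R, t)$-bounded with $U, R = O(1)$ and $t(\eps) = O(\log(1/\eps))$, which is the content of Fact~\ref{fact:RU-lc}, and then to substitute these parameter values into the sample-complexity bound of Theorem~\ref{thm:main-inf}. So the only real work is verifying the three defining properties of Definition~\ref{def:bounds} for log-concave marginals.

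For the verification, I would first reduce to the two-dimensional case: the projection $(\D_{\bx})_V$ of an isotropic log-concave $\D_{\bx}$ onto a $2$-dimensional subspace $V$ is again an isotropic log-concave distribution on $\R^2$, since marginals of log-concave densities are log-concave (Pr\'ekopa's theorem) and projecting a zero-mean, identity-covariance distribution onto a subspace preserves these moment conditions. It then remains to bound an arbitrary isotropic log-concave density $\gamma$ on $\R^2$. Property 2 (anti-concentration) is the standard fact that such a density is pointwise at most a universal constant $U = O(1)$; Property 3 (concentration) follows from the sub-exponential tails of log-concave measures, giving $\pr_{\bx \sim \gamma}[\snorm{2}{\bx} \ge C \log(1/\eps)] \le \eps$ and hence $t(\eps) = O(\log(1/\eps))$. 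Property 1 (anti-anti-concentration) is the one point requiring a small argument: since $\gamma$ has unit variance in every direction and is bounded above by $U$, it cannot place too much mass near the origin, so it must have $\Omega(1)$ mass on a ball of constant radius, and log-concavity propagates this into a pointwise lower bound $\gamma(\bx) \ge 1/U'$ on a ball of radius $R = \Omega(1)$; enlarging $U$ to $\max\{U, U'\}$ gives all three properties with $U, R = O(1)$.

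Finally I would plug these into Theorem~\ref{thm:main-inf}, whose sample complexity is $m = \poly(U/R,\, t(\eps/2),\, 1/(1-2\eta)) \cdot O(d/\eps^4)$: with $U/R = O(1)$ and $t(\eps/2) = O(\log(1/\eps))$ the $\poly$ prefactor becomes $\poly(\log(1/\eps)) \cdot \poly(1/(1-2\eta)) = \widetilde{O}(1) \cdot \poly(1/(1-2\eta))$, which yields $m = \widetilde{O}(d/\eps^4) \cdot \poly(1/(1-2\eta))$; the running time and the $9/10$ success probability are inherited verbatim from Theorem~\ref{thm:main-inf}. The only nontrivial step is the verification of Property 1, i.e., exhibiting a \emph{uniform} constant radius $R$ and constant $U$ that work simultaneously for all isotropic log-concave densities on $\R^2$; Properties 2 and 3 and the reduction to $2$-dimensional projections are routine facts about log-concave measures.
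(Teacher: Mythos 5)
Your proposal is correct and follows essentially the same route as the paper: Corollary~\ref{cor:lc} is obtained by instantiating Theorem~\ref{thm:main-inf} with the fact (Fact~\ref{fact:RU-lc}) that isotropic log-concave distributions are $(U,R,t)$-bounded with $U,R=O(1)$ and $t(\eps)=O(\log(1/\eps))$, after reducing to $2$-dimensional marginals. The only difference is that the paper dispenses with your sketched argument for the anti-anti-concentration lower bound by directly citing the known Lov\'asz--Vempala density bounds (Lemma~\ref{lem:LogConcaveDensityBounds}) together with Paouris' inequality for the tail, which is exactly the fact you would need to make your ``log-concavity propagates the mass bound into a pointwise lower bound'' step rigorous.
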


Corollary~\ref{cor:lc} gives the first polynomial-time algorithm for this problem, answering
an open question of~\cite{AwasthiBHZ16, Awasthi:18-ttic, BH20}.
We obtain similar implications for $s$-concave distributions.
(See Appendix~\ref{app:lc-sc} for more details.)

While the preceding discussion focused on polynomial learnability,
our algorithm establishing Theorem~\ref{thm:main-inf}
is extremely simple and can potentially be practical. Specifically, our algorithm
simply performs SGD (with projection on the unit ball) on a natural
{\em non-convex} surrogate loss, namely an appropriately smoothed
version of the misclassification error function,
$\err_{0-1}^{\mathcal{D}}(\bw) = \pr_{(\bx,y) \sim \D} [\sign(\langle \bx, \bw\rangle) \neq y]$.
We also note that the sample complexity of our algorithm for log-concave marginals
is optimal as a function of the dimension $d$, within constant factors.

Our approach for establishing Theorem~\ref{thm:main-inf} is fairly robust and
immediately extends to a slightly stronger noise model, considered in~\cite{ZhangLC17},
which we term {\em strong Massart noise}. In this model, the flipping probability can
be arbitrarily close to $1/2$ for points that are very close to the true separating hyperplane.
These implications are stated and proved in Section~\ref{sec:generalized}.

\subsection{Technical Overview} \label{ssec:techniques}

Our approach is extremely simple: We take an optimization view and leverage the
structure of the learning problem to identify a simple {\em non-convex}
surrogate loss $\SL(\bw)$ with the following property: {\em Any} approximate
stationary point $\wh{\bw}$ of $\SL$ defines a halfspace $h_{\wh{\bw}}$, which
is close to the target halfspace $f(\bx) = \sign(\langle \bw^{\ast}, \bx
\rangle)$.  Our non-convex surrogate is smooth, by design. Therefore, we can
use any first-order method to efficiently find an approximate stationary point.

We now proceed with a high-level intuitive explanation. For simplicity of this
discussion, we consider the population versions of the relevant loss functions.
The most obvious way to solve the learning problem is by attempting to directly
optimize the population risk with respect to the $0-1$ loss, i.e., the
misclassification error $\pr_{(\bx,y) \sim \D} [h_{\bw}(\bx) \neq y]$ as a
function of the weight vector $\bw$.  Equivalently, we seek to minimize the
function $F(\bw) = \E_{(\bx,y) \sim \D} [\1\{- y \dotp{\bw}{\bx} \geq 0\}]$, where
$\1\{t \geq 0\}$ is the zero-one step function.  This is of course a non-convex
problem and it is unclear how to efficiently solve directly.

A standard recipe in machine learning to address non-convexity is to replace
the $0-1$ loss $F(\bw)$ by an appropriate convex surrogate. This method seems
to inherently fail in our setting.  However, we are able to find a {\em
non-convex} surrogate that works. Even though finding a global optimum of a
non-convex function is hard in general, we show that a much weaker requirement
suffices for our learning problem. In particular, it suffices to find a point
where our non-convex surrogate has small gradient. Our main structural result
is that any such point is close to the target weight vector $\bw^{\ast}$.

To obtain our non-convex surrogate loss $\SL$, we replace the step function
$\1\{t \geq 0\}$ in $F(\bw)$ by a well-behaved approximation. That is, our
surrogate is of the form 
$\SL(\bw) = \E_{(\bx,y) \sim \D} [ r(-y \dotp{\bw}{\bx})]$, 
where $r(t)$ is an approximation (in some sense) of
$\1\{t \geq 0\}$. A natural first idea is to approximate the step function by a
piecewise linear (ramp) function. We show (Section~\ref{ssec:ramp}) that this
leads to a non-convex surrogate that indeed satisfies the desired structural
property. The proof of this statement turns out to be quite clean, capturing
the key intuition of our approach. Unfortunately, the non-convex surrogate
obtained this way (i.e., using the ramp function as an approximation to the
step function) is non-smooth and it is unclear how to efficiently find an
approximate stationary point.  A simple way to overcome this obstacle is to
instead use an appropriately {\em smooth} approximation to the step function.
Specifically, we use the logistic loss (Section~\ref{ssec:smooth-sigmoid}), but
several other choices would work.  See Figure~\ref{fig:lossfunctions} for an illustration.

\begin{figure}
  \centering
  \includegraphics[width=0.4\textwidth]{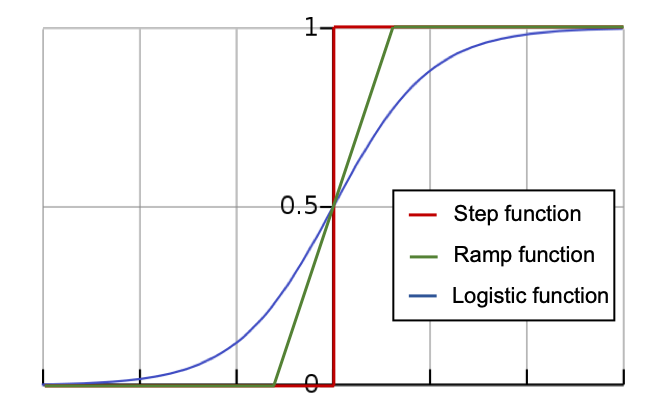}
  \caption{The step function and its surrogates.}
\label{fig:lossfunctions}
\end{figure}

We note that our structural lemma (showing that any stationary point of a non-convex surrogate suffices) 
crucially leverages the underlying distributional assumptions (i.e., the fact that $\D_{\bx}$ is $(U, R)$
bounded). It follows from a lower bound construction in~\cite{DGT19} that the
approach of the current paper does not extend to the distribution-independent
setting.  In particular, for any loss function $\mathcal{L}$, \cite{DGT19} constructs
examples where there exist stationary points of $\mathcal{L}$ defining
hypotheses that are far from the target halfspace.

\subsection{Related and Prior Work} \label{ssec:related}

\paragraph{Prior Work on Learning with Massart Noise}
We start with a summary of prior work on distribution-specific PAC learning of
halfspaces with Massart noise. The study of this learning problem was initiated
in~\cite{AwasthiBHU15}. That work gave the first polynomial-time algorithm for
the problem that succeeds under the uniform distribution on the unit sphere,
assuming the upper bound on the noise rate $\eta$ is smaller than a
sufficiently small constant ($\approx 10^{-6}$). Subsequently,
\cite{AwasthiBHZ16} gave a learning algorithm with sample and computational
complexity $d^{2^{\poly(1/(1-2\eta))}}/\poly(\eps)$ that succeeds for any noise
rate $\eta<1/2$ under any log-concave distribution.

The approach in~\cite{AwasthiBHU15, AwasthiBHZ16} uses an iterative
localization-based method. These algorithms operate in a sequence of phases and
it is shown that they make progress in each phase.
To achieve this,~\cite{AwasthiBHU15, AwasthiBHZ16} leverage a
distribution-specific agnostic learner for halfspaces~\citep{KKMS:08} and
develop sophisticated tools to control the trajectory of their algorithm.

Inspired by the localization approach, \cite{YanZ17} gave a 
perceptron-like algorithm (with sample complexity linear in $d$) 
for learning halfspaces with Massart noise under the uniform distribution on the sphere.
Their algorithm again proceeds in phases and crucially exploits the symmetry
of the uniform distribution to show that the angle between the current
hypothesis $\wh{\bw}^{(i)}$ and the target halfspace $\wstar$ decreases in
every phase.  \cite{ZhangLC17} also gave a polynomial-time algorithm for
learning halfspaces with Massart noise under the uniform distribution on the
unit sphere.  Their algorithm works in the strong Massart noise model and is
based on the Stochastic Gradient Langevin Dynamics (SGLD) algorithm applied to a
smoothed version of the empirical $0-1$ loss. Their method leads to sample
complexity $\Omega_{\eta}(d^4/\eps^4)$ and its running time involves
$\Omega_{\eta}(d^{13.5}/\eps^{16})$ inner product evaluations.  More recently,
\cite{MangoubiV19} improved these bounds to
$\Omega_{\eta}(d^{8.2}/\eps^{11.4})$ inner product evaluations via a similar
approach. Our method is much simpler in comparison, running SGD
directly on the population loss and using one sample per iteration with a
significantly improved sample complexity and running time.

Furthermore, in contrast to the aforementioned approaches, we study a more general setting
(in the sense that our method works for a broad family of distributions), and
our approach is not tied to the iterations of any particular algorithm. Our
structural lemma (Lemma \ref{lem:structural_massart}) shows that {\em any}
approximate stationary point of our non-convex surrogate loss suffices. As a
consequence, one can apply any first-order method that converges to
stationarity (and in particular vanilla SGD with projection on the unit sphere
works). The upshot is that we do not need to establish guarantees for the
trajectory of the method used to reach such a stationary point.  The only thing
that matters is the endpoint of the algorithm. Intriguingly, for a generic
distribution in the class we consider, it is unclear if it is possible to
establish a monotonicity property for a first-order method reaching a
stationary point.

We note that the $d$-dependence in the sample complexity of our algorithm
is information-theoretically optimal, even under Gaussian marginals.
The $\eps$-dependence seems tight for our approach,  given recent lower bounds
for the convergence of SGD~\citep{Dr19lb}, or any stochastic first-order method~\citep{Arj19lb},
to stationary points of smooth non-convex functions.

Finally, we comment on the relation to a recent work
on distribution-independent PAC learning of halfspaces with Massart noise~\citep{DGT19}.
\cite{DGT19} gave a distribution-independent PAC learner
for halfspaces with Massart noise that approximates
the target halfspace within misclassification error $\approx \eta$, i.e.,
it does not yield an arbitrarily close approximation to the true function.
In contrast, the aforementioned distribution-specific algorithms
achieve information-theoretically optimal misclassification error, which implies
that the output hypothesis can be arbitrarily close to the true target halfspace.
As a result, the results of this paper are not subsumed by~\cite{DGT19}.

\paragraph{Comparison to RCN and Agnostic Settings}
It is instructive to compare the complexity of learning halfspaces in the
Massart model with two related noise models. In the RCN model,
a polynomial-time algorithm is known in the distribution-independent
PAC model~\citep{BlumFKV96, BFK+:97}.
In sharp contrast, even weak agnostic learning is hard in the
distribution-independent setting~\citep{GR:06, FGK+:06short, Daniely16}.
Moreover, obtaining information-theoretically optimal error guarantees
remains computationally hard in the agnostic model, even when the marginal
distribution is the standard Gaussian~\citep{KlivansK14} (assuming the hardness of noisy parity).
On the other hand, recent work~\citep{ABL17, DKS18a} has given efficient algorithms
(for Gaussian and log-concave marginals)
with error $O(\opt) +\eps$, where $\opt$ is the misclassification error of the optimal halfspace.

 \newcommand{\capfun}{\mathrm{cap}}
\newcommand{\CLR}{\mathrm{CappedLeakyRelu}}

\section{Preliminaries}

For $n \in \Z_+$, let $[n] \eqdef \{1, \ldots, n\}$.  We will use small
boldface characters for vectors.  For $\bx \in \R^d$ and $i \in [d]$, $\bx_i$
denotes the $i$-th coordinate of $\bx$, and $\|\bx\|_2 \eqdef
(\littlesum_{i=1}^d \bx_i^2)^{1/2}$ denotes the $\ell_2$-norm of $\bx$. 
We will use $\langle \bx, \by \rangle$ for the inner product of $\bx, \by \in
\R^d$ and $ \theta(\bx, \by)$ for the angle between $\bx, \by$.

Let $\vec e_i$ be the $i$-th standard basis vector in $\R^d$.  
For $d\in \mathbb{N}$, let $\Sp^{d-1} \eqdef \{\bx \in \R^d:\|\bx\|_2 = 1 \}$. 
Let $\proj_U(\vec x)$ be the projection of $\vec x$ to subspace
$U \subset \R^d$ and $U^{\perp}$ be its orthogonal complement.  

Let $\E[X]$ denote the expectation of random variable $X$ and
$\pr[\mathcal{E}]$ the probability of event $\mathcal{E}$.  

An (origin-centered) halfspace is any Boolean-valued function $h_{\bw}: \R^d \to \{\pm 1\}$ 
of the form $h_{\bw}(\bx) = \sgn \left(\langle \bw, \bx \rangle \right)$,
where $\bw \in \R^d$. (Note that we may assume w.l.o.g. that $\|\bw\|_2 =1$.)

We consider the binary classification setting where labeled examples $(\bx,y)$ are drawn
i.i.d. from a distribution $\D$ on $\R^d \times \{ \pm 1\}$. 
We denote by $\D_{\bx}$ the marginal of $\D$ on $\vec x$.
The misclassification error of a hypothesis $h: \R^d \to \{\pm 1\}$ (with respect to $\D$) is 
$\err_{0-1}^{\D}(h) \eqdef \pr_{(\bx, y) \sim \D}[h(\bx) \neq y]$. The zero-one error between
two functions $f, h$ (with respect to $\D_{\bx}$) is 
$\err_{0-1}^{\D_{\bx}}(f, h) \eqdef \pr_{\bx \sim \D_{\bx}}[f(\bx) \neq h(\bx)]$.

We will use the following simple claim relating the zero-one loss between two halfspaces (with respect
to a bounded distribution) and the angle between their normal vectors (see Appendix~\ref{app:angle-loss} 
for the proof).
\begin{claim}\label{lem:angle_zero_one}
Let $\D_{\bx}$ be a $(U, R)$-bounded distribution on $\R^d$. 
For any $\vec u, \vec v \in \R^d$ we have that 
$R^2/ U \cdot \theta(\vec u, \vec v) \leq \err_{0-1}^{\D_{\bx}}(h_{\vec u},h_{\vec v})$.
Moreover, if $\D_{\bx}$ is $(U, R, t(\cdot))$-bounded, for any $0< \eps \leq 1$,\ we have that 
$\err_{0-1}^{\D_{\bx}}(h_{\vec u},h_{\vec v})
\leq U t(\eps)^2 \cdot \theta(\vec v, \vec u) + \eps \;.$
\end{claim}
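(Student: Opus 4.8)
The plan is to reduce the claim to an elementary planar area computation. We may assume $d \ge 2$ (the statement being trivial otherwise), and we let $V$ be a two-dimensional subspace containing both $\vec u$ and $\vec v$. The first observation is that whether $h_{\vec u}(\bx) \ne h_{\vec v}(\bx)$ depends only on $\proj_V(\bx)$, since $\langle \vec u, \bx \rangle = \langle \vec u, \proj_V(\bx) \rangle$ and likewise for $\vec v$. Thus, writing $\gamma_V$ for the pdf of the projection of $\D_{\bx}$ onto $V$ and $W \eqdef \{\vec z \in V : \sgn \langle \vec u, \vec z\rangle \ne \sgn \langle \vec v, \vec z \rangle\}$ for the disagreement region, we have
\[
  \err_{0-1}^{\D_{\bx}}(h_{\vec u}, h_{\vec v}) \;=\; \Pr_{\vec z \sim \gamma_V}\!\left[\vec z \in W\right].
\]

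Next I would describe $W$ geometrically. It is the symmetric difference of the two halfplanes $\{\vec z : \langle \vec u, \vec z\rangle \ge 0\}$ and $\{\vec z : \langle \vec v, \vec z \rangle \ge 0\}$ inside $V$, and a short polar-coordinate computation shows that $W$ is a ``double wedge'': the union of two antipodal circular sectors, each of opening angle exactly $\theta \eqdef \theta(\vec u, \vec v)$ (one can sanity-check this at $\theta = 0, \pi/2, \pi$). Since a sector of angle $\alpha$ in a disk of radius $\rho$ has area $\tfrac{1}{2}\alpha\rho^2$, this yields the single identity the whole proof rests on:
\[
  \mathrm{area}\!\left(W \cap \{\snorm{2}{\vec z} \le \rho\}\right) \;=\; \theta\,\rho^2 \qquad \text{for all } \rho > 0.
\]

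For the lower bound I would use the anti-anti-concentration property ($\gamma_V(\vec z) \ge 1/U$ whenever $\snorm{2}{\vec z} \le R$):
\[
  \err_{0-1}^{\D_{\bx}}(h_{\vec u}, h_{\vec v}) \;=\; \Pr_{\vec z \sim \gamma_V}[\vec z \in W] \;\ge\; \frac{1}{U}\,\mathrm{area}\!\left(W \cap \{\snorm{2}{\vec z} \le R\}\right) \;=\; \frac{R^2}{U}\,\theta.
\]
For the upper bound I would first discard the tail using concentration, $\Pr_{\vec z \sim \gamma_V}[\snorm{2}{\vec z} \ge t(\eps)] \le \eps$, so that $\Pr[\vec z \in W] \le \Pr[\vec z \in W \cap \{\snorm{2}{\vec z} \le t(\eps)\}] + \eps$, and then bound the truncated probability using the anti-concentration property ($\gamma_V \le U$ everywhere): $\Pr[\vec z \in W \cap \{\snorm{2}{\vec z} \le t(\eps)\}] \le U\,\mathrm{area}(W \cap \{\snorm{2}{\vec z} \le t(\eps)\}) = U\,t(\eps)^2\,\theta$. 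Adding the two contributions gives $\err_{0-1}^{\D_{\bx}}(h_{\vec u}, h_{\vec v}) \le U\,t(\eps)^2\,\theta + \eps$, as required.

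I do not anticipate a real obstacle here: once the problem is restricted to the plane $V$, everything reduces to the sector-area fact together with the three defining inequalities of a bounded distribution. The one point deserving a little care is checking that the disagreement region is a double wedge of angle exactly $\theta$ (and not $\pi - \theta$), and that the degenerate cases $\vec u \parallel \pm \vec v$ fit the same picture with $\theta \in \{0, \pi\}$; both are routine.
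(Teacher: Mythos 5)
Your proposal is correct and takes essentially the same route as the paper's own proof: project onto the plane spanned by $\vec u$ and $\vec v$, observe that the disagreement set is a double wedge of angle $\theta$, and then apply the density lower bound on the radius-$R$ disk for the first inequality and the concentration bound at radius $t(\eps)$ together with the density upper bound for the second. The only difference is that you make explicit the sector-area computation that the paper leaves implicit.
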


 \section{Main Structural Result: Stationary Points Suffice}\label{section3}\label{sec:struct}
In this section, we prove our main structural result.  In
Section~\ref{ssec:ramp}, we define a simple non-convex surrogate by replacing
the step function by the (piecewise linear) ramp function and show that any
approximate stationary point of this surrogate loss suffices. In
Section~\ref{ssec:smooth-sigmoid}, we prove our actual structural result for
a smooth (sigmoid-based) approximation to the step function.

\subsection{Warm-up: Non-convex surrogate based on ramp function} \label{ssec:ramp}
The main point of this subsection is to illustrate the key properties of a
non-convex surrogate loss that allows us to argue that the stationary points
of this loss are close to the true halfspace $\vec w^{\ast}$.  To this end,
we consider the \emph{ramp function} $\ramp(t)$ with parameter $\sigma>0$ --
a piecewise linear approximation to the step function. The ramp function and
its derivative are defined as follows:
\begin{align}
  \label{ramp:eq:ramp_definition}
  \ramp(t) =
  \begin{cases}
    0,  &\text{for }t< -\sigma/2  \\
    \frac{t}{\sigma} +  \frac{1}{2}, & |t| \leq \sigma/2 \\
    1,  &t > \sigma/2 \\
  \end{cases}
  \qquad \text{and} \qquad
  \ramp'(t) =
  \frac{1}{\sigma} \1\{|t| \leq \sigma/2\}\;.
\end{align}
Observe that as $\sigma$ approaches $0$, $\ramp$ approaches the step
function.  Using the ramp function, we define the following non-convex
surrogate loss function
\begin{equation} \label{ramp:eq:surr}
  \SLS(\vec w)  = \E_{(\vec x, y) \sim \D} \left[ \ramp\left(-y \frac{\dotp{\vec w}{\vec x}}{\snorm{2}{\vec w}} \right)\right] \;.
\end{equation}
\usetikzlibrary{patterns}

\begin{figure}
  \begin{minipage}[t]{0.47\textwidth}

	\centering
	\begin{tikzpicture}[scale=1]

\draw[fill=red, opacity=0.3] (0,0) -- (0,3) arc (90:48:3.0cm) -- cycle;
	\draw[fill=red, opacity=0.3] (0,0) -- (0,-3) arc (270:228:3.0cm) -- cycle;
	\draw[fill=blue, opacity=0.4] (0,0) -- (0,-3) arc (270:408:3.0cm) -- cycle;
	\draw[fill=blue, opacity=0.4] (0,0) -- (0,3) arc (90:228:3.0cm) -- cycle;

	\draw[black, dashed, thick](-2.75,1) -- (2.75,1);
	\draw[black, dashed, thick](-2.75,-1) -- (2.75,-1);
	\coordinate (start) at (0.5,0);
	\coordinate (center) at (0,0);
	\coordinate (end) at (0.5,0.5);
	\draw[->] (-3.2,0) -- (3.2,0) node[anchor=north west,black] {$\vec e_1 $};
	\draw[->] (0,-3.2) -- (0,3.2) node[anchor=south east] {$\vec e_2$};
	\draw[thick,->] (0,0) -- (-0.7,0.7) node[anchor= south east,below,left=0.1mm] {$\wstar$};
	\draw[blue] (-2,-2.22) -- (2,2.22);
	\draw[thick ,->] (0,0) -- (0,1) node[right=2mm,below] {$\bw$};

\pic [draw, <->,
	angle radius=8mm, angle eccentricity=1.2,
	"$\theta$"] {angle = start--center--end};
\end{tikzpicture}
		\caption{The sign of the two-dimensional gradient projection. \label{fig:2d_gradient_sign}}
	\label{fig:2d_gradient_sign2}
      \end{minipage}
	\begin{minipage}[t]{0.54\textwidth}

	\centering
	\begin{tikzpicture}[scale=1]
\coordinate (start) at (0.5,0);
	\coordinate (center) at (0,0);
	\coordinate (end) at (0.5,0.5);

\draw[black,dashed, thick](-3.75,1) -- (3.75,1);
	\draw[black,dashed, thick](-3.75,-1) -- (3.75,-1);
\draw[fill=red, opacity=0.3] (0,0) rectangle (0.9 ,1);
	\draw[fill=red, opacity=0.3] (0,0) rectangle (-0.9,-1);
\draw[fill=blue, opacity=0.4] (-1.5,-1) rectangle (-3.5,1);
	\draw[fill=blue, opacity=0.4](1.5,1) rectangle (3.5,-1);
\draw[<->] (1.3,0) -- (1.3,1) node[black,right=5mm,below=1mm] {$\sigma/2$};
\draw[->] (-3.8,0) -- (3.8,0) node[anchor=north west,black] {$\vec e_1$};
	\draw[->] (0,-3.2) -- (0,3.2) node[anchor=south east] {$\vec e_2$};
	\draw[thick,->] (0,0) -- (-0.7,0.7) node[anchor= south east,below,left=0.1mm] {$\wstar$};
\draw[blue] (-2,-2.22) -- (2,2.22);
\draw[thick ,->] (0,0) -- (0,1) node[right=2mm,below] {$\bw$};
	\pic [draw, <->,
	angle radius=8mm, angle eccentricity=1.2,
	"$\theta$"] {angle = start--center--end};
	\node[] at (1.5,-0.3) {$R/2$};
	\draw[] (1.5,-0.1)--(1.5,0.1);
		\draw[] (3.5,-0.1)--(3.5,0.1);
			\draw[] (-1.5,-0.1)--(-1.5,0.1);
		\draw[] (-3.5,-0.1)--(-3.5,0.1);
	\node[] at (3.5,-0.3) {$R$};
		\node[] at (-1.5,-0.3) {$R/2$};
	\node[] at (-3.5,-0.3) {$R$};
\end{tikzpicture}
	\caption{The ``good" (blue) and ``bad" (red) regions inside a band of size $\sigma$.}
	\label{fig:integration_regions}
      \end{minipage}
\end{figure}
To simplify notation, we will denote the inner product of $\vec x$
and the normalized $\vec w$ as $\ell(\bw, \bx) =
\frac{\dotp{\bw}{\bx}}{\snorm{2}{\bw}}$.  By a straightforward calculation
(see Appendix~\ref{app:gradient_formula}), we get that the gradient of the
objective $\SLS(\bw)$ is
\begin{align}
  \nabla_{\vec w} \SLS(\vec w)
& = \E_{\bx \sim \D_{\bx}} \left[- \ramp'\left( \ell(\bw, \bx)  \right) \ \nabla_{\bw} \ell(\bw, \bx) \ (1- 2 \eta(\bx))\ \sign(\dotp{\wstar}{\vec x}) \right] \;.
\end{align}
Our goal is to establish a claim along the following lines.
\begin{claim}[Informal] \label{clm:non_convex_gradient}
For every $\eps > 0$  there exists $\sigma>0$ such that for any vector
$\wh{\vec w}$ with $\theta(\vec w^{\ast}, \widehat{\vec w}) > \eps$, it holds
$\snorm{2}{\nabla_w \SLS(\wh{\bw})} \geq \eps$.
\end{claim}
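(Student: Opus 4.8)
The plan is to reduce the $d$-dimensional claim to a one-dimensional integral over the band $\{|\ell(\wh\bw,\bx)|\le\sigma/2\}$ on which $\ramp'$ is supported, and then to play the two halves of $(U,R)$-boundedness against each other. Set $\theta := \theta(\wstar,\wh\bw)$; if $\theta=0$ there is nothing to prove, so assume $\theta\in(0,\pi)$, and normalize $\snorm{2}{\wh\bw}=1$ (the loss is scale-invariant in $\bw$). First I would fix the two-dimensional subspace $V := \mathrm{span}(\wh\bw,\wstar)$ and an orthonormal basis $(\e_1,\e_2)$ of $V$ with $\e_2=\wh\bw$ and, after flipping the sign of $\e_1$ if necessary, $\wstar = \sin\theta\,\e_1 + \cos\theta\,\e_2$. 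Writing $x_1 := \dotp{\e_1}{\bx}$ and $x_2 := \dotp{\e_2}{\bx}$ for $\bx\in\R^d$, one has $\ell(\wh\bw,\bx)=x_2$, $\dotp{\wstar}{\bx}=\sin\theta\,x_1+\cos\theta\,x_2$, and the $\e_1$-component of $\nabla_\bw\ell(\wh\bw,\bx)=\proj_{\wh\bw^{\perp}}(\bx)$ is $x_1$; plugging these into the gradient formula (recall $\ramp'(t)=\sigma^{-1}\1\{|t|\le\sigma/2\}$) and keeping only the $\e_1$-component gives
\[
\snorm{2}{\nabla_\bw\SLS(\wh\bw)} \;\ge\; \big|\dotp{\e_1}{\nabla_\bw\SLS(\wh\bw)}\big| \;=\; \frac{1}{\sigma}\,\Big|\,\E_{\bx\sim\D_\bx}\big[\1\{|x_2|\le\sigma/2\}\;x_1\;(1-2\eta(\bx))\;\sign(\sin\theta\,x_1+\cos\theta\,x_2)\big]\,\Big| \;=:\; \frac{|I|}{\sigma}\,.
\]
Apart from the benign factor $1-2\eta(\bx)\in[1-2\eta,1]$, the integrand depends on $\bx$ only through $(x_1,x_2)$, whose joint density is the two-dimensional projection $\gamma_V$; this is exactly where the distributional hypotheses will be used.

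The geometric heart of the argument is the elementary observation that inside the band one has $\sign(\sin\theta\,x_1+\cos\theta\,x_2)=\sign(x_1)$ as soon as $|x_1|>\tau := \tfrac{\sigma}{2}|\cot\theta|$ (then $|\sin\theta\,x_1|\ge|\cos\theta\,x_2|$), so that $x_1\,\sign(\dotp{\wstar}{\bx})=|x_1|\ge 0$ there. I would therefore split the band into a ``good'' strip $G := \{|x_2|\le\sigma/2,\ R/2\le|x_1|\le 3R/4\}$, a ``bad core'' $B := \{|x_2|\le\sigma/2,\ |x_1|<\tau\}$, and a remainder, and take $\sigma$ small enough that $\tau<R/2$ and $\sigma\le R$; the latter guarantees $G\subseteq\{(x_1,x_2): x_1^2+x_2^2\le R^2\}$, and the former puts both $G$ and the remainder inside $\{|x_1|>\tau\}$, so they contribute to $I$ with the same nonnegative sign. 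Discarding the remainder, on $G$ I would use $1-2\eta(\bx)\ge 1-2\eta$ and $\gamma_V\ge 1/U$ (anti-anti-concentration) to lower bound the $G$-contribution by $(1-2\eta)\tfrac1U\int_G|x_1|\,dx_1\,dx_2=(1-2\eta)\tfrac{1}{U}\cdot\tfrac{5R^2}{16}\,\sigma$, and on $B$ I would use $|1-2\eta(\bx)|\le 1$ and $\gamma_V\le U$ (anti-concentration) to bound the $B$-contribution below by $-U\int_B|x_1|\,dx_1\,dx_2=-U\tau^2\sigma$. This yields
\[
\snorm{2}{\nabla_\bw\SLS(\wh\bw)} \;\ge\; \frac{I}{\sigma} \;\ge\; (1-2\eta)\,\frac{5R^2}{16U}\;-\;U\,\tau^2\,,
\]
a bound that becomes useful once $\sigma$ (hence $\tau$) is chosen small.

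To close the argument I would pick $\sigma$. Since $\theta\in(\eps,\pi-\eps)$ forces $|\cot\theta|\le\cot\eps\le 1/\eps$, choosing $\sigma$ to be a sufficiently small absolute-constant multiple of $\min\{\,\eps R,\ \eps R\sqrt{1-2\eta}/U,\ R\,\}$ makes $\tau=\tfrac\sigma2|\cot\theta|\le\tfrac{\sigma}{2\eps}$ small enough that both $\tau<R/2$ and $U\tau^2\le\tfrac12(1-2\eta)\tfrac{5R^2}{16U}$ hold, and then
\[
\snorm{2}{\nabla_\bw\SLS(\wh\bw)} \;\ge\; \frac{5\,(1-2\eta)\,R^2}{32\,U}\;>\;0\,,
\]
which is the stated bound (with ``$\ge\eps$'' understood as the honest quantity $\Omega((1-2\eta)R^2/U)$, independent of $\sigma$) for every $\wh\bw$ with $\theta(\wstar,\wh\bw)\in(\eps,\pi-\eps)$. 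The remaining sliver $\theta(\wstar,\wh\bw)\in[\pi-\eps,\pi]$ I would handle using $\ramp(t)+\ramp(-t)=1$, which gives $\SLS(-\bw)=1-\SLS(\bw)$ and hence $\nabla_\bw\SLS(-\bw)=\nabla_\bw\SLS(\bw)$: if $\snorm{2}{\nabla_\bw\SLS(\wh\bw)}$ is small, so is $\snorm{2}{\nabla_\bw\SLS(-\wh\bw)}$, and applying the previous case to both $\wh\bw$ and $-\wh\bw$ (angles $\theta$ and $\pi-\theta$ to $\wstar$) forces $\theta(\wstar,\wh\bw)\le\eps$ or $\theta(\wstar,\wh\bw)\ge\pi-\eps$; equivalently, $\wh\bw$ is $\eps$-close to $\wstar$ or to $-\wstar$, and the learner just keeps whichever of $h_{\wh\bw},h_{-\wh\bw}$ has smaller empirical error.

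The step I expect to be the main obstacle is the control of the bad core $B$ together with the calibration of $\sigma$: one has to make $\sigma$ small enough (relative to $R$, $\eps$, $1/U$ and $1-2\eta$) that the possible cancellation from the thin region $\{|x_1|<\tau\}$, where $\sign(\dotp{\wstar}{\bx})$ is not pinned down, is dominated by the clean, definitely-signed contribution of the strip $G$ located at distance $\Theta(R)$ from the origin. This is precisely the place where \emph{both} halves of $(U,R)$-boundedness are used --- a density lower bound near the origin to make $G$ count and a density upper bound to make $B$ negligible --- consistent with the remark after the claim that this approach cannot work distribution-independently~\cite{DGT19}.
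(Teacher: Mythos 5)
Your proposal is correct and follows essentially the same route as the paper's formal version of this claim (Lemma~\ref{lem:structural_massart-ramp}): project onto $\mathrm{span}(\wstar,\wh{\bw})$, observe that inside the band $\{|\bx_2|\le\sigma/2\}$ the sign of $\dotp{\wstar}{\bx}$ is pinned once $|\bx_1|\gtrsim\sigma|\cot\theta|$, lower-bound the sign-definite contribution of a strip at distance $\Theta(R)$ via the density lower bound $1/U$, upper-bound the ambiguous core via the density upper bound $U$, and choose $\sigma\propto (R/U)\sqrt{1-2\eta}\,\sin\theta$. The only (cosmetic) differences are your bookkeeping of the regions (discarding a sign-definite remainder instead of writing good $=$ all $-$ bad), slightly different constants, and your explicit symmetry remark $\nabla_{\bw}\SLS(-\bw)=\nabla_{\bw}\SLS(\bw)$ handling the antipodal sliver, which the paper addresses by restricting the formal lemma to $\theta\in(\theta,\pi-\theta)$ and keeping $\pm\bw^{(i)}$ in the candidate list.
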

The contrapositive of this claim implies that for every $\eps$ we can tune
the parameter $\sigma$ so that all points with sufficiently small gradient have
angle at most $\eps$ with the optimal halfspace $\vec w^{\ast}$.   This is
a parameter distance guarantee that is easy to translate to
missclafication error (using Claim~\ref{lem:angle_zero_one}).

Since it suffices to prove that the norm of the gradient of any ``bad"
hypothesis (i.e., one whose angle with the optimal is greater than $\eps$) is large,
we can restrict our attention to any subspace and bound from below the norm of the
gradient in that subspace.  Let $V = \mrm{span}(\bw^{\ast}, \bw)$ and note that
the inner products $\dotp{\bw^{\ast}}{\bx}$, $\dotp{\bw}{\bx}$
do not change after the projection to this subspace.
Write any point $\bx \in \R^d$ as $\vec v + \vec u$, where $\vec v \in V$ is the
projection of $\vec x$ onto $V$ and $\vec u \in V^{\perp}$.  Now, for each
$\vec v$, we pick the worst-case $\vec u$ (the one that minimizes the norm of
the gradient).  We set $\eta_V(\vec v) = \eta_V(\vec v +  \vec u(\vec v))$.
Since $\eta(\vec x) \leq \eta$ for all $\vec x$, we also have that $\eta_V(\vec
v) \leq \eta$, for all $\vec v \in V$.  Therefore, we have
\begin{equation*}
  \snorm{2}{\nabla_{\bw}\SLS(\bw)} \geq \snorm{2}{\proj_{V} \nabla_{\bw} \SLS(\bw)}
  = \snorm{2}{\E_{(\bx, y) \sim \D_V} [ \nabla_{\bw} \SLS(\bw) ]}\;.
\end{equation*}
Without loss of generality, assume that $\wh{\bw} = \vec e_2$ and $\wstar =
-\sin\theta \cdot \vec e_1 + \cos \theta \cdot \vec e_2$, see
Figure~\ref{fig:2d_gradient_sign}.
To simplify notation, in what follows we denote by $\eta(\bx)$ the function
$\eta_V(\bx)$ after the projection.
Observe that the gradient is always perpendicular to $\wh{\bw} = \vec e_2$
(this is also clear from the fact that $\SLS(\bw)$ does not depend on the length of $\bw$).
Therefore,
\begin{equation} \label{ramp:eq:2d_gradient_absolute_value}
  \snorm{2}{\E_{(\bx, y) \sim \D_V} [\nabla_{\bw} \SLS(\wh{\bw})]} =
  | \dotp{\nabla_{\bw} \SLS(\wh{\bw})}{\vec e_1} | =
  \left| \E_{\bx \sim (\D_{\bx})_V}[ - \ramp'(\bx_2) (1- 2 \eta(\bx)) \sgn(\dotp{\wstar}{\bx}) \bx_1 ] \right| \;.
\end{equation}
We partition $\R^2$ in two regions according to the sign of
the pointwise gradient $$g(\vec x) = - \ramp'(\bx_2) (1- 2 \eta(\bx)) \sgn(\dotp{\wstar}{\bx}) \bx_1 \;.$$
Let
$$G = \{\vec x \in \R^2: g(\vec x) \geq 0\} =
\{\vec x \in \R^2 : \bx_1 \sgn(\dotp{\vec w^{\ast}}{\vec x}) \leq 0 \} \;,$$
and let $G^c$ be its complement.  See Figure~\ref{fig:2d_gradient_sign} for an illustration.
To give some intuition behind this definition, imagine we were using SGD
in this $2$-dimensional setting, and at some step $t$
we have $\vec w^{(t)} = \widehat{\vec w} = \vec e_2$.
We draw a sample $(\vec x, y)$ from the distribution $\D$ and update the hypothesis.
Then the expected update (with respect to the label $y$) is
$$
\vec w^{(t+1)} = \vec e_2 -  \dotp{g(\vec x)}{\vec e_1} \vec e_1 \;.
$$
Therefore, assuming that $\theta(\vec w^{\ast}, \vec e_2) \in (0, \pi/2)$, the
``good" points (region $G$) are those that decrease the $\vec e_1$ component
(i.e., rotate the hypothesis counter-clockwise) and the ``bad" points (region
$G^c$) are those that try to increase the $\vec e_1$ component (rotate the
hypothesis clockwise); see Figure~\ref{fig:2d_gradient_sign}.

We are now ready
to explain the main idea behind the choice of the ramp function $\ramp(t)$.
Recall that the derivative of the ramp function is the (scaled) indicator of a
band of size $\sigma/2$ around $0$, $r'_\sigma(t) = (1/\sigma) \1\{|t| \leq
\sigma/2\}$.  Therefore, the gradient of this loss function amplifies the
contribution of points close to the current guess $\vec w$, that is,
points inside the band $\1\{|\bx_2| \leq \sigma/2\}$ in our $2$-dimensional example of
Figure~\ref{fig:2d_gradient_sign}.  Assume for simplicity that the marginal
distribution $\mathcal{D}_{\bx}$ is the uniform distribution on the $2$-dimensional
unit ball.  Then, no matter how small the angle of the true halfspace and our
guess $\theta(\vec w^{\ast}, \widehat{\vec w})$ is, we can always pick $\sigma$
sufficiently small so that the contribution of the ``good" points (blue region in
Figure~\ref{fig:2d_gradient_sign}) is much larger than the contribution of the
``bad" points (red region).

Crucial in this argument is the fact that the distribution is ``well-behaved'' in the
sense that the probability of every region is related to its area.
This is where Definition~\ref{def:bounds} comes into play.  To bound from below the
contribution of ``good" points, we require the anti-anti-concentration property of
the distribution, namely a lower bound on the density function (in some bounded
radius). To bound from above the contribution of ``bad" points, we need the
anti-concentration property of Definition~\ref{def:bounds}, namely that the
density is bounded from above (recall that we wanted the
probability of a region to be related to its area).

We are now ready to show that our ramp-based non-convex loss works
for all distributions satisfying Definition~\ref{def:bounds}.
In the following lemma, we prove that we can tune
the parameter $\sigma$ so that the stationary points of our non-convex
loss are close to $\vec w^{\ast}$.  The following lemma is a precise version
of our initial informal goal, Claim~\ref{clm:non_convex_gradient}.

\begin{lemma}[Stationary points of $\SLS$ suffice]\label{lem:structural_massart-ramp}
Let $\D_{\bx}$ be a $(U, R)$-bounded distribution on $\R^d$,
and $\eta<1/2$ be an upper bound on the Massart noise rate.
Fix any $\theta \in (0, \pi/2)$. Let $\wstar \in \Sp^{d-1}$ be the
normal vector to the optimal halfspace and $\wh{\bw} \in \Sp^{d-1}$ be
such that $\theta(\wh{\bw}, \wstar) \in (\theta, \pi -\theta)$.  For
$\sigma \leq \frac{R}{2U} \sqrt{1-2\eta} \sin \theta$,
we have that
$\snorm{2} {\nabla_{\bw} \SLS(\wh{\bw})} \geq (1/8)  R^2 (1-2\eta)/U$.
\end{lemma}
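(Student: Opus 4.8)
The plan is to reduce the problem to a two-dimensional estimate, exactly along the lines of the discussion preceding the lemma, and then carry out the area/density bookkeeping concretely. First I would project onto $V = \mathrm{span}(\wstar, \wh{\bw})$ and set up coordinates so that $\wh{\bw} = \vec e_2$ and $\wstar = -\sin\theta\, \vec e_1 + \cos\theta\, \vec e_2$; by the argument already given in the excerpt, $\snorm{2}{\nabla_{\bw}\SLS(\wh{\bw})}$ is at least the absolute value of the scalar in \eqref{ramp:eq:2d_gradient_absolute_value}, i.e. $\bigl| \E_{\bx \sim (\D_{\bx})_V}[- \ramp'(\bx_2)(1-2\eta(\bx))\sgn(\dotp{\wstar}{\bx})\bx_1]\bigr|$. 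Since $\ramp'(\bx_2) = (1/\sigma)\1\{|\bx_2|\le\sigma/2\}$, the expectation only sees the horizontal band $\{|\bx_2|\le \sigma/2\}$. Inside this band the integrand has a definite sign pattern: writing $g(\bx) = -\ramp'(\bx_2)(1-2\eta(\bx))\sgn(\dotp{\wstar}{\bx})\bx_1$, the ``good'' region $G = \{\bx_1\sgn(\dotp{\wstar}{\bx})\le 0\}$ contributes nonnegatively and its complement $G^c$ nonpositively, so I would bound the whole expectation below by (contribution of $G$) minus (absolute contribution of $G^c$).

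The key geometric observation is that $G^c \cap \{|\bx_2|\le\sigma/2\}$ is a pair of thin triangular slivers near the line $\dotp{\wstar}{\bx}=0$: a point in the band is in $G^c$ only when $\bx_1$ and $\dotp{\wstar}{\bx} = -\sin\theta\,\bx_1 + \cos\theta\,\bx_2$ have the same sign, which forces $|\bx_1| \lesssim (\cos\theta/\sin\theta)|\bx_2| \le (\cos\theta/\sin\theta)(\sigma/2)$; since $|\bx_1|\le\sigma/(2\sin\theta)$ and the $|\ramp'\cdot\bx_1|\le (1/\sigma)\cdot \sigma/(2\sin\theta)$, while the band has height $\sigma$ and the density is $\le U$, the absolute $G^c$-contribution is $O(U\sigma/\sin\theta)$ — small. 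For the ``good'' region I would restrict to the sub-rectangle $\{R/2 \le |\bx_1| \le R,\ |\bx_2|\le\sigma/2\}$ that lies entirely in $G$ (this is exactly the blue region in Figure~\ref{fig:integration_regions}); there $|\bx_1|\ge R/2$, $\ramp'(\bx_2)=1/\sigma$, the factor $(1-2\eta(\bx))\ge 1-2\eta$, and the density is $\ge 1/U$ on the disk of radius $R$ (and $\sigma/2 \le R$ so the band stays inside that disk, using $\sigma \le R$), so this region contributes at least $(1/\sigma)\cdot(R/2)\cdot(1-2\eta)\cdot(1/U)\cdot(\text{area } \gtrsim R\sigma) \gtrsim R^2(1-2\eta)/U$ after constants. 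Subtracting the $G^c$ term, and choosing $\sigma \le \frac{R}{2U}\sqrt{1-2\eta}\sin\theta$ precisely so that $O(U\sigma/\sin\theta)$ is dominated by a constant fraction of $R^2(1-2\eta)/U$ (the $\sqrt{1-2\eta}$ is there so the two $(1-2\eta)$-powers balance), gives the stated bound $(1/8)R^2(1-2\eta)/U$.

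The main obstacle is the careful accounting of constants in the two-dimensional integral: one has to (i) verify that the chosen rectangle $\{R/2\le|\bx_1|\le R\}\times\{|\bx_2|\le\sigma/2\}$ is genuinely a subset of $G$ for the whole range $\theta\in(\theta,\pi-\theta)$ — in particular handling angles near $\pi/2$ and symmetric cases where $\wh{\bw}$ points on the other side — and (ii) get a clean enough upper bound on the $G^c$ slivers to show the constant $1/8$ survives after subtraction. A minor subtlety is that the lemma only assumes $(U,R)$-bounded (no concentration/tail bound), so all estimates must stay inside the radius-$R$ disk where both the lower and upper density bounds are available; this is why the band height $\sigma$ must be controlled by $R$, which is consistent with the hypothesis $\sigma \le \frac{R}{2U}\sqrt{1-2\eta}\sin\theta \le R/2$. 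Once these are pinned down, the rest is the straightforward triangle-inequality split and plugging in the chosen $\sigma$.
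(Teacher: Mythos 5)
Your plan follows the paper's proof essentially step for step: the same projection onto $V=\mathrm{span}(\wstar,\wh{\bw})$, the same good/bad decomposition of the band $\{|\bx_2|\le\sigma/2\}$, a rectangle inside the radius-$R$ disk for the good term, and thin slivers near $\dotp{\wstar}{\bx}=0$ for the bad term. However, two quantitative slips need fixing before the stated conclusion actually follows. First, your bound $O(U\sigma/\sin\theta)$ for the $G^c$ contribution is off by a factor of $\sigma/\sin\theta$, and taken literally it does not suffice: plugging $\sigma \le \frac{R}{2U}\sqrt{1-2\eta}\sin\theta$ into $U\sigma/\sin\theta$ gives $\frac{R}{2}\sqrt{1-2\eta}$, which is in general much larger than the good-term contribution $\Theta\left(R^2(1-2\eta)/U\right)$ (e.g.\ when $U$ is large or $\eta$ is close to $1/2$), so the subtraction would not go through. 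The correct accounting, which your own geometric description already contains, is the pointwise factor $\ramp'(\bx_2)\,|\bx_1|\le\frac{1}{\sigma}\cdot\frac{\sigma}{2\tan\theta}$ multiplied by the probability of the two slivers, at most $U\cdot\frac{\sigma^2}{2\tan\theta}$, giving a bad term of order $\frac{U\sigma^2}{4\tan^2\theta}$ --- quadratic in $\sigma/\tan\theta$; only with this quadratic dependence does the choice of $\sigma$ (with its $\sqrt{1-2\eta}$ and $1/U$ factors) make the bad term at most $\frac{R^2(1-2\eta)}{16U}$, which is exactly how the paper obtains the final constant $1/8$. Second, your good rectangle $\{R/2\le|\bx_1|\le R,\ |\bx_2|\le\sigma/2\}$ is not contained in the disk of radius $R$ (points with $|\bx_1|$ near $R$ and $\bx_2\neq 0$ have norm exceeding $R$), so the anti-anti-concentration lower bound $1/U$ is not available on all of it; shrinking it, e.g.\ to $R/(2\sqrt{2})\le|\bx_1|\le R/\sqrt{2}$ as the paper does, fixes this at the cost of constants and still yields a good-term lower bound of $\frac{R^2(1-2\eta)}{4U}$. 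With these two corrections your argument coincides with the paper's proof.
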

\begin{proof}
We will continue using the notation introduced in the above discussion.  We
let $V$ be the $2$-dimensional subspace spanned by $\vec w^{\ast}$ and
$\widehat{\vec w}$. To simplify notation, we again assume without loss of
generality that $\vec w^{\ast} = -\sin \theta\ \vec e_1 + \cos \theta\ \vec e_2$ and
$\widehat{\vec w} = \vec e_2$, see Figure~\ref{fig:2d_gradient_sign}.  Using
the triangle inequality and Equation~\eqref{ramp:eq:2d_gradient_absolute_value},
we obtain
\begin{align}
  \snorm{2}{\E_{(\bx,y) \sim \D_V} [\nabla_{\bw} \SLS(\wh{\bw})]}
&\geq \E_{\bx \sim (\D_{\bx})_V} \left[\ramp'(\bx_2) (1- 2 \eta(\bx)) |\bx_1| \1_G(\bx)\right]
- \E_{\bx \sim(\D_{\bx})_V} \left[\ramp'(\bx_2) (1- 2 \eta(\bx)) |\bx_1| \1_{\bx \in G^c} \right] \nonumber \\
\nonumber
\\
&= \E_{\bx \sim (\D_{\bx})_V} \left[\ramp'(\bx_2) (1- 2 \eta(\bx)) |\bx_1|\right]
- 2 \E_{\bx \sim(\D_{\bx})_V} \left[\ramp'(\bx_2) (1- 2 \eta(\bx)) |\bx_1|  \1_{\bx \in G^c} \right] \label{ramp:eq:2d_gradient_difference_lower_bound} \;.
\end{align}
We now bound from below the first term, as follows
\begin{align}
  \label{ramp:eq:good_lower_bound}
\E_{\bx \sim (\D_{\bx})_V} \left[\ramp'(\bx_2) (1- 2 \eta(\bx)) |\bx_1| \right] \nonumber
&\geq (1-2\eta) \E_{\bx \sim (\D_{\bx})_V} \left[ \frac{\1\{|\bx_2| \leq \sigma/2\}}{\sigma} |\bx_1|
\right] \nonumber \\
  &\geq \frac{(1-2\eta) R}{2\sqrt{2} \sigma} \E_{\bx \sim(\D_{\bx})_V}
  \left[
  \1\left\{ |\bx_2| \leq \frac{\sigma}{2},\ \frac{R}{2\sqrt{2}} \leq |\bx_1| \leq \frac{R}{\sqrt{2}}\right\}
  \right] \nonumber \\
  &\geq \frac{(1-2\eta) R}{2\sqrt{2} \sigma} \cdot \frac{ R \sigma}{\sqrt{2} U} = \frac{ R^2}{4 U}(1-2 \eta),
\end{align}
where the first inequality follows from the upper bound on the noise $\eta(\vec x) \leq \eta$,
and the third one from the lower bound on the $2$-dimensional
density function $1/U$ inside the ball $\snorm{2}{\vec x} \leq R$ (see Definition~\ref{def:bounds}).

We next bound from above the second term of Equation~\eqref{ramp:eq:2d_gradient_difference_lower_bound},
that is the contribution of ``bad" points. We have that
\begin{align*}
\E_{\bx \sim (\D_{\bx})_V} \left[\ramp'(\bx_2) (1- 2 \eta(\bx)) |\bx_1| \1_{\bx \in G^c} \right] \nonumber
&\leq \E_{\bx \sim (\D_{\bx})_V} \left[\frac{\1\{|\bx_2| \leq \sigma/2\}}{\sigma} |\bx_1| \1\{\vec x \in G^c \} \right] \nonumber\\
&\leq \frac{1}{\sigma} \E_{\bx \sim(\D_{\bx})_V} \left[|\bx_1| \1\{\vec x \in G^c, |\bx_2| \leq \sigma/2 \} \right]\;.
\end{align*}
We now observe that for $\theta \in (0, \pi/2]$ it holds
\begin{align*}
G^c = \{\vec x : \bx_1 \sgn(\dotp{\vec w^{\ast}}{\vec x}) > 0 \}
= \{\vec x : \bx_1 \sgn(- \bx_1 \sin \theta + \bx_2 \cos \theta ) > 0 \}
\subseteq \{\vec x : \bx_1 \bx_2 > 0 \}\;.
\end{align*}
On the other hand, if $\theta \in (\pi/2, \pi]$ we have
$ G^c \subseteq \{\vec x: \bx_1 \bx_2 < 0\} $.
Assume first that $\theta \in (0, \pi/2]$ (the same argument works also for the other case).
Then the intersection of the band $\{\vec x: |\bx_2| \leq \sigma/2\}$ and $G^c$
is contained in the union of two rectangles
 ${\cal R} = \{\vec x : |\bx_1| \leq \sigma/(2 \tan \theta),\ |\bx_2| \leq \sigma/2,\ \bx_1 \bx_2 > 0 \}$,
 see Figure~\ref{fig:integration_regions}.  Therefore,
\begin{align}
\E_{\bx \sim (\D_{\bx})_V} \left[\ramp'(\bx_2) (1- 2 \eta(\bx)) |\bx_1| \1_{\bx \in G^c} \right] \nonumber
 &\leq \frac{1}{\sigma} \frac{\sigma}{2 \tan \theta} \E_{\bx \sim (\D_{\bx})_V} \left[
 \1\{\vec x \in G^c, |\bx_1| \leq \frac{\sigma}{2 \tan\theta}, |\bx_2| \leq \frac{\sigma}{2} \} \right] \nonumber \\
 &\leq \frac{1}{\sigma} \frac{\sigma}{2 \tan \theta} \E_{\bx \sim (\D_{\bx})_V} \left[
  \1\{\vec x \in R\} \right] \leq \frac{1}{2 \tan \theta} \cdot \frac{ U \sigma^2 }{ 2 \tan \theta}
\nonumber
\\
&\leq \frac{ R^2}{16 U}(1- 2\eta)
  \label{ramp:eq:bad_upper_bound}\;,
\end{align}
where for the last inequality we used our assumption that
$\sigma \leq \frac{ R}{2 U} \sqrt{1-2 \eta} \sin \theta$.
To finish the proof, we substitute the bounds \eqref{ramp:eq:good_lower_bound},
\eqref{ramp:eq:bad_upper_bound} in Equation \eqref{ramp:eq:2d_gradient_difference_lower_bound}.
\end{proof}

\subsection{Main structural result: Non-convex surrogate via smooth approximation}\label{ssec:smooth-sigmoid}
In this subsection, we prove the structural result that is required
for the correctness of our efficient gradient-descent algorithm in the following section.
We consider the non-convex surrogate loss
\begin{equation} \label{eq:surr}
\SL(\vec w)  = \E_{(\vec x, y) \sim \D} \left[ S_{\sigma}\left(-y \frac{\dotp{\vec w}{\vec x}}{\snorm{2}{\vec w}} \right)\right],
\end{equation}
where $S_\sigma(t) = \frac{1}{1 + e^{-t/\sigma}}$ is the logistic function
with growth rate $1/\sigma$.  That is, we have replaced the step function by
the sigmoid.  As $\sigma \to 0$, $S_\sigma(t)$ approaches the step function.
Formally, we prove the following:

\begin{lemma}[Stationary points of $\SL$ suffice]\label{lem:structural_massart}
Let $\D_{\bx}$ be a $(U, R)$-bounded distribution on $\R^d$,
and $\eta<1/2$ be an upper bound on the Massart noise rate.
Fix any $\theta \in (0, \pi/2)$. Let $\wstar \in \Sp^{d-1}$ be the
normal vector to the optimal halfspace and $\wh{\bw} \in \Sp^{d-1}$ be
such that $\theta(\wh{\bw}, \wstar) \in (\theta, \pi -\theta)$.  For
$\sigma \leq \frac{R}{8 U} \sqrt{1-2\eta} \sin \theta$,
we have that
$\snorm{2} {\nabla_{\bw} \SL(\wh{\bw})} \geq \frac{1}{32U} R^2  (1-2\eta)$.
\end{lemma}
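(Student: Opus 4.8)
The plan is to follow the proof of Lemma~\ref{lem:structural_massart-ramp} essentially line for line, with the ramp derivative $\ramp'$ replaced by the logistic derivative $S_\sigma'(t)=\frac{1}{\sigma}\cdot\frac{e^{-t/\sigma}}{(1+e^{-t/\sigma})^2}$, and to handle the one genuinely new feature: $S_\sigma'$ has unbounded support (whereas $\ramp'$ is supported on $[-\sigma/2,\sigma/2]$), so the ``bad'' region can no longer be confined to a small rectangle.

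First I would record the gradient. Since $S_\sigma$ is smooth and $S_\sigma'$ is even (from $S_\sigma(t)+S_\sigma(-t)=1$), the computation of Appendix~\ref{app:gradient_formula} carries over verbatim and gives $\nabla_{\bw}\SL(\bw)=\E_{\bx\sim\D_{\bx}}\big[-S_\sigma'(\ell(\bw,\bx))\,\nabla_{\bw}\ell(\bw,\bx)\,(1-2\eta(\bx))\,\sgn(\dotp{\wstar}{\bx})\big]$, i.e.\ exactly the ramp formula with $\ramp'\mapsto S_\sigma'$. I would then repeat the two-dimensional reduction unchanged: project onto $V=\mathrm{span}(\wstar,\wh{\bw})$ (which only decreases $\snorm{2}{\nabla_{\bw}\SL(\wh\bw)}$), choose coordinates with $\wh{\bw}=\e_2$ and $\wstar=-\sin\theta\,\e_1+\cos\theta\,\e_2$, and use that the gradient is orthogonal to $\wh{\bw}$ to obtain $\snorm{2}{\nabla_{\bw}\SL(\wh{\bw})}=\big|\E_{\bx\sim(\D_{\bx})_V}[S_\sigma'(\bx_2)(1-2\eta(\bx))\sgn(\dotp{\wstar}{\bx})\bx_1]\big|$. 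Splitting $\R^2$ into $G=\{\bx:\bx_1\sgn(\dotp{\wstar}{\bx})\le 0\}$ and $G^c$ and using $\1_G=1-\1_{G^c}$, this is at least $\E[S_\sigma'(\bx_2)(1-2\eta(\bx))|\bx_1|]-2\,\E_{G^c}[S_\sigma'(\bx_2)(1-2\eta(\bx))|\bx_1|]$.

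The first (``good'') term is handled as in~\eqref{ramp:eq:good_lower_bound}: I would use $1-2\eta(\bx)\ge 1-2\eta$, restrict to $\{|\bx_2|\le c\sigma\}\cap\{|\bx_1|\le R/\sqrt{2}\}$ — a region that lies inside the disk $\snorm{2}{\bx}\le R$ (so anti-anti-concentration gives $\gamma_V\ge 1/U$) once $\sigma$ is below the threshold of the lemma — and use that $S_\sigma'(\bx_2)\ge c'/\sigma$ there for absolute constants $c,c'>0$. Carrying out the integral, the powers of $\sigma$ cancel and this term is at least a fixed constant times $(1-2\eta)R^2/U$, independently of $\sigma$. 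For the second (``bad'') term, the new point is that on $G^c$ one always has $|\bx_1|\le |\bx_2|/\tan\theta$ (as in the ramp proof; the case of angle in $(\pi/2,\pi)$ is symmetric). Integrating $\bx_1$ out first over an interval of length $\le 2|\bx_2|/\tan\theta$ and bounding $\gamma_V\le U$ gives $\E_{G^c}[S_\sigma'(\bx_2)(1-2\eta(\bx))|\bx_1|]\le \frac{U}{\tan^2\theta}\int_{\R}S_\sigma'(t)\,t^2\,\d t$; since $S_\sigma'$ is the density of a logistic variable with scale $\sigma$, this integral equals $\frac{\pi^2}{3}\sigma^2$ (or one may just use $S_\sigma'(t)\le \frac1\sigma e^{-|t|/\sigma}$ to get $O(\sigma^2)$), so the bad term is $O(U\sigma^2/\tan^2\theta)\le O(U\sigma^2/\sin^2\theta)$. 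Plugging in $\sigma\le \frac{R}{8U}\sqrt{1-2\eta}\,\sin\theta$ turns the bad term into a small multiple of $(1-2\eta)R^2/U$, and a short calculation then yields $\snorm{2}{\nabla_{\bw}\SL(\wh{\bw})}\ge \frac{1}{32U}R^2(1-2\eta)$.

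The main obstacle is precisely this last comparison: dropping the compact support of $\ramp'$ removes the clean ``bad region $=$ rectangle of area $O(\sigma^2/\tan\theta)$'' estimate of Lemma~\ref{lem:structural_massart-ramp}. The resolution is to exploit that on $G^c$ the coordinate $|\bx_1|$ is controlled \emph{linearly} by $|\bx_2|$, together with the exponential tail decay of $S_\sigma'$, so that the second-moment integral $\int S_\sigma'(t)\,t^2\,\d t$ converges and is $\Theta(\sigma^2)$; the price is a worse absolute constant, which is exactly why the admissible range of $\sigma$ tightens from $\frac{R}{2U}\sqrt{1-2\eta}\sin\theta$ in the ramp case to $\frac{R}{8U}\sqrt{1-2\eta}\sin\theta$ here. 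Some care is needed in the constant bookkeeping — in particular in choosing the good-region band so that it stays inside the radius-$R$ disk and in keeping the good term strictly above twice the bad term — but no idea beyond the two above is required.
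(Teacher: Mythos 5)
Your proposal is correct and takes essentially the same approach as the paper: reduce to the two-dimensional span, split into the good and bad angular regions, lower-bound the good contribution via anti-anti-concentration, and control the bad wedge via anti-concentration together with the $\Theta(\sigma^2)$ second moment coming from the exponential decay of $S_\sigma'$, which is precisely how the paper handles the loss of compact support (it performs the same wedge computation in polar coordinates over all radii, and lower-bounds the good term over a quarter-disk rather than a thin band). The constants in your version work out with ample slack, so the claimed bound $\frac{1}{32U}R^2(1-2\eta)$ follows under $\sigma\le \frac{R}{8U}\sqrt{1-2\eta}\sin\theta$ just as in the paper.
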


The proof of Lemma~\ref{lem:structural_massart} is conceptually similar to the proof of Lemma~\ref{lem:structural_massart-ramp}
for the ramp function given in the previous subsection. The main difference is that, in the smoothed setting,
it is harder to bound the contribution of each region of Figure~\ref{fig:2d_gradient_sign}
and the calculations end-up being more technical.

\begin{proof}[Proof of Lemma~\ref{lem:structural_massart}]

\begin{figure}
	\centering
	\begin{tikzpicture}[scale=1]
\coordinate (start) at (0.5,0);
	\coordinate (center) at (0,0);
	\coordinate (end) at (0.5,0.5);

\draw[fill=blue, opacity=0.4] (0,0) -- (-2,0) arc (180:90:2.0cm) -- cycle;
\draw[fill=blue, opacity=0.4] (0,0) -- (2,0) arc (360:270:2.0cm) -- cycle;
\draw[fill=red, opacity=0.4] (0,0) -- (2,2.22) arc (48:90:3.0cm) -- cycle;
\draw[fill=red, opacity=0.4] (0,0) -- (-2,-2.22) arc (228:270:3.0cm) -- cycle;
\draw[->] (-4,0) -- (4,0) node[anchor=north west,black] {$\vec e_1$};
	\draw[->] (0,-3.2) -- (0,3.2) node[anchor=south east] {$\vec e_2$};
	\draw[thick,->] (0,0) -- (-0.7,0.7) node[anchor= south east] {$\wstar$};
	\draw[black] (-2,-2.22) -- (2,2.22);
	\draw[thick ,->] (0,0) -- (0,1) node[right] {$\bw$};
	\pic [draw, <->,
	angle radius=8mm, angle eccentricity=1.2,
	"$\theta$"] {angle = start--center--end};
\node[] at (-2,-0.3) {$R$};
\node[] at (2,0.3) {$R$};
\end{tikzpicture}
	\caption{The ``good" (blue) and ``bad" (red) regions.}
	\label{fig:integration_regions_sigm}
\end{figure}
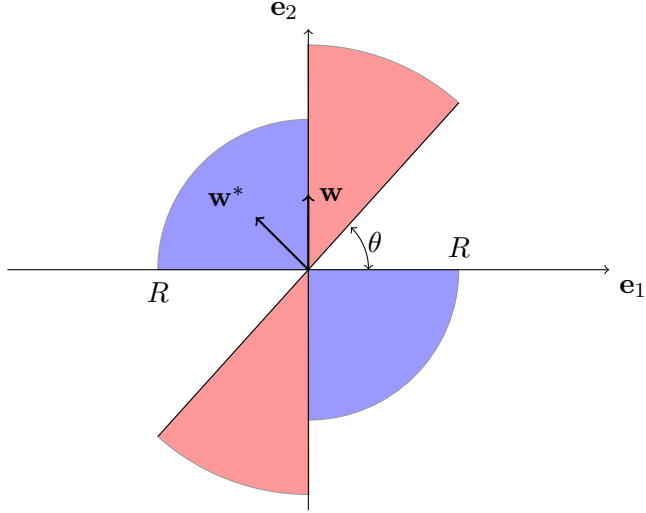

Without loss of generality, we will assume that $\wh{\bw} = \vec e_2$ and
$\wstar = -\sin\theta \cdot \vec e_1 + \cos \theta \cdot \vec e_2$.
Using the same argument as in the proof of Section~\ref{ssec:ramp},
we let $V = \mrm{span}(\bw^{\ast}, \bw)$ and have
\begin{equation} \label{eq:2d_gradient_absolute_value}
\snorm{2}{\E_{(\bx, y) \sim \D_V} [\nabla_{\bw} \SL(\wh{\bw})]} =
\left| \E_{\bx \sim (\D_{\bx})_V}[ - S'_{\sigma}(|\bx_2|) (1- 2 \eta(\bx)) \sgn(\dotp{\wstar}{\bx}) \bx_1 ] \right| \;.
\end{equation}
We partition $\R^2$ in two regions according to the sign of the gradient.
Let
$$G = \{(\bx_1, \bx_2) \in \R^2 : \bx_1 \sgn(\dotp{\vec w^{\ast}}{\vec x}) > 0 \} \;,$$
and let $G^c$ be its complement. Using the triangle inequality
and Equation \eqref{eq:2d_gradient_absolute_value}, we obtain
\begin{align}
\snorm{2}{\E_{(\bx,y) \sim \D_V} [\nabla_{\bw} \SL(\wh{\bw})]}
&\geq \E_{\bx \sim (\D_{\bx})_V} \left[S'_{\sigma}(|\bx_2|) (1- 2 \eta(\bx)) |\bx_1| \1_G(\bx)\right]
   - \E_{\bx \sim (\D_{\bx})_V} \left[S'_{\sigma}(|\bx_2|) (1- 2 \eta(\bx)) |\bx_1| \1_{G^c}(\bx)\right] \nonumber \\
&\geq \frac{(1-2\eta)}{4} \E_{\bx \sim (\D_{\bx})_V} \left[ \frac{e^{-|\bx_2|/\sigma}}{\sigma} \cdot |\bx_1| \cdot \1_G(\bx)\right]
   - \E_{\bx \sim (\D_{\bx})_V} \left[\frac{e^{-|\bx_2|/\sigma}}{\sigma} \cdot |\bx_1| \cdot \1_{G^c}(\bx)\right] \;,
   \label{eq:2d_gradient_difference_lower_bound}
 \end{align}
where we used the upper bound on the Massart noise rate $\eta(\bx) \leq \eta$
and the fact that the sigmoid $S_{\sigma}(|t|)^2$ is bounded from above by $1$
and bounded from below by $1/4$.

We can now bound each term separately
using the fact that the distribution is $(U,R)$-bounded.
Assume first that $\theta(\wstar, \wh{\bw}) = \theta \in (0, \pi/2)$.
Then we can express the region in polar coordinates as
$G = \{ (r, \phi) : \phi \in (0, \theta) \cup (\pi/2, \pi +\theta) \cup (3 \pi/2, 2 \pi) \}$.
See Figure~\ref{fig:integration_regions_sigm} for an illustration.

We denote by $\gamma(x, y)$ the density of the $2$-dimensional
projection on $V$ of the marginal distribution $\D_{\bx}$.  Since the integral
is non-negative, we can bound from below the contribution of region $G$ on
the gradient by integrating over $\phi \in (\pi/2, \pi)$. Specifically, we have:
\begin{align}
\E_{\bx \sim(\D_{\bx})_V} \left[\frac{e^{-|\bx_2|/\sigma}}{\sigma}\ |\bx_1|\ \1_G(\bx)\right]
&\geq \int_{0}^{\infty} \int_{\pi/2}^{\pi} \gamma(r \cos\phi,r \sin\phi)r^2 |\cos\phi|  \frac{\sigMO{r \sin{\phi}}}{\sigma} \d\phi \d r \nonumber\\
&= \int_{0}^{\infty} \int_{0}^{\pi/2} \gamma(r \cos\phi,r \sin\phi)r^2 \cos\phi  \frac{\sigMO{r \sin{\phi}}}{\sigma} \d\phi \d r \nonumber\\
&\geq \frac{1}{U} \int_{0}^{R} r^2 \d r \int_0^{\pi/2} \cos\phi \frac{\sigMO{R \sin{\phi}}}{\sigma} \d\phi \nonumber   \\
&= \frac{1}{3 U} R^2 \left(1-e^{-\frac{R}{\sigma }}\right) \geq \frac{1}{3 U} R^2 \left(1-e^{-8}\right) \label{eq:sec3Good}\;,
\end{align}
where for the second inequality we used the lower bound $1/U$ on the density
function $\gamma(x,y)$ (see Definition~\ref{def:bounds}) and for the last inequality we used that $\sigma \leq \frac{R}{8}$.

We next bound from above the contribution of the gradient in region $G^c$.
Note that $G^c = \{(r, \phi): \phi \in B_\theta = (\pi/2-\theta, \pi/2) \cup (3 \pi/2 -\theta, 3 \pi/2)\}$.
Hence, we can write:
\begin{align}
\E_{\bx \sim(\D_{\bx})_V} \left[\frac{e^{-|\bx_2|/\sigma}}{\sigma}\ |\bx_1|\ \1_{G^c}(\bx)\right]
&=  \int_{0}^{\infty} \int_{\phi \in B_{\theta}} \density(r \cos \phi,r \sin \phi)r^2 \cos{\phi} \sigMO{r \sin{\phi}} \d\phi \d r\nonumber \\
&\leq  \frac{2U}{\sigma}\int_{0}^{\infty}\int_{\theta}^{\pi/2} r^2 \cos{\phi} \sigMO{r \sin{\phi}} \d\phi \d r \nonumber\\
&= \frac{2U \sigma ^2 \cos^2 \theta}{\sin^2 \theta} \nonumber\\
&= \frac{(1-2\eta) R^2 }{32 U} \cos^2 \theta \label{eq:sec3Bad} \;,
\end{align}
where the inequality follows from the upper bound $U$ on the density
$\gamma(x,y)$ (see Definition~\ref{def:bounds}) and the last inequality follows from our assumption that
$\sigma \leq \frac{R}{8 U}\sqrt{ 1-2\eta}  \sin(\theta)$.
Combining \eqref{eq:sec3Good} and \eqref{eq:sec3Bad}, we have
\begin{align}
\E_{\bx \sim(\D_{\bx})_V} \left[\frac{e^{-|\bx_2|/\sigma}}{\sigma}\ |\bx_1|\ \1_{G^c}(\bx)\right] &\leq \frac{(1-2\eta) R^2  }{32 U}\cos^2 \theta\nonumber \\&\leq \frac{(1-2\eta) R^2  \left(1-e^{-8}\right)}{24 U}\nonumber\\ &\leq \frac{1}{2}\frac{(1-2\eta)}{4}\E_{\bx \sim(\D_{\bx})_V} \left[\frac{e^{-|\bx_2|/\sigma}}{\sigma}\ |\bx_1|\ \1_G(\bx)\right]\;, \label{eq:bound_good_bads}
\end{align}
where the second inequality follows from $\cos^2\theta\leq 1$ and $\frac{1}{32}\leq \frac{ \left(1-e^{-8}\right)}{24}$.
Using \eqref{eq:bound_good_bads} in \eqref{eq:2d_gradient_difference_lower_bound}, we obtain
$$\snorm{2}{\E_{(\bx, y) \sim \D_V}\left[ \nabla_{\bw} \SL(\wh{\bw}) \right]}
\geq \frac{1}{2}\frac{(1-2\eta)}{4}\E_{\bx \sim(\D_{\bx})_V} \left[\frac{e^{-|\bx_2|/\sigma}}{\sigma}\ |\bx_1|\ \1_G(\bx)\right]  \geq
\frac{1}{32 U} (1- 2 \eta)\ R^2 \;.
$$
To conclude  the proof, notice that the case where $\theta(\wh{\bw}, \wstar) \in (\pi/2,\pi-\theta )$ follows similarly.
Finally, in the case where $\theta=\pi/2$, the region $G^c$ is empty,
and we again get the same lower bound on the gradient.
This completes the proof of Lemma~\ref{lem:structural_massart}.
\end{proof}

 \section{Main Algorihtmic Result: Proof of Theorem~\ref{thm:main-inf}} \label{sec:alg}

In this section, we prove our main algorithmic result, which we restate below:

\begin{theorem}\label{thm:main_massart}
Let $\D$ be a distribution on $\R^d \times \{-1, +1\}$ such that the marginal
$\D_{\bx}$ on $\R^d$ is $(U, R, t())$-bounded.  Let $\eta<1/2$ be
an upper bound on the Massart noise rate. Algorithm~\ref{alg:full} has the following
performance guarantee: It draws
$m = O\left( (U/R)^{12} \cdot t^8(\eps/2) / (1-2 \eta)^{10}\right) \cdot O(d/\eps^4) $
labeled examples from $\D$,
uses $O(m)$ gradient evaluations, and outputs a hypothesis vector $\bar{\vec w}$ that satisfies
$\err_{0-1}^{\D_{\bx}}(h_{\bar{\bw}},f)\leq \eps$ with probability at least $1-\delta$, where $f$ is the target
halfspace.
\end{theorem}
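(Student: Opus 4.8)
The plan is to combine the structural Lemma~\ref{lem:structural_massart} with an off‑the‑shelf convergence guarantee for projected SGD on a smooth (non‑convex) objective, run directly on the population loss $\SL$, and then resolve the sign ambiguity with a cheap validation step. First I would fix the parameters. Replacing $\eps$ by $\min\{\eps,(1-2\eta)/4\}$ if necessary (this only strengthens the guarantee and keeps the final sample bound in the stated form), set the target angle $\theta \eqdef \eps/(2U\,t(\eps/2)^2)$, the smoothing parameter $\sigma \eqdef \frac{R}{8U}\sqrt{1-2\eta}\,\sin\theta$, and the gradient threshold $\rho \eqdef \frac{1}{64U}R^2(1-2\eta)$; since $\theta<\pi/2$, Lemma~\ref{lem:structural_massart} applies. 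Its contrapositive gives: every unit vector $\wh\bw$ with $\snorm{2}{\nabla_\bw\SL(\wh\bw)}\le\rho$ satisfies $\theta(\wh\bw,\wstar)\le\theta$ or $\theta(\wh\bw,\wstar)\ge\pi-\theta$, so one of $\pm\wh\bw$ lies within angle $\theta$ of $\wstar$, and Claim~\ref{lem:angle_zero_one} then yields $\err_{0-1}^{\D_\bx}(h_{\pm\wh\bw},f)\le U\,t(\eps/2)^2\,\theta+\eps/2\le\eps$.

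Next I would analyze the SGD phase. The algorithm runs projected SGD on $\SL$ over the unit ball, started from an arbitrary $\bw^{(0)}\in\Sp^{d-1}$, drawing one fresh labeled example per step to form the unbiased stochastic gradient $g^{(t)}=-y\,S_\sigma'(-y\dotp{\bw^{(t)}}{\bx})(\bx-\dotp{\bw^{(t)}}{\bx}\,\bw^{(t)})$. Two structural facts drive everything. (i) Because $\SL$ is scale‑invariant, $g^{(t)}\perp\bw^{(t)}$ whenever $\snorm{2}{\bw^{(t)}}=1$, so $\snorm{2}{\bw^{(t)}-\alpha g^{(t)}}\ge 1$ and the ball projection renormalizes it back to unit norm; hence \emph{all} iterates stay on $\Sp^{d-1}$, where $\SL$ is perfectly well‑behaved and its geodesic smoothness constant is $L=O(1/\sigma^2)$, with \emph{no} $d$‑dependence since only the $2$‑dimensional marginals of $\D_\bx$ enter (by isotropy $\E[\dotp{\bv}{\bx}^2]=1$ and $\E|\dotp{\bw}{\bx}|=O(1)$, together with $|S_\sigma'|\le 1/(4\sigma)$, $|S_\sigma''|=O(1/\sigma^2)$). (ii) The stochastic gradient has bounded second moment $\E[\snorm{2}{g^{(t)}}^2]\le \E[\snorm{2}{\bx}^2]/(16\sigma^2)=d/(16\sigma^2)$ by isotropy; truncating to samples with $\snorm{2}{\bx}\le t(\rho')$ for a suitably small $\rho'$ additionally gives an almost‑sure bound, convenient for the high‑probability version. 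Feeding these into the standard bound for SGD to an approximate stationary point of an $L$‑smooth non‑convex function with $\SL\in[0,1]$ (average squared gradient norm $\lesssim \sqrt{L\cdot\E[\snorm{2}{g}^2]/T}$), after $T=O\!\left(L\cdot\E[\snorm{2}{g}^2]/\rho^4\right)=O\!\left(d/(\sigma^4\rho^4)\right)$ steps a uniformly random iterate has $\snorm{2}{\nabla_\bw\SL}\le\rho$ in expectation; a short validation pass (estimating $\snorm{2}{\nabla_\bw\SL}$ at $O(\log(1/\delta))$ candidate iterates with fresh batches) then identifies such an iterate with probability $\ge 1-\delta/2$. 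Substituting $\sigma=\Theta\!\left(R\eps\sqrt{1-2\eta}/(U^2 t(\eps/2)^2)\right)$ and $\rho=\Theta\!\left(R^2(1-2\eta)/U\right)$ makes $T$ of the claimed form $\poly(U/R,\,t(\eps/2),\,1/(1-2\eta))\cdot O(d/\eps^4)$.

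Finally I would resolve the sign. From the above, one of $h_{\wh\bw},h_{-\wh\bw}$ has $\D_\bx$‑error $\le\eps$ against $f$; to decide which, estimate the \emph{noisy} misclassification errors $\err_{0-1}^{\D}(h_{\pm\wh\bw})=\pr_{(\bx,y)\sim\D}[h_{\pm\wh\bw}(\bx)\ne y]$ with $O(\log(1/\delta)/(1-2\eta)^2)$ extra samples and output the smaller one. This is correct because $\err_{0-1}^{\D}(f)=\E[\eta(\bx)]\le\eta$ while $\err_{0-1}^{\D}(-f)=\E[1-\eta(\bx)]\ge 1-\eta$, so the $\eps$‑close choice has noisy error $\le\eta+\eps$ and the other $\ge 1-\eta-\eps$, a gap $1-2\eta-2\eps>0$ by our normalization $\eps\le(1-2\eta)/4$; Hoeffding makes the comparison correct with probability $\ge 1-\delta/2$. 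A union bound over the two failure events finishes the proof, with total sample count and gradient‑evaluation count $O(m)$ as stated.

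The main obstacle is the SGD step: one cannot simply invoke a black‑box smoothness constant for $\SL$, since scale‑invariance makes it singular near the origin. The resolution — that ball‑projected iterates started on $\Sp^{d-1}$ never leave it, so only the tangential/geodesic smoothness matters, and that this constant is $O(1/\sigma^2)$ \emph{independent of $d$} while $d$ enters only through the gradient variance — is exactly what produces $d/\eps^4$ rather than $d^2/\eps^4$. Once this is in place, the high‑probability amplification and the precise exponents in $U/R$, $t(\eps/2)$, $1/(1-2\eta)$ are routine bookkeeping.
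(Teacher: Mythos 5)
Your proposal follows the same architecture as the paper's proof: reduce misclassification error to an angle bound via Claim~\ref{lem:angle_zero_one}, invoke the contrapositive of Lemma~\ref{lem:structural_massart} with $\sigma=\Theta\bigl((R/U)\sqrt{1-2\eta}\,\theta\bigr)$ and gradient threshold $\rho=\Theta\bigl(R^2(1-2\eta)/U\bigr)$, and run sphere-projected SGD using the smoothness and variance bounds of Lemma~\ref{lem:sigmoid_smoothness} to reach a $\rho$-stationary point in $T=O\bigl(d/(\sigma^4\rho^4)\bigr)$ steps; your observation that the iterates never leave $\W$ because the stochastic gradient is orthogonal to the current iterate is exactly the argument inside Lemma~\ref{lem:PSGD}. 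You diverge only in the endgame. The paper never certifies which iterate is stationary: the second part of Lemma~\ref{lem:PSGD} gives a high-probability guarantee for a single run via a bounded-differences (McDiarmid) argument using $\snorm{2}{\E[\nabla g]}^2\le O(1/\sigma^2)$, and Algorithm~\ref{alg:full} then places all $\pm\bw^{(i)}$ in a list and selects the empirical minimizer of the noisy $0$-$1$ error, converting excess noisy error into error against $f$ via Fact~\ref{fact:massart_error}; this is why the paper's target angle carries an extra $(1-2\eta)$ factor and its $T$ extra powers of $1/(1-2\eta)$. Your route --- certify stationarity by estimating gradient norms with fresh batches, then break the $\pm$ symmetry by comparing the two noisy errors --- is also viable and in fact saves some of those $(1-2\eta)$ powers, at the price of the normalization $\eps\le(1-2\eta)/4$, which slightly inflates the $t(\eps/2)$ argument (harmless for, e.g., log-concave marginals but not literally the stated bound when $\eps\gg 1-2\eta$).

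One step needs repair as written: from a single SGD run, the in-expectation bound on the average squared gradient norm yields, via Markov, only a \emph{constant} probability that the run contains any $\rho$-stationary iterate; validating $O(\log(1/\delta))$ candidates drawn from that same run cannot push the success probability past the run's own failure probability. You need either $O(\log(1/\delta))$ independent restarts followed by your validation step (total sample count unchanged up to the $\log(1/\delta)$ factor already present in the paper's $T$), or the concentration argument the paper uses in Lemma~\ref{lem:PSGD}. You should also budget the gradient-norm validation honestly: with per-sample second moment $O(d/\sigma^2)$, estimating $\snorm{2}{\nabla_{\bw}\SL(\bw)}$ to accuracy $\rho$ at each candidate costs roughly $O\bigl((d/(\sigma^2\rho^2))\log(1/\delta)\bigr)$ samples (e.g., by median-of-means, since only second moments are available), which is lower order than $T$, so the stated $m$ is unaffected. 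With these fixes your argument is complete.
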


Our algorithm proceeds by Projected Stochastic Gradient Descent (PSGD),
with projection on the $\ell_2$-unit sphere,
to find an approximate stationary point of our non-convex surrogate loss.
Since $\SL(\bw)$ is non-smooth for vectors $\bw$ close to $\vec 0$,
at each step, we project the update on the unit sphere
to avoid the region where the smoothness parameter is high.

Recall that a function $f:\R^d \mapsto \R$ is called $L$-Lipschitz
if there is a parameter $L>0$ such that
$\snorm{2}{f(\vec x)-f(\vec y)} \leq L \snorm{2}{\vec x-\vec y}$ for all $\vec x, \vec y \in \R^d$.
We will make use of the following folklore result on the convergence of projected SGD
(for completeness, we provide a proof in Appendix~\ref{app:sgd}).

\begin{algorithm}[H]
  \caption{PSGD for $f(\bw) = \E_{\vec z\sim \D}[g(\vec z, \bw)]$}
  \label{alg:PSGD}
  \begin{algorithmic}[1]
    \Procedure{psgd}{$f, T, \beta$}
    \Comment{$f(\bw) = \E_{\vec z \sim \D}[g(\vec z, \bw)]$: loss,
    $T$: number of steps, $\beta$: step size.}
    \State ${\vec w}^{(0)} \gets \vec e_1$
     \State \textbf{for} $i = 1, \dots, T$ \textbf{do}
    	\State \qquad Sample $\vec z^{(i)}$ from $\D$.
    	\State\qquad  ${\vec v}^{(i)} \gets {\vec w}^{(i-1)} - \beta \nabla_{\vec w} g({\vec z}^{(i)}, {\vec w}^{(i-1)})$
    	\State \qquad ${\vec w}^{(i)} \gets {\vec v}^{(i)}/\snorm{2}{{\vec v}^{(i)}}$
    \State  \textbf{return} $({\vec w}^{(1)}, \ldots, {\vec w}^{(T)})$.
    \EndProcedure
  \end{algorithmic}
\end{algorithm}

 \begin{replemma}{lem:PSGD}[PSGD]
Let $f : \R^d \mapsto \R$ with $f( \bw) = \E_{\vec z\sim \D}[g(\vec z, \vec w)]$
for some function $g:\R^d \times \R^d \mapsto \R$. Assume that for any
vector $\vec w$, $g(\cdot,\bw)$ is positive homogeneous of degree-$0$ on $\bw$.
Let $\mathcal{W} = \{\vec w \in \R^d: \snorm{2}{\vec w} \geq 1 \}$ and assume
that $f, g$ are continuously differentiable functions on $\mathcal{W}$.
Moreover, assume that $|f(\vec w)| \leq R$, $\nabla_{\bw} f(\vec w)$ is
$L$-Lipschitz on $\mathcal{W}$, $\E_{\vec z \sim \D} \left[\snorm{2}{\nabla_{\vec w} g(\vec z, \vec w)}^2\right] \leq B$
for all $\vec w \in \mathcal{W}$.  After $T$ iterations the output
$({\vec w}^{(1)}, \ldots, {\vec w}^{(T)})$ of Algorithm~\ref{alg:PSGD} satisfies
\[\E_{{\vec z}^{(1)}, \ldots, {\vec z}^{(T)} \sim \D}
   \lp[ \frac{1}{T} \sum_{i=1}^T \snorm{2}{\nabla_{\vec w} f({\vec w}^{(i)})}^2 \rp] \leq \sqrt{\frac{LBR}{2 T}} \;.
\]
If, additionally, $\snorm{2}{\E_{\vec z \sim \D} [\nabla_{\vec w} g(\vec z, \vec w)]}^2 \leq C$
for all $\vec w \in \mathcal{W}$, we have that with
$T = (2 L B R+ 8 C^2 \log(1/\delta))/\eps^4$ it holds
$\min_{i=1,\ldots, T} \snorm{2}{\nabla_{\vec w} f(\vec w^{(i)})} \leq \eps,$
with probability at least $1-\delta$.
\end{replemma}

We will require the following lemma establishing the smoothness properties of our loss
(based on $S_{\sigma}$).  See Appendix~\ref{app:smooth-lemma} for the proof.

\begin{replemma}{lem:sigmoid_smoothness}[Sigmoid Smoothness]
Let $S_\sigma(t) = 1/(1 + e^{-t/\sigma})$ and $\SL(\bw) = \E_{(\vec x,y) \sim
\D} \left[S_{\sigma}\left(-y \frac{\dotp{\vec w}{\vec x}}{\snorm{2}{\vec w}}
\right)\right]$, for $\bw \in \W$, where $\W=\{\bw\in \R^d:
\snorm{2}{\bw}\geq 1\}$.  We have that $\SL(\vec w)$ is continuously
differentiable in $\W$, $|\SL(\vec w)| \leq 1$, $\E_{(\bx, y) \sim
\D}[\snorm{2}{\nabla_{{\vec w}} S_{\sigma}({\vec w}, \vec x, y)}^2] \leq
4d/\sigma^2$, $\snorm{2}{\nabla_{\vec w} \SL(\vec w)}^2 \leq 4/\sigma^2$, and
$\nabla_{\vec w} \SL(\vec w)$ is $(6/\sigma + 12 /\sigma^2)$-Lipschitz.
\end{replemma}

Putting everything together gives Theorem~\ref{thm:main_massart}.

  \begin{algorithm}[H]
    \caption{Learning Halfspaces with Massart Noise}
    \label{alg:full}
    \begin{algorithmic}[1]
      \Procedure{Alg}{$\eps$, $U$, $R$, $t(\cdot)$} \State $C_1\gets \Theta(U^{12}/R^{12})$.
      \State $C_2\gets \Theta(R/U^2)$.
      \State $T \gets C_1\ d\ t(\eps/2)^8/(\eps^4 (1- 2 \eta)^{10})\log(1/\delta)$. \Comment{number of steps}
      \State $\beta \gets C_2^2\ d (1-2\eta)^3 \eps^2 /(t(\eps/2)^4T^{1/2})$. \Comment{step size}
      \State $\sigma \gets  C_2\ \sqrt{1- 2 \eta}\ \eps/t^2(\eps/2)$.
\State $({\vec w}^{(0)}, {\vec w}^{(1)},\ldots, {\vec w}^{(T)}) \gets \mathrm{PSGD}(f, T, \beta)$.
      \Comment{ $f(\vec w) =  \E_{(\bx, y) \sim \D}\left[S_{\sigma}\Big(-y \frac{\dotp{\vec w}{\vec x}}{\snorm{2}{\vec w}} \Big)\right]$, \eqref{alg:PSGD}}
      \State $L \gets\{\pm {\vec w}^{(i)}\}_{i\in [T]}$.\label{alg:list_vec} \Comment{$L$: List of candidate vectors}
      \State Draw $N=O(\log(T/\delta)/(\eps^2 (1-2 \eta)^2))$ samples from $\D$.
      \State $\bar{\vec w} \gets \argmin_{\vec w \in L}
      \sum_{j=1}^N \1\{\sgn(\dotp{\vec w}{\vec x^{(j)}}) \neq y^{(j)}\}$.
      \State \textbf{return} $\bar{\vec w}$.
      \EndProcedure
    \end{algorithmic}
  \end{algorithm}

\begin{proof}[Proof of Theorem~\ref{thm:main_massart}]
By Claim~\ref{lem:angle_zero_one}, to guarantee
$\err_{0-1}^{\D_{\bx}}(h_{\bar{\vec w}},f) \leq \eps$
it suffices to show that the angle $\theta(\bar{\vec w}, \vec \wstar) \leq O(\eps (1-2\eta)/(U t^2(\eps/2))) =: \theta_0$.
Using (the contrapositive of) Lemma~\ref{lem:structural_massart},
we get that with $\sigma = \Theta((R/U) \sqrt{1-2 \eta} \theta_0)$,
if the norm squared of the gradient of some vector $\vec w \in \mathbb{S}^{d-1}$
is smaller than $\rho=O((R^2/U) (1-2 \eta))$, then $\vec w$
is close to either $\vec \wstar$ or $-\vec \wstar$ -- that is, $\theta(\vec w, \vec \wstar) \leq \theta_0$ --
or $\theta(\vec w, -\vec \wstar) \leq \theta_0$.
Therefore, it suffices to find a point $\vec w$ with gradient
$\snorm{2}{\nabla_{\vec w} \SL(\vec w)} \leq \rho$.

From Lemma~\ref{lem:sigmoid_smoothness}, we have that our PSGD objective function
is bounded above by $1$,
$$\E\left[\snorm{2}{\nabla_{\vec w}  S_{\sigma}\Big(-y \frac{\dotp{\vec w}{\vec x}}{\snorm{2}{\vec w}} \Big)}^2\right] \leq O(d/\sigma^2) \;,$$
$\snorm{2}{\E\left[\nabla_{\vec w}  S_{\sigma}\Big(-y \frac{\dotp{\vec w}{\vec x}}{\snorm{2}{\vec w}} \Big)\right] }^2\leq O(1/\sigma^2)$,
and that the gradient is Lipschitz with Lipschitz constant $O(1/\sigma^2)$.
Using these bounds for the parameters of Lemma~\ref{lem:PSGD},
we get that with $T = O(\frac{d}{\sigma^4 \rho^4} \log(1/\delta))$ steps,
the norm of the gradient of some vector in the list $({\vec w}^{(0)}, \ldots, {\vec w}^{(T)})$
will be at most $ \rho$ with probability $1-\delta$.  Therefore,
the required number of iterations is
$$ T = O\left(d\frac{  U^{12}}{R^{12} }  \frac{ t^8(\eps/2) \log(1/\delta)}{\eps^4 (1-2\eta)^{10}}\right)\;.$$
We know that one of the hypotheses in the list $L$ (line \ref{alg:list_vec} of Algorithm~\ref{alg:full})
is $\eps$-close to the true $\vec \wstar$. We can evaluate all of them on a
small number of samples from the distribution $\D$ to obtain the best among them.
From Hoeffding's inequality, it follows that $N = O(\log (T/\delta)/(\eps^2 (1-2 \eta)^2))$
samples are sufficient to guarantee that the excess error
of the chosen hypothesis is at most $\eps (1- 2\eta)$.
Using Fact~\ref{fact:massart_error}, for any hypotheses $h$, and the target concept $f$, it holds
$\err_{0-1}^{\D_{\bx}}(h,f) \leq \frac {1}{(1- 2\eta)} (\err_{0-1}^{\D}(h)-\opt),$
and therefore the chosen hypothesis achieves error at most $2\eps$.
This completes the proof of Theorem~\ref{thm:main_massart}.
\end{proof}
 \section{Strong Massart Noise Model}\label{sec:generalized}

We start by defining the strong Massart noise model, which was considered in~\cite{ZhangLC17} 
for the special case of the uniform distribution on the sphere. The main difference 
with the standard Massart noise model is that, in the strong model, the noise rate is allowed
to approach arbitrarily close to $1/2$ for points that lie very close to the separating hyperplane.

\begin{definition}[Distribution-specific PAC Learning with Strong Massart Noise] \label{def:massart-learning-strong}
Let $\mathcal{C}$ be the concept class of halfspaces over $X= \R^d$, 
$\mathcal{F}$ be a {\em known family} of structured distributions on $X$, $0< c \leq 1$
and $0< \eps <1$.
Let $f(\vec x)=\sign(\dotp{\vec \wstar}{\vec x})$ be an unknown target function in $\mathcal{C}$.
A {\em noisy example oracle}, $\mathrm{EX}^{\mathrm{SMas}}(f, \mathcal{F}, \eta)$,
works as follows: Each time $\mathrm{EX}^{\mathrm{SMas}}(f, \mathcal{F}, \eta)$ is invoked,
it returns a labeled example $(\bx, y)$, such that: (a) $\bx \sim \D_{\bx}$, where $\D_{\bx}$ is a fixed
distribution in $\mathcal{F}$, and (b) $y = f(\bx)$ with probability $1-\eta(\bx)$ 
and $y = -f(\bx)$ with probability $\eta(\bx)$, for an {\em unknown} parameter  $\eta(\vec x)\leq
\max\{1/2 - c |\dotp{\vec \wstar}{\vec x}|, 0 \}$. Let $\D$ denote the joint distribution on $(\bx, y)$ generated by the above oracle.
A learning algorithm is given i.i.d. samples from $\D$ and its goal is to output a hypothesis $h$ 
such that with high probability the misclassification error of $h$ is $\eps$-close to 
the misclassfication error of $f$, i.e., it holds  $\err_{0-1}^{\D}(h) \leq \err_{0-1}^{\D}(f)+ \eps$.
\end{definition}

\usepgfplotslibrary{fillbetween}
\usetikzlibrary{intersections}
\pgfdeclarelayer{bg}
\pgfsetlayers{bg,main}

The main result of this section is the following theorem:

\begin{theorem}[Learning Halfspaces with Strong Massart Noise]\label{thm:generalized_massart}
Let $\D$ be a distribution on $\R^d \times \{\pm1\}$ such that the marginal
$\D_{\bx}$ on $\R^d$ is $( U, R, t())$-bounded. Let $0< c<1$ be the parameter 
of the strong Massart noise model. Algorithm~\ref{alg:full_gen} has the following performance guarantee:
It draws $ m = O\left((U^{12}/R^{18}) (t^8(\eps/2)/c^6)\right) O(d/\eps^4)$
labeled examples from $\D$, uses $O(m)$ gradient evaluations, and outputs a hypothesis
vector $\bar{\vec w}$ that satisfies $\err_{0-1}^{\D}(h_{\bar{\vec w}}) \leq \err_{0-1}^{\D}(f)+ \eps$
with probability at least $1-\delta$.
\end{theorem}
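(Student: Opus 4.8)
The plan is to mirror the argument for the standard Massart model (Theorem~\ref{thm:main_massart}), adapting only the structural lemma to account for the fact that the noise rate $\eta(\vec x)$ can now approach $1/2$ near the separating hyperplane. The key observation is that in the strong model we have the pointwise guarantee $1 - 2\eta(\vec x) \geq 2c |\dotp{\wstar}{\vec x}|$ (when this is positive), so the factor $(1 - 2\eta(\bx))$ appearing in the gradient expression
$\nabla_{\vec w} \SL(\vec w) = \E_{\bx \sim \D_{\bx}}[-S'_\sigma(\ell(\bw,\bx))\, \nabla_\bw \ell(\bw,\bx)\, (1 - 2\eta(\bx))\, \sgn(\dotp{\wstar}{\bx})]$
is no longer bounded below by a constant, but it is bounded below by something proportional to the distance of $\bx$ to the optimal hyperplane. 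First I would re-derive the analogue of Lemma~\ref{lem:structural_massart} in this setting: projecting to the $2$-dimensional subspace $V = \mathrm{span}(\wstar,\wh{\bw})$ as before, one picks the worst-case $\vec u \in V^\perp$ for each $\vec v \in V$, and uses that $|\dotp{\wstar}{\bx}| = |{-\bx_1 \sin\theta + \bx_2 \cos\theta}|$ within $V$. In the ``good'' region $G$ (away from the $\wstar$-hyperplane), one still gets a lower bound of the form $\Omega(c\, R^3/U)$ on the contribution (one extra factor of $R$ and a factor $c$ from the distance-to-hyperplane weight, losing a factor relative to the $(1-2\eta)$ version). In the ``bad'' region $G^c$ (the thin wedge of angular width $\theta$ around the $\wstar$-hyperplane), the weight $|\dotp{\wstar}{\bx}|$ is small, which actually \emph{helps} bound the contribution from above. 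Tuning $\sigma = \Theta((R/U)\, c\, \theta)$ should yield $\snorm{2}{\nabla_\bw \SL(\wh{\bw})} \geq \Omega(c R^3 / U)$ whenever $\theta(\wh{\bw},\wstar) \in (\theta, \pi - \theta)$.

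Next I would combine this modified structural lemma with the PSGD convergence guarantee (Lemma~\ref{lem:PSGD}) and the smoothness bounds for $\SL$ (Lemma~\ref{lem:sigmoid_smoothness}), exactly as in the proof of Theorem~\ref{thm:main_massart}. Since Lemma~\ref{lem:sigmoid_smoothness} does not depend on the noise model at all, the values $B = O(d/\sigma^2)$, $C = O(1/\sigma^2)$, and $L = O(1/\sigma^2)$ carry over verbatim. Setting the target angle $\theta_0 = \Theta(\eps / (U\, t^2(\eps/2)))$ so that Claim~\ref{lem:angle_zero_one} converts parameter closeness to $\eps$-closeness in $0$-$1$ error, the required gradient threshold is $\rho = \Theta(c R^3/U)$, and plugging $\sigma = \Theta((R/U)\, c\, \theta_0)$ and $\rho$ into $T = O(d \log(1/\delta)/(\sigma^4 \rho^4))$ gives the claimed iteration/sample count $m = O((U^{12}/R^{18})(t^8(\eps/2)/c^6)) \cdot O(d/\eps^4)$ — the extra $R^{-6}$ and the replacement of $(1-2\eta)^{-10}$ by $c^{-6}$ coming from the different powers of $R$ and $c$ in $\sigma$ and $\rho$ relative to the standard-model case. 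Finally, since the strong model only asks for $\err_{0-1}^{\D}(h_{\bar\bw}) \leq \err_{0-1}^{\D}(f) + \eps$ rather than parameter recovery, the hypothesis-selection step at the end of Algorithm~\ref{alg:full_gen} just needs $N = O(\log(T/\delta)/\eps^2)$ fresh samples to pick the empirically best vector from the list $L = \{\pm \bw^{(i)}\}$, via Hoeffding; note that here we do not need the $(1-2\eta)$-factor amplification from Fact~\ref{fact:massart_error}, because we are comparing misclassification errors directly.

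I expect the main obstacle to be the re-derivation of the structural lemma, specifically getting the constants and the exact powers of $R$, $c$, $\sigma$, $\theta$ right in the bad-region bound. In the standard proof, the bad region is the wedge of angular width $\theta$ and the integral over it of $(e^{-r\sin\phi/\sigma}/\sigma)\, r^2 \cos\phi$ produces a $\sigma^2 \cos^2\theta/\sin^2\theta$ term; now there is an additional multiplicative weight $|\dotp{\wstar}{\bx}| = r\,|\sin(\phi - (\pi/2 - \theta))|$ or similar, which within the wedge $\phi \in (\pi/2 - \theta, \pi/2)$ is at most $r \sin\theta$, contributing one more power of $\sigma$ and a $\sin\theta$ that partly cancels. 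One must be careful that this extra weight does not also need to be accounted for in the good-region \emph{lower} bound in a way that kills the argument — but there the weight $|\dotp{\wstar}{\bx}| \geq r |\cos\theta|$ (roughly) for $\phi$ bounded away from the $\wstar$-direction, or one simply integrates over a sub-region where it is $\Omega(R)$, so it only helps. A secondary subtlety is ensuring that when $1 - 2\eta(\bx)$ would be negative (i.e. the $\max$ with $0$ in Definition~\ref{def:massart-learning-strong} is active, meaning $\eta(\bx)$ is just capped at whatever $\leq 1/2$ it is), the sign of the pointwise gradient contribution does not flip adversarially — but since $\eta(\bx) \leq 1/2$ always, the factor $(1 - 2\eta(\bx)) \geq 0$ everywhere, so the good/bad region partition by $\sgn(\bx_1 \sgn(\dotp{\wstar}{\bx}))$ remains valid and the worst-case-over-$V^\perp$ reduction goes through unchanged.
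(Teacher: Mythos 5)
Your overall route is the same as the paper's (project to $V=\mathrm{span}(\wstar,\wh{\bw})$, split into good/bad regions, run PSGD via Lemma~\ref{lem:PSGD} and Lemma~\ref{lem:sigmoid_smoothness}, then hypothesis selection with $N=O(\log(T/\delta)/\eps^2)$ samples), and your treatment of the good region (restrict to a sub-region where $|\dotp{\wstar}{\bx}|=\Omega(R)$, gaining the factor $cR$) matches the paper. The gap is in your bad-region bound and the resulting choice of $\sigma$. You claim that in $G^c$ the weight $|\dotp{\wstar}{\bx}|$ is small ``which actually helps bound the contribution from above,'' contributing an extra power of $\sigma$ and a $\sin\theta$. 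This is not available: the strong Massart condition $\eta(\bx)\leq\max\{1/2-c|\dotp{\wstar}{\bx}|,0\}$ only \emph{upper}-bounds the noise, hence only \emph{lower}-bounds $1-2\eta(\bx)$; the adversary may set $\eta(\bx)=0$ on the bad wedge, so in $G^c$ the only valid bound is $1-2\eta(\bx)\leq 1$, exactly as in the standard Massart case. Consequently the bad-region contribution is $\approx U\sigma^2\cos^2\theta/\sin^2\theta$ with no extra $c$ or $\sin\theta$ factor, and with your choice $\sigma=\Theta((R/U)\,c\,\theta)$ this is $\approx R^2c^2\cos^2\theta/U$, which need not be dominated by the good-region term $\Theta(cR^3/U)$ unless $c=O(R)$ --- and $c\in(0,1)$ can exceed $R$ (e.g.\ $R=1/9$ for isotropic log-concave). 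The paper closes this by taking $\sigma\leq\frac{R}{24U}\sqrt{cR}\sin\theta$ (i.e.\ $\sigma=\Theta((R/U)\sqrt{cR}\,\theta)$), which makes the bad region $\approx cR^3\cos^2\theta/U$ and restores the domination for all $c$.

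There is also an internal inconsistency in your parameter accounting: plugging \emph{your} $\sigma=\Theta((R/U)c\theta_0)$ and $\rho=\Theta(cR^3/U)$ into $T=O\bigl(d\log(1/\delta)/(\sigma^4\rho^4)\bigr)$ with $\theta_0=\Theta(\eps/(Ut^2(\eps/2)))$ yields $T=O\bigl(dU^{12}t^8(\eps/2)\log(1/\delta)/(R^{16}c^{8}\eps^4)\bigr)$, not the stated $U^{12}R^{-18}c^{-6}$ dependence; the latter comes out only with the corrected $\sigma\propto\sqrt{cR}$. Your remaining steps (smoothness constants carrying over, the sign analysis showing $1-2\eta(\bx)\geq 0$ so the region partition is unaffected, and the final selection step not needing the $(1-2\eta)$ amplification of Fact~\ref{fact:massart_error} because the guarantee is stated as excess misclassification error) are all consistent with the paper.
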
   

The proof of Theorem~\ref{thm:generalized_massart} follows along the same lines
as in the previous sections. We show that any stationary point of our non-convex surrogate
suffices and then use projected SGD. 

The main structural result of this section generalizes Lemma~\ref{lem:structural_massart}:

\begin{lemma}[Stationary points of $\SL$ suffice with strong Massart noise]\label{lem:genarized_masart} 
Let $\D_{\vec x}$ be a $(U,R)$-bounded distribution on $\R^d$, 
and let $c \in (0, 1)$ be the parameter of strong Massart noise model.
Let $\theta \in (0, \pi/2)$.  Let $\vec \wstar \in \Sp^{d-1}$ be the normal vector 
to an optimal halfspace and $\wh{\bw}\in \Sp^{d-1}$ be 
such that $\theta(\wh{\bw}, \vec \wstar) \in (\theta, \pi - \theta)$.
For $\sigma \leq \frac{R}{24 U}\sqrt{c R  }  \sin(\theta)$,
we have $\snorm{2} {\nabla_{\vec w} \SL( \wh{\bw})} \geq \frac{1}{288 U} c \ R^3.$
\end{lemma}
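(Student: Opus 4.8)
The plan is to mirror the proof of Lemma~\ref{lem:structural_massart} almost verbatim, changing only how the bound on the noise rate is exploited. As there, it suffices to bound from below the norm of the projection of $\nabla_{\bw}\SL(\wh{\bw})$ onto $V=\mathrm{span}(\wstar,\wh{\bw})$, so we may assume $d=2$, normalize $\wh{\bw}=\vec e_2$ and $\wstar=-\sin\theta\,\vec e_1+\cos\theta\,\vec e_2$, and (as in \eqref{eq:2d_gradient_absolute_value}) reduce to lower bounding $\bigl|\E_{\bx\sim(\D_{\bx})_V}[-S'_\sigma(|\bx_2|)(1-2\eta(\bx))\sgn(\dotp{\wstar}{\bx})\bx_1]\bigr|$. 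We use the same partition of $\R^2$ into a ``good'' region $G$ and ``bad'' region $G^c$ as in Section~\ref{ssec:smooth-sigmoid}, and the same triangle-inequality split \eqref{eq:2d_gradient_difference_lower_bound}, so that $\snorm{2}{\nabla_{\bw}\SL(\wh{\bw})}$ is at least the good-region contribution minus the bad-region contribution.

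For the bad region nothing changes: we still have $1-2\eta(\bx)\le 1$ pointwise, so the estimate \eqref{eq:sec3Bad} goes through verbatim and bounds the bad-region contribution by $2U\sigma^2\cos^2\theta/\sin^2\theta$. The new input is in the good region. In the strong Massart model $\eta(\bx)\le\max\{1/2-c|\dotp{\wstar}{\bx}|,0\}$, so $1-2\eta(\bx)\ge\min\{2c|\dotp{\wstar}{\bx}|,1\}$; since we only use the good-region integral over the ball $\snorm{2}{\bx}\le R$ (where the density is at least $1/U$) and there $|\dotp{\wstar}{\bx}|\le R$, we may — possibly after replacing $R$ by $\min\{R,1/(2c)\}$, which only shrinks the region on which $(U,R)$-boundedness is invoked — replace the constant factor $1-2\eta$ used in the standard argument by $2c|\dotp{\wstar}{\bx}|$. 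This inserts an extra factor of order $c\snorm{2}{\bx}$ into the good-region integrand; repeating the computation behind \eqref{eq:sec3Good} — lower bounding $S'_\sigma(|\bx_2|)\ge\frac1{4\sigma}e^{-|\bx_2|/\sigma}$, restricting to $\snorm2{\bx}\le R$, and invoking the density lower bound — then produces a good-region contribution of order $cR^3/U$ in place of $R^2/U$. This is exactly the source of the extra power of $R$, and of the replacement of $1-2\eta$ by $c$, in the conclusion.

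It then remains to pick $\sigma$ so the bad term is dominated by the good term. The choice $\sigma\le\frac{R}{24U}\sqrt{cR}\,\sin\theta$ makes $2U\sigma^2\cos^2\theta/\sin^2\theta$ at most half of the good-region lower bound (this is the analogue of the choice $\sigma\le\frac{R}{8U}\sqrt{1-2\eta}\sin\theta$ in Lemma~\ref{lem:structural_massart}, with $1-2\eta$ replaced by $cR$), and tracking constants gives $\snorm{2}{\nabla_{\bw}\SL(\wh{\bw})}\ge\frac1{288U}cR^3$. The cases $\theta\in(\pi/2,\pi-\theta)$ and $\theta=\pi/2$ (where $G^c$ is empty and the bad term vanishes) are handled exactly as in Lemma~\ref{lem:structural_massart}.

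I expect the only genuinely new difficulty to be the good-region integral with the position-dependent weight $c|\dotp{\wstar}{\bx}|$. This weight degenerates — it is of order $c\snorm2{\bx}\sin\theta$ — precisely near the hyperplane $\{\bx_2=0\}$, which is exactly where the band $e^{-|\bx_2|/\sigma}$ concentrates the relevant mass; so, unlike in the standard case where the good-region integrand has no position dependence, here one has to choose the sub-region of $G$ over which to integrate, and balance these two competing effects, with some care. Everything else — the reduction to two dimensions, the good/bad split, the bad-region estimate, and the final tuning of $\sigma$ — is a routine adaptation of the argument in Section~\ref{ssec:smooth-sigmoid}.
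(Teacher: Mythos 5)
Your proposal mirrors the paper's route step for step (same two--dimensional reduction, same good/bad split, same bad--region estimate, same tuning of $\sigma$), but the one step you do not carry out --- the good--region integral with the position--dependent weight $c|\dotp{\wstar}{\bx}|$ --- is exactly where the content of the lemma lies, and your own closing paragraph explains why the asserted outcome ``of order $cR^3/U$'' is not a routine computation. Inside the band $\{|\bx_2|\lesssim \sigma\}$, where $e^{-|\bx_2|/\sigma}/\sigma$ places essentially all of its weight, one has $|\dotp{\wstar}{\bx}| = |-\bx_1\sin\theta+\bx_2\cos\theta| \le R\sin\theta+\sigma$, so the factor you gain from the strong--Massart condition there is $O(cR\sin\theta)$, not $\Omega(cR)$. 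Carrying the integral out (polar coordinates over the second quadrant, $r\in[R/2,R]$, writing $|\dotp{\wstar}{\bx}| = r\sin(\psi+\theta)$ with $\psi$ the angle from the negative $\bx_1$--axis and splitting $\sin(\psi+\theta)=\sin\psi\cos\theta+\cos\psi\sin\theta$) yields a good--region contribution of order $cR^3\sin\theta/U + c\sigma R^2/U$, which under the hypothesis $\sigma \le \frac{R}{24U}\sqrt{cR}\sin\theta$ is $\Theta(cR^3\sin\theta/U)$: it degrades linearly in $\sin\theta$, whereas the target bound $\frac{1}{288U}cR^3$ does not. A concrete check: take $d=2$ and $\D_{\bx}$ uniform on the disk of radius $2$ (so one may take $U=4\pi$, $R=2$), with worst--case strong Massart noise $1-2\eta(\bx)=2c|\dotp{\wstar}{\bx}|$; symmetry in $\bx_1$ kills the $\E[S'_{\sigma}(\bx_2)\bx_1\bx_2]$ term and gives $\snorm{2}{\nabla_{\bw}\SL(\vec e_2)} = 2c\sin\theta\,\E[S'_{\sigma}(\bx_2)\bx_1^2] \le \frac{16}{3\pi}c\sin\theta$ for every admissible $\sigma$, which falls below $\frac{cR^3}{288U}$ once $\sin\theta \lesssim 10^{-3}$. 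So the missing step cannot be filled in as you claim; an honest version of the argument gives a $\theta$--dependent conclusion of the form $\Omega(cR^3\sin\theta/U)$ (with a correspondingly smaller admissible $\sigma$), which would then have to be propagated into the choices of $\rho$, $\sigma$ and $T$ in the algorithmic theorem. Your side remark about replacing $R$ by $\min\{R,1/(2c)\}$ is a lesser issue, but note it too would change the stated $cR^3$ conclusion rather than preserve it (the clean fix is that on points with $2c|\dotp{\wstar}{\bx}|>1$ one simply has $1-2\eta(\bx)=1$).

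For context, the paper's own proof of this lemma asserts the disputed step via the claim that $|\dotp{\wstar}{\bx}|\ge R/6$ throughout the region $\{\snorm{2}{\bx}\ge R/2\}$ being integrated over; that claim fails near the $\bx_1$--axis (precisely where $e^{-|\bx_2|/\sigma}$ is largest) when $\theta$ is small, which is the same degeneration you flagged. So your instinct that this is the only genuinely new difficulty, and that it requires balancing the weight against the band, is exactly right --- but the proposal as written leaves that balance unresolved, and resolving it does not produce the $\theta$--independent lower bound stated in the lemma.
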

\begin{proof}
Without loss of generality, we can assume that $\wh{\bw} = \vec e_2$ and
$\wstar = -\sin\theta \cdot \vec e_1 + \cos \theta \cdot \vec e_2$.
Using the same argument as in the Section \ref{section3}, for $V = \mrm{span}(\bw^{\ast}, \bw)$, we have
		\begin{equation}
		\label{eq:2d_gradient_absolute_value_gen}
		\snorm{2}{
			\E_{(\vec x, y) \sim \D_V}
			[
			\nabla_{\vec w} \SL(\wh{\bw})
			]
		}
		= | \dotp{\nabla_{\vec w} \SL(\wh{\bw})}{\vec e_1} |
		= \left| \E_{\vec x \sim \D_{\bx}}[ - S'_{\sigma}(|\bx_2|)  (1- 2 \eta(\vec x))
		\sgn(\dotp{\vec w^*}{\vec x}) \bx_1 ]
		\right|
		\end{equation}
		
		We partition $\R^2$ in two regions according to the sign of the gradient.  Let
		$G = \{(\bx_1, \bx_2) \in \R^2 : \bx_1 \sgn(\dotp{\vec w^*}{\vec x}) > 0 \}$,
		and let $G^c$ be its complement.  Using the triangle inequality
		and Equation \eqref{eq:2d_gradient_absolute_value_gen} we obtain
		\begin{align}
		\snorm{2}{
			\E_{(\vec x, y) \sim \D_V}
			[
			\nabla_{\vec w} \SL(\wh{\bw})
			]
		}
		\nonumber
		&\geq
		\E_{\vec x \sim \D_{\bx}} [S'_{\sigma}(|\bx_2|) (1- 2 \eta(\vec x)) |\bx_1| \1_G(\vec x)]
		-
		\E_{\vec x \sim \D_{\bx}} [S'_{\sigma}(|\bx_2|) (1- 2 \eta(\vec x)) |\bx_1| \1_{G^c}(\vec x)]
		\nonumber
		\\
		&\geq
		\frac{1}{4}
		\E_{\vec x \sim \D_{\bx}} \left[ (1-2\eta(\vec x))\frac{e^{-|\bx_2|/\sigma}}{\sigma}\ |\bx_1|\ \1_G(\vec x)\right]
		-
		\E_{\vec x \sim  \D_{\bx}} \left[\frac{e^{-|\bx_2|/\sigma}}{\sigma}\ |\bx_1|\ \1_{G^c}(\vec x)\right] \;,
		\label{eq:2d_gradient_difference_lower_bound_gen}
		\end{align}
where we used the fact that the sigmoid $S_{\sigma}(|t|)^2$ is upper bounded by $1$ and lower
		bounded by $1/4$.  
		
We can now bound each term using the fact that the
distribution is $(U,R)$-bounded.  Assume first that $\theta(\vec w^*, \vec w) = \theta \in (0, \pi/2)$.  
Then, (see Figure~\ref{fig:2d_gradient_sign}) we
can express region $G$ in polar coordinates as $G = \{ (r, \phi) : \phi \in (0, \theta) \cup (\pi/2, \pi +\theta) \cup (3 \pi/2, 2 \pi) \}$. 
We denote by $\gamma(x, y)$ the density of the $2$-dimensional
projection on $V$ of the marginal distribution $\D_{\bx}$.  Since the integrand
is non-negative  we may bound from below the contribution of region $G$ on
the gradient by integrating over  $\phi \in (\pi/2, \pi)$.
\begin{align}
\E_{\vec x \sim  \D_{\bx}}
		\left[(1- 2 \eta(\vec x))\frac{e^{-|\bx_2|/\sigma}}{\sigma}\ |\bx_1|\ \1_G(\vec x)\right]
		&\geq \int_{0}^{\infty}
		\int_{\pi/2}^{\pi} (1- 2 \eta(\vec x))\density(r \cos \phi,r \sin \phi)r^2 |\cos\phi|  \frac{\sigMO{r \sin{\phi}}}{\sigma} \d\phi \d r
		\\ \nonumber
		&= \int_{0}^{\infty}
		\int_{0}^{\pi/2} (1- 2 \eta(\vec x))\density(r \cos \phi,r \sin \phi)r^2 \cos\phi  \frac{\sigMO{r \sin{\phi}}}{\sigma} \d\phi \d r
			\\ \nonumber
		&\geq \int_{R/2}^{R}
		\int_{0}^{\pi/2} c |\dotp{\wstar}{\vec x}|\density(r \cos \phi,r \sin \phi)r^2 \cos\phi  \frac{\sigMO{r \sin{\phi}}}{\sigma} \d\phi \d r
		\\ \nonumber
		&\geq c\frac{ R}{6} \int_{R/2}^{R}
		\int_{0}^{\pi/2}\density(r \cos \phi,r \sin \phi)r^2 \cos\phi  \frac{\sigMO{r \sin{\phi}}}{\sigma} \d\phi \d r
		\\ \nonumber
		&\geq c \frac{ R}{6 U} \int_{R/2}^{R} r^2 \d r
		\int_0^{\pi/2} \cos\phi
		\frac{\sigMO{R \sin{\phi}}}{\sigma} \d\phi \nonumber
		\\
		&=
		c\frac{7 }{144 U} R^3 \left(1-e^{-\frac{R}{\sigma }}\right) \geq c\frac{7 }{144 U} R^3 \left(1-e^{-8}\right) \;, \label{eq:sec3Good_gen}
\end{align}
where for the third inequality we used that for $\norm{\vec x}_{2}\geq R/2$, 
we have that $\dotp{\wstar}{\vec x}= \frac{R}{2}(\cos(\theta) + \sin(\theta)) \geq R/6$, 
for the fourth inequality we used the lower bound $1/U$ on the density
function $\density(r \cos\phi,r \sin\phi)$ (see Definition~\ref{def:bounds}), 
and for the last inequality we used that $\sigma\leq R/8$.
		
We next bound from above the contribution of the gradient of region $G^c$.
We have $G^c = \{(r, \phi): \phi \in B_\theta = (\pi/2-\theta, \pi/2) \cup (3 \pi/2 -\theta, 3 \pi/2)\}$
\begin{align}
\E_{\vec x \sim  \D_{\bx}}
		\left[\frac{e^{-|\bx_2|/\sigma}}{\sigma}\ |\bx_1|\ \1_{G^c}(\vec x)\right]
		&=  \int_{0}^{\infty}
		\int_{\phi \in B_\theta}
	\density(r \cos \phi,r \sin \phi)r^2 \cos\phi \sigMO{r \sin{\phi}} \d\phi \d r \ \nonumber
		\\
		&\leq  \frac{2 U}{\sigma}\int_{0}^{\infty}
		\int_{\theta}^{\pi/2} r^2 \cos\phi \sigMO{r \sin{\phi}} \d\phi \d r \nonumber
		\\
		&= \frac{2U \sigma ^2 \cos^2 \theta}{\sin^2 \theta}
		=\frac{2 R^3 c \cos^2\theta}{24^2 U} \;, \label{eq:sec3Bad_gen}
		\end{align}
where the inequality follows from the upper bound $U$ on the density
$\density(r \cos\phi ,r \sin\phi)$ (see Definition~\ref{def:bounds}), and the last equality follows from the value of $\sigma$.
Combining \eqref{eq:sec3Good_gen} and \eqref{eq:sec3Bad_gen}, we have
\begin{align}
\E_{\bx \sim(\D_{\bx})_V} \left[\frac{e^{-|\bx_2|/\sigma}}{\sigma}\ |\bx_1|\ \1_{G^c}(\bx)\right] 
&\leq \frac{2 R^3 c \cos^2\theta}{24^2 U} \nonumber \\
&\leq \frac{1}{8}\frac{ 7 c R^3  \left(1-e^{-8}\right)}{144 U}\nonumber\\ 
&\leq \frac{1}{2}\frac{1}{4}\E_{\bx \sim(\D_{\bx})_V} \left[\frac{e^{-|\bx_2|/\sigma}}{\sigma}\ |\bx_1|\ \1_G(\bx)\right] \;, \label{eq:bound_good_bads_gen}
\end{align}
where the second inequality follows from the identity $\cos^2\theta\leq 1$ 
and $\frac{2}{24^2}\leq\frac{1}{8} \frac{ 7 \left(1-e^{-8}\right)}{144}$.
Using \eqref{eq:bound_good_bads_gen} in \eqref{eq:2d_gradient_difference_lower_bound_gen}, we obtain
$$
		\snorm{2}{
			\E_{(\vec x, y) \sim \D_V}
			[
			\nabla_{\vec w} \SL(\wh{\bw})
			]
		}
		\geq
\frac{1}{8}\E_{\bx \sim(\D_{\bx})_V} \left[\frac{e^{-|\bx_2|/\sigma}}{\sigma}\ |\bx_1|\ \1_G(\bx)\right]
		\geq \frac{c R^3}{288 U} \;.
		$$
		
To conclude the proof, notice that the case where $\theta(\vec w, \vec \wstar) \in
(\pi/2,\pi-\theta )$ follows by an analogous argument.  
Finally, in the case where $\theta=\pi/2$, the region $G^c$ is empty 
and we can again get the same lower bound on the gradient norm.

\end{proof}

  \begin{algorithm}[H]
	\caption{Learning Halfspaces with Strong Massart Noise}
	\label{alg:full_gen}
	\ \begin{algorithmic}[1]
	\Procedure{Alg}{$\eps$, $U$, $R$, $t(\cdot)$} \State $C_1\gets \Theta(U^{12}/R^{18})$.
	\State $C_2\gets \Theta(R^{3/2}/U^2)$.
	\State $T \gets C_1\ d\ t(\eps/2)^8/(\eps^4 c^{6})\log(1/\delta)$. \Comment{number of steps}
     \State $\beta \gets C_2^2\ d\ c^3 \eps^2 /(t(\eps/2)^4T^{1/2})$. 
	\State $\sigma \gets  C_2\ c^{1/2}\ \eps/t^2(\eps/2)$.
\State $({\vec w}^{(0)}, {\vec w}^{(1)},\ldots, {\vec w}^{(T)}) \gets \mathrm{PSGD}(f, T, \beta)$. \Comment{ $f(\vec w) =   \E_{(\bx, y) \sim \D}\left[S_{\sigma}\Big(-y \frac{\dotp{\vec w}{\vec x}}{\snorm{2}{\vec w}} \Big)\right] $, \eqref{alg:PSGD}}
	\State $L \gets\{\pm {\vec w}^{(i)}\}_{i\in [T]}$.\label{alg_gen:list_vec} \Comment{$L$: List of candinate vectors}
	\State Draw $N=O(\log(T/\delta)/\eps^2 )$ samples from $\D$. 
	\State $\bar{\vec w} \gets \argmin_{\vec w \in L}
	\sum_{j=1}^N \1\{\sgn(\dotp{\vec w}{\vec x^{(j)}}) \neq y^{(j)}\}$.
	\State \textbf{return} $\bar{\vec w}$.
	\EndProcedure
\end{algorithmic}
\end{algorithm}

\begin{proof}[Proof of Theorem~\ref{thm:generalized_massart}]
From Claim~\ref{lem:angle_zero_one}, we have that to make the 
$\err_{0-1}^{\D_{\bx}}(h_{\bar{\vec w}},f) \leq \eps$ it suffices 
to prove that the angle $\theta(\bar{\vec w}, \vec \wstar) \leq O(\eps/(U t^2(\eps/2))) =: \theta$.
Using (the contrapositive of) Lemma~\ref{lem:genarized_masart} we get that with
$\sigma \leq \Theta(R/U \sqrt{c R } \theta)$, if the norm squared of the
gradient of some vector $\vec w \in \mathbb{S}^{d-1}$ is smaller than
$\rho=O(R^3  c/U)$, then $\vec w$ is close to either
$\vec \wstar$ or $-\vec \wstar$, that is $\theta(\vec w, \vec \wstar) \leq \theta$
or $\theta(\vec w, -\vec \wstar) \leq \theta$.  Therefore, it suffices to find
a point $\vec w$ with gradient
$\snorm{2}{\nabla_{\vec w} \SL(\vec w)} \leq \rho \;.$

From Lemma~\ref{lem:sigmoid_smoothness}, we have that our PSGD objective function
$\SL(\vec w)$, is bounded by $1$, 
$$\E\left[\snorm{2}{\nabla_{\vec w}  S_{\sigma}\Big(-y \frac{\dotp{\vec w}{\vec x}}{\snorm{2}{\vec w}} \Big)}^2\right] \leq O(d/\sigma^2) \;,$$ 
$\snorm{2}{\E\left[\nabla_{\vec w}  S_{\sigma}\Big(-y \frac{\dotp{\vec w}{\vec x}}{\snorm{2}{\vec w}} \Big) \right]}^2\leq O(1/\sigma^2)$, 
and that the gradient of $\SL(\vec w)$ is Lipschitz with Lipschitz constant $O(1/\sigma^2)$. 
Using these bounds for the parameters of Lemma~\ref{lem:PSGD}, we get that with
$T = O(\frac{d}{\sigma^4\rho^4} \log(1/\delta))$ rounds, the norm of the gradient of some vector of
the list $({\vec w}^{(0)}, \ldots, {\vec w}^{(T)})$ will be at most $ \rho$ with
$1-\delta$ probability.  Therefore, the required number of rounds is
$$ T = O\left(\frac{ U^{12}}{R^{18} }  \frac{d t^8(\eps/2) \log(1/\delta)}{\eps^4 c^6}\right) \;.$$
Now that we know that one of the hypotheses in the list $L$ (line \ref{alg_gen:list_vec} of Algorithm~\ref{alg:full_gen}) is
$\eps$-close to the true $\vec \wstar$, we can evaluate all of them on a
small number of samples from the distribution $\D$ to obtain the best among
them. The fact that $N = O(\log (T/\delta)/(\eps^2))$ samples are
sufficient to guarantee that the excess error of the chosen
hypothesis is at most $\eps $ with probability $1-\delta$ follows directly from Hoeffding's
inequality.  This completes the proof. 
\end{proof}
 
\clearpage

\bibliographystyle{alpha}
\bibliography{allrefs}

\newcommand{\etalchar}[1]{$^{#1}$}
\begin{thebibliography}{KKMS08}

\bibitem[ABHU15]{AwasthiBHU15}
P.~Awasthi, M.~F. Balcan, N.~Haghtalab, and R.~Urner.
\newblock Efficient learning of linear separators under bounded noise.
\newblock In {\em Proceedings of The 28th Conference on Learning Theory, {COLT}
  2015}, pages 167--190, 2015.

\bibitem[ABHZ16]{AwasthiBHZ16}
P.~Awasthi, M.~F. Balcan, N.~Haghtalab, and H.~Zhang.
\newblock Learning and 1-bit compressed sensing under asymmetric noise.
\newblock In {\em Proceedings of the 29th Conference on Learning Theory, {COLT}
  2016}, pages 152--192, 2016.

\bibitem[ABL17]{ABL17}
P.~Awasthi, M.~F. Balcan, and P.~M. Long.
\newblock The power of localization for efficiently learning linear separators
  with noise.
\newblock {\em J. {ACM}}, 63(6):50:1--50:27, 2017.

\bibitem[ACD{\etalchar{+}}19]{Arj19lb}
Y.~Arjevani, Y.~Carmon, J.~C. Duchi, D.~J. Foster, N.~Srebro, and B.~Woodworth.
\newblock Lower bounds for non-convex stochastic optimization, 2019.

\bibitem[AL88]{AL88}
D.~Angluin and P.~Laird.
\newblock Learning from noisy examples.
\newblock {\em Mach. Learn.}, 2(4):343--370, 1988.

\bibitem[Awa18]{Awasthi:18-ttic}
P.~Awasthi.
\newblock Noisy pac learning of halfspaces.
\newblock TTI Chicago, Summer Workshop on Robust Statistics, available at
  http://www.iliasdiakonikolas.org/tti-robust/Awasthi.pdf, 2018.

\bibitem[BFKV96]{BlumFKV96}
A.~Blum, A.~M. Frieze, R.~Kannan, and S.~Vempala.
\newblock A polynomial-time algorithm for learning noisy linear threshold
  functions.
\newblock In {\em 37th Annual Symposium on Foundations of Computer Science,
  {FOCS} '96}, pages 330--338, 1996.

\bibitem[BFKV97]{BFK+:97}
A.~Blum, A.~Frieze, R.~Kannan, and S.~Vempala.
\newblock A polynomial time algorithm for learning noisy linear threshold
  functions.
\newblock {\em Algorithmica}, 22(1/2):35--52, 1997.

\bibitem[BH20]{BH20}
M.~F. Balcan and N.~Haghtalab.
\newblock Noise in classification.
\newblock In T.~Roughgarden, editor, {\em Beyond the Worst-Case Analysis of
  Algorithms}. Cambridge University Press, 2020.

\bibitem[BZ17]{BZ17}
M.-F. Balcan and H.~Zhang.
\newblock Sample and computationally efficient learning algorithms under
  s-concave distributions.
\newblock In {\em Advances in Neural Information Processing Systems}, pages
  4796--4805, 2017.

\bibitem[Dan16]{Daniely16}
A.~Daniely.
\newblock Complexity theoretic limitations on learning halfspaces.
\newblock In {\em Proceedings of the 48th Annual Symposium on Theory of
  Computing, {STOC} 2016}, pages 105--117, 2016.

\bibitem[DGT19]{DGT19}
I.~Diakonikolas, T.~Gouleakis, and C.~Tzamos.
\newblock Distribution-independent pac learning of halfspaces with massart
  noise.
\newblock In H.~Wallach, H.~Larochelle, A.~Beygelzimer, F.~d'Alch\'{e} Buc,
  E.~Fox, and R.~Garnett, editors, {\em Advances in Neural Information
  Processing Systems 32}, pages 4751--4762. Curran Associates, Inc., 2019.

\bibitem[DKS18]{DKS18a}
I.~Diakonikolas, D.~M. Kane, and A.~Stewart.
\newblock Learning geometric concepts with nasty noise.
\newblock In {\em Proceedings of the 50th Annual {ACM} {SIGACT} Symposium on
  Theory of Computing, {STOC} 2018}, pages 1061--1073, 2018.

\bibitem[DL01]{DL:01}
L.~Devroye and G.~Lugosi.
\newblock {\em Combinatorial methods in density estimation}.
\newblock Springer Series in Statistics, Springer, 2001.

\bibitem[DS19]{Dr19lb}
Y.~Drori and O.~Shamir.
\newblock The complexity of finding stationary points with stochastic gradient
  descent, 2019.

\bibitem[FGKP06]{FGK+:06short}
V.~Feldman, P.~Gopalan, S.~Khot, and A.~Ponnuswami.
\newblock New results for learning noisy parities and halfspaces.
\newblock In {\em Proc. FOCS}, pages 563--576, 2006.

\bibitem[GHR92]{GHR:92}
M.~Goldmann, J.~H{\aa}stad, and A.~Razborov.
\newblock Majority gates vs. general weighted threshold gates.
\newblock {\em Computational Complexity}, 2:277--300, 1992.

\bibitem[GR06]{GR:06}
V.~Guruswami and P.~Raghavendra.
\newblock {Hardness of learning halfspaces with noise}.
\newblock In {\em Proc.\ 47th IEEE Symposium on Foundations of Computer Science
  (FOCS)}, pages 543--552. IEEE Computer Society, 2006.

\bibitem[Hau92]{Haussler:92}
D.~Haussler.
\newblock {Decision theoretic generalizations of the PAC model for neural net
  and other learning applications}.
\newblock {\em Information and Computation}, 100:78--150, 1992.

\bibitem[KK14]{KlivansK14}
A.~R. Klivans and P.~Kothari.
\newblock Embedding hard learning problems into gaussian space.
\newblock In {\em Approximation, Randomization, and Combinatorial Optimization.
  Algorithms and Techniques, {APPROX/RANDOM} 2014}, pages 793--809, 2014.

\bibitem[KKMS08]{KKMS:08}
A.~Kalai, A.~Klivans, Y.~Mansour, and R.~Servedio.
\newblock Agnostically learning halfspaces.
\newblock {\em SIAM Journal on Computing}, 37(6):1777--1805, 2008.

\bibitem[KSS94]{KSS:94}
M.~Kearns, R.~Schapire, and L.~Sellie.
\newblock {Toward Efficient Agnostic Learning}.
\newblock {\em Machine Learning}, 17(2/3):115--141, 1994.

\bibitem[LV07]{lovasz2007geometry}
L.~Lov{\'a}sz and S.~Vempala.
\newblock The geometry of logconcave functions and sampling algorithms.
\newblock {\em Random Structures \& Algorithms}, 30(3):307--358, 2007.

\bibitem[MN06]{Massart2006}
P.~Massart and E.~Nedelec.
\newblock Risk bounds for statistical learning.
\newblock {\em Ann. Statist.}, 34(5):2326--2366, 10 2006.

\bibitem[MP68]{MinskyPapert:68}
M.~Minsky and S.~Papert.
\newblock {\em {P}erceptrons: an introduction to computational geometry}.
\newblock MIT Press, Cambridge, MA, 1968.

\bibitem[MT94]{MT:94}
W.~Maass and G.~Turan.
\newblock How fast can a threshold gate learn?
\newblock In S.~Hanson, G.~Drastal, and R.~Rivest, editors, {\em Computational
  Learning Theory and Natural Learning Systems}, pages 381--414. MIT Press,
  1994.

\bibitem[MV19]{MangoubiV19}
O.~Mangoubi and N.~K. Vishnoi.
\newblock Nonconvex sampling with the metropolis-adjusted langevin algorithm.
\newblock In {\em Conference on Learning Theory, {COLT} 2019}, pages
  2259--2293, 2019.

\bibitem[O'D14]{AoBF14}
R.~O'Donnell.
\newblock {\em Analysis of Boolean Functions}.
\newblock Cambridge University Press, 2014.

\bibitem[Pao06]{Pao06}
G.~Paouris.
\newblock Concentration of mass on convex bodies.
\newblock {\em Geometric {\&} Functional Analysis GAFA}, 16(5):1021--1049, Dec
  2006.

\bibitem[Ros58]{Rosenblatt:58}
F.~Rosenblatt.
\newblock The {P}erceptron: a probabilistic model for information storage and
  organization in the brain.
\newblock {\em Psychological Review}, 65:386--407, 1958.

\bibitem[RS94]{RivestSloan:94}
R.~Rivest and R.~Sloan.
\newblock A formal model of hierarchical concept learning.
\newblock {\em Information and Computation}, 114(1):88--114, 1994.

\bibitem[Slo88]{Sloan88}
R.~H. Sloan.
\newblock Types of noise in data for concept learning.
\newblock In {\em Proceedings of the First Annual Workshop on Computational
  Learning Theory}, COLT '88, pages 91--96, San Francisco, CA, USA, 1988.
  Morgan Kaufmann Publishers Inc.

\bibitem[STC00]{CristianiniShaweTaylor:00}
J.~Shawe-Taylor and N.~Cristianini.
\newblock {\em An introduction to support vector machines}.
\newblock Cambridge University Press, 2000.

\bibitem[Vap82]{Vapnik82}
V.~Vapnik.
\newblock {\em Estimation of Dependences Based on Empirical Data: Springer
  Series in Statistics}.
\newblock Springer-Verlag, Berlin, Heidelberg, 1982.

\bibitem[Yao90]{Yao:90}
A.~Yao.
\newblock On {ACC} and threshold circuits.
\newblock In {\em Proceedings of the Thirty-First Annual Symposium on
  Foundations of Computer Science}, pages 619--627, 1990.

\bibitem[YZ17]{YanZ17}
S.~Yan and C.~Zhang.
\newblock Revisiting perceptron: Efficient and label-optimal learning of
  halfspaces.
\newblock In {\em Advances in Neural Information Processing Systems 30: Annual
  Conference on Neural Information Processing Systems 2017}, pages 1056--1066,
  2017.

\bibitem[ZLC17]{ZhangLC17}
Y.~Zhang, P.~Liang, and M.~Charikar.
\newblock A hitting time analysis of stochastic gradient langevin dynamics.
\newblock In {\em Proceedings of the 30th Conference on Learning Theory, {COLT}
  2017}, pages 1980--2022, 2017.

\end{thebibliography}
\clearpage
\appendix
\section{Omitted Technical Lemmas} \label{app:bounded_distributions}

\subsection{Formula for the Gradient}
\label{app:gradient_formula}

Recall that to simplify notation, we will write $\ell(\bw, \bx) =
\frac{\dotp{\bw}{\bx}}{\snorm{2}{\bw}}$.  Note that $\nabla_{\bw} \ell(\bw,
\bx) = \frac{\bx}{\snorm{2}{\bw}} - \dotp{\bw}{\bx}
\frac{\bw}{\snorm{2}{\bw}^3}$.  The gradient of the objective $\SLS(\bw)$ is
then
\begin{align}
  \nabla_{\vec w} \SLS(\vec w)
&= \E_{(\bx,y) \sim \D} \left[ - \ramp' \left(-y\ \ell(\bw, \bx)\right) \nabla_{\bw} \ell(\bw, \bx) \ y \right]\nonumber\\
& = \E_{(\bx, y) \sim \D} \left[- \ramp'\left(\ell(\bw, \bx) \right) \ \nabla_{\bw} \ell(\bw, \bx) \ y \right] \nonumber\\
& = \E_{\bx \sim \D_{\bx}} \left[- \ramp'\left(\ell(\bw, \bx) \right) \ \nabla_{\bw} \ell(\bw, \bx) \ (\sign(\dotp{\wstar}{\vec x}) (1-\eta(\bx)) -\sign(\dotp{\wstar}{\vec x})\eta(\bx) ) \right] \nonumber\\
& = \E_{\bx \sim \D_{\bx}} \left[- \ramp'\left( \ell(\bw, \bx)  \right) \ \nabla_{\bw} \ell(\bw, \bx) \ (1- 2 \eta(\bx))\ \sign(\dotp{\wstar}{\vec x}) \right] \label{lem3:formula}\;,
\end{align}
where in the second equality we used that the $\ramp'(t)$ is an even function.

\subsection{Proof of Claim~\ref{lem:angle_zero_one}} \label{app:angle-loss}

The following claim relates the angle between two vectors and the zero-one
loss between the corresponding halfspaces under bounded distributions.
\begin{customclm}{\ref{lem:angle_zero_one}}
\textit{
Let $\D_{\vec x}$ be a $(U, R)$-bounded distribution on $\R^d$.
Then for any
$\vec u, \vec v \in \R^d$ we have
\begin{equation} \label{eq:fact1}
(R^2/U) \theta(\vec u, \vec v) \leq \err_{0-1}^{\D_{\bx}}(h_{\vec u},h_{\vec v}) \;.
\end{equation}
Moreover, if $\D$ is $( U, R, t(\cdot))$-bounded, we have that
for any $\eps \in (0,1]$
\begin{equation}\label{eq:fact12}
\err_{0-1}^{\D_{\bx}}(h_{\vec u},h_{\vec v})\leq U t(\eps)^2 \theta(\vec v, \vec u) + \eps \;.
\end{equation}
}
\end{customclm}
\begin{proof}
  Let $V$ be the subspace spanned by $\vec v, \vec u$, and let $(\D_{\vec x})_V$
  be the projection of $\D_{\bx}$ onto $V$.  Since
  $\dotp{\vec v}{\vec x} = \dotp{\vec v}{\proj_V(\vec x)}$
  and
  $\dotp{\vec u}{\vec x} = \dotp{\vec u}{\proj_V(\vec x)}$
  we have
  \begin{align*}
   \err_{0-1}^{\D_{\bx}}(h_{\vec u},h_{\vec v})
    &=
   \err_{0-1}^{(\D_{\bx})_V}(h_{\vec u},h_{\vec v}) \;.
  \end{align*}
  Without loss of generality, we can assume that $V = \mrm{span}(\vec e_1, \vec e_2)$,
  where $\vec e_1,\vec e_2$ are orthogonal vectors of $\R^2$.
  Then from Definition~\ref{def:bounds}, using the fact that $1/U \leq f_V(\vec x)$ for
  all $\vec x$ such that $\snorm{\infty}{\vec x}\leq R$,
  which is also true for all $\vec x$ with $\snorm{2}{\vec x} \leq R$,
  the above probability is bounded below by
  $ \frac {R^2} {U} \theta(\vec u, \vec v)$, which proves \eqref{eq:fact1}.
  To prove \eqref{eq:fact12}, we observe that
  \begin{align*}
    \err_{0-1}^{(\D_{\bx})_V}(h_{\vec u},h_{\vec v})
    &\leq
    \pr_{\vec x \sim (\D_{\bx})_V}[
    \sgn(\dotp{\vec u}{\vec x}) \neq
    \sgn(\dotp{\vec v}{\vec x})
    \text{ and }
    \snorm{2}{\vec x} \leq t(\eps) ]
    +
    \pr_{\vec x \sim (\D_{\bx})_V}[
    \snorm{2}{\vec x} \geq t(\eps) ]
    \\
    &\leq U t(\eps)^2 \theta + \eps.
  \end{align*}
\end{proof}

\subsection{Relation Between Misclassification Error and Error to Target Halfspace}

The following well-known fact relates the misspecification error with respect to $\D$
and the zero-one loss with respect to the optimal halfspace. We include a proof
for the sake of completeness.

\begin{fact} \label{fact:massart_error}
Let $\D$ be a distribution on $\R^d \times \{ \pm 1 \}$, $\eta< 1/2$ be an upper bound
on the Massart noise rate.  Then
if $f(\vec x) = \sign(\dotp{\vec \wstar}{\vec x})$ and $h(\vec x) = \sign(\dotp{\vec u}{\vec x})$  we have
\begin{align*}
\err_{0-1}^{\D_{\bx}}(h,f)\leq \frac{1}{1-2\eta}\left(\err_{0-1}^{\D}(h) -\opt\right)\;.
\end{align*}
\end{fact}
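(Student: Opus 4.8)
The plan is to express the misclassification error of an arbitrary hypothesis in terms of the disagreement region with the target halfspace, and then use the Massart upper bound $\eta(\bx)\le \eta$ to compare. First I would condition on $\bx$: by definition of the noisy oracle, given $\bx$ the label equals $f(\bx)$ with probability $1-\eta(\bx)$ and $-f(\bx)$ with probability $\eta(\bx)$. Hence for any hypothesis $g:\R^d\to\{\pm1\}$,
\[
\pr_{y}[g(\bx)\neq y \mid \bx] = \eta(\bx)\,\1\{g(\bx)=f(\bx)\} + (1-\eta(\bx))\,\1\{g(\bx)\neq f(\bx)\}
= \eta(\bx) + (1-2\eta(\bx))\,\1\{g(\bx)\neq f(\bx)\}.
\]
Taking expectation over $\bx\sim\D_{\bx}$ gives the identity
\[
\err_{0-1}^{\D}(g) = \E_{\bx\sim\D_{\bx}}[\eta(\bx)] + \E_{\bx\sim\D_{\bx}}\big[(1-2\eta(\bx))\,\1\{g(\bx)\neq f(\bx)\}\big].
\]

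Next I would instantiate this for $g=f$ and $g=h$. For $g=f$ the last term vanishes, so $\err_{0-1}^{\D}(f)=\E_{\bx}[\eta(\bx)]$; moreover, since $1-2\eta(\bx)\ge 0$ for every $\bx$, the identity shows $f$ minimizes $\err_{0-1}^{\D}(\cdot)$ over all functions, so $\opt = \err_{0-1}^{\D}(f) = \E_{\bx}[\eta(\bx)]$. Subtracting,
\[
\err_{0-1}^{\D}(h) - \opt = \E_{\bx\sim\D_{\bx}}\big[(1-2\eta(\bx))\,\1\{h(\bx)\neq f(\bx)\}\big].
\]
Now I use the Massart condition $\eta(\bx)\le\eta<1/2$, which gives $1-2\eta(\bx)\ge 1-2\eta>0$ pointwise, hence
\[
\err_{0-1}^{\D}(h) - \opt \;\ge\; (1-2\eta)\,\E_{\bx\sim\D_{\bx}}\big[\1\{h(\bx)\neq f(\bx)\}\big] \;=\; (1-2\eta)\,\err_{0-1}^{\D_{\bx}}(h,f).
\]
Dividing by $1-2\eta>0$ yields the claimed inequality.

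There is essentially no hard step here; this is a routine computation. The only point that warrants a moment of care is the observation that $\opt$ (the optimal misclassification error over the concept class) equals $\err_{0-1}^{\D}(f) = \E_{\bx}[\eta(\bx)]$, which is exactly where the Massart sign condition $1-2\eta(\bx)\ge 0$ is used — it guarantees that the target halfspace is itself Bayes-optimal, so the displayed difference $\err_{0-1}^{\D}(h)-\opt$ is genuinely nonnegative and equals the weighted disagreement mass above.
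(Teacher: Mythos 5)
Your proposal is correct and takes essentially the same route as the paper's proof: both condition on $\bx$ to obtain $\err_{0-1}^{\D}(h)=\E_{\bx}[\eta(\bx)]+\E_{\bx}[(1-2\eta(\bx))\1\{h(\bx)\neq f(\bx)\}]$, then apply $\eta(\bx)\leq\eta$ together with $\E_{\bx}[\eta(\bx)]=\opt$. The only difference is cosmetic: you additionally justify why $\opt=\E_{\bx}[\eta(\bx)]$ (Bayes-optimality of $f$ from $1-2\eta(\bx)\geq 0$), which the paper simply asserts.
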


\begin{proof}
We have that
\begin{align*}
\err_{0-1}^{\D}(h)
= \E_{ (\bx,y) \sim \D}[\1\{ h(\bx)\neq f(\bx)\}
&= \E_{ \bx \sim \D_{\bx}}[(1-\eta(\bx)) \1\{ h(\bx)\neq f(\bx)\}]+
\E_{ \bx \sim \D_{\bx}}[\eta(\bx) \1\{ h(\bx)= f(\bx)\}] \\
&= \E_{ \bx \sim \D_{\bx}}[(1-2\eta(\bx)) \1\{ h(\bx)\neq f(\bx)\}] +
      \E_{ \bx \sim \D_{\bx}}[\eta(\bx)] \\
&\geq \E_{ \bx \sim \D_{\bx}}[(1-2\eta) \1\{ h(\bx)\neq f(\bx)\}] +\opt \\
&=(1-2\eta)\ \err_{0-1}^{\D_{\bx}}(h,f)+\opt \;,
\end{align*}
where in the second inequality we used that $\eta(\vec x)\leq\eta$
and $\E_{ \bx \sim \D_{\bx}}[\eta(\bx)] =\opt$.
\end{proof}

\subsection{Log-concave and $s$-concave distributions are bounded} \label{app:lc-sc}

\begin{lemma}[Isotropic log-concave density bounds \cite{lovasz2007geometry}]
  \label{lem:LogConcaveDensityBounds}
  Let $\density$ be the density of any isotropic log-concave distribution on
  $\R^d$.  Then
  \(
  \density (\vec x) \geq 2^{-6d}
  \)
  for all $\vec x$ such that $ 0 \leq \snorm{2}{\vec x} \leq 1/9$.  Furthermore,
  $\density(\vec x) \leq \me\ 2^{8d} d^{d/2} $ for all $\vec x$.
\end{lemma}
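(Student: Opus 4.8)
The plan is to follow Lov\'asz--Vempala, reducing both inequalities to three classical facts plus one genuinely new ingredient. The classical facts are: (i) since $\density$ is isotropic, $\E[\snorm{2}{\vec x}^2]=d$, so by Markov's inequality $\pr[\snorm{2}{\vec x}\le\sqrt{2d}]\ge 1/2$; (ii) Stirling's estimate for Euclidean ball volumes, $\mathrm{vol}(B_r^d)=\pi^{d/2}r^d/\Gamma(d/2+1)$, which gives $\mathrm{vol}(B_{\sqrt{2d}}^d)\le 2^{3d}$ and $\mathrm{vol}(B_{1/9}^d)^{-1}\le 2^{O(d)}d^{d/2}$; and (iii) Fradelizi's centroid inequality, that a log-concave density $g$ with centroid $\vec c$ satisfies $g(\vec c)\ge\me^{-d}\|g\|_\infty$ --- applied here with $\vec c=\vec 0$, since isotropy forces zero mean. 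The new ingredient is a \emph{local regularity} bound: $\density(\vec x)\ge 2^{-d}\density(\vec 0)$ for all $\snorm{2}{\vec x}\le 1/9$ (in fact only a sub-exponential factor is lost). This is the one place the isotropy normalization is essential --- bare log-concavity would allow $\density$ to collapse near its centroid --- and it is obtained via the localization lemma of Lov\'asz--Simonovits, which reduces it to one-dimensional log-concave densities carrying a variance constraint inherited from isotropy, where the estimate is elementary.

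Granting these ingredients, the lower bound follows by chaining them: (i) and (ii) give $1/2\le\pr[\snorm{2}{\vec x}\le\sqrt{2d}]\le\|\density\|_\infty\cdot\mathrm{vol}(B_{\sqrt{2d}}^d)\le\|\density\|_\infty\,2^{3d}$, so $\|\density\|_\infty\ge 2^{-3d-1}$; then (iii) yields $\density(\vec 0)\ge\me^{-d}\|\density\|_\infty\ge 2^{-c_1 d}$ for an explicit $c_1\le 5$; and local regularity gives $\density(\vec x)\ge 2^{-d}\density(\vec 0)\ge 2^{-6d}$ for $\snorm{2}{\vec x}\le 1/9$. For the upper bound it suffices to bound $\density(\vec 0)$, since Fradelizi's inequality in the form $\|\density\|_\infty\le\me^d\density(\vec 0)$ then finishes, and $\density(\vec x)\le\|\density\|_\infty$ trivially. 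Local regularity says the convex level set $\{\,\density\ge 2^{-d}\density(\vec 0)\,\}$ contains $B_{1/9}^d$, hence $1\ge\int_{B_{1/9}^d}\density\ge 2^{-d}\density(\vec 0)\,\mathrm{vol}(B_{1/9}^d)$, so by (ii) $\density(\vec 0)\le 2^d/\mathrm{vol}(B_{1/9}^d)\le 2^{O(d)}d^{d/2}$; multiplying by $\me^d$ and tracking constants (the exponent $8$ comfortably absorbs the Stirling lower-order terms and the sub-exponential loss from local regularity) gives $\density(\vec x)\le\|\density\|_\infty\le\me\cdot 2^{8d}d^{d/2}$.

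The main obstacle is the local regularity step: quantifying how fast $\density$ can decrease within radius $1/9$ of the centroid using only the normalization that the covariance is the identity. This is where isotropy, as opposed to plain log-concavity, does the real work; the cleanest route is the localization lemma, whereas a direct argument would have to bound the subgradient of $\log\density$ at the centroid in terms of the covariance alone, which is more delicate. The rest --- Markov's inequality, Stirling's formula, Fradelizi's theorem --- is standard and composes routinely, the only care being the bookkeeping of exponential constants so that $2^{-6d}$ and $\me\cdot 2^{8d}d^{d/2}$ emerge exactly.
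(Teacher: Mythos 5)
The paper does not actually prove this lemma; it is quoted verbatim from Lov\'asz and Vempala \cite{lovasz2007geometry}, so there is no in-paper argument to match. Judged on its own terms, your skeleton is of the right shape and its classical ingredients are sound: Markov plus the ball-volume estimate correctly gives $\snorm{\infty}{\gamma}\geq 2^{-3d-1}$, Fradelizi's centroid inequality $\gamma(\vec 0)\geq \me^{-d}\snorm{\infty}{\gamma}$ (and its reverse form $\snorm{\infty}{\gamma}\leq \me^{d}\gamma(\vec 0)$) is a genuine theorem, and the level-set/volume argument for the upper bound is fine \emph{conditional on} your local regularity step. (There is also some loose bookkeeping at small $d$ --- e.g.\ $\me^{-d}2^{-3d-1}\cdot 2^{-d}$ is slightly below $2^{-6d}$ at $d=1$ unless you tighten the ball-volume bound --- but that is cosmetic.)

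The genuine gap is the step you yourself flag as the ``new ingredient'': the claim that $\gamma(\vec x)\geq 2^{-d}\gamma(\vec 0)$ for all $\snorm{2}{\vec x}\leq 1/9$ (and, parenthetically, that only a sub-exponential factor is lost). This is precisely the nontrivial content of the lemma --- it already contains the nonvanishing of $\gamma$ on the ball of radius $1/9$, which is where isotropy must be used --- and you give no proof of it. The appeal to the Lov\'asz--Simonovits localization lemma is not a reduction you actually carry out: localization converts inequalities between \emph{integrals} of (log-)concave functions into one-dimensional statements, whereas your claim is a \emph{pointwise} lower bound on the density at a prescribed point away from the centroid; it is not in the bilinear integral form localization handles, and you neither formulate the one-dimensional statement you would need, nor explain how the variance constraint survives the reduction, nor verify the one-dimensional case, nor justify the specific constant $2^{-d}$ on which both of your final constants ($2^{-6d}$ and the exponent $8$ in the upper bound) quantitatively depend. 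A correct argument here has to control how fast $\log\gamma$ can drop within distance $1/9$ of the centroid using only the covariance normalization --- e.g.\ by showing that a large drop forces, via log-concavity, either too little mass or too little variance in that direction --- and that argument is exactly what is missing. Until this step is supplied with an explicit constant, neither the lower bound $2^{-6d}$ nor the upper bound $\me\,2^{8d}d^{d/2}$ is established; as it stands you have reproduced the easy outer layer of the Lov\'asz--Vempala proof and left its core as an assertion.
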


We are also going to use the following concentration inequality
providing sharp bounds on the tail probability of isotropic log-concave
distributions.

\begin{lemma}[Paouris' Inequality \cite{Pao06}]\label{lem:Paouris}
  There exists an absolute constant $c > 0$ such that
  if $\D_{\bx}$ is any isotropic log-concave distribution
  on $\R^d$, then for all $t>1$
  it holds
  \[
    \Prob_{\vec x \sim \D_{\bx}}[ \snorm{2}{\vec x} \geq c t \sqrt{d}]
    \leq \exp(-t \sqrt{d}) \;.
  \]
\end{lemma}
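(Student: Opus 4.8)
The plan is to reduce the large-deviation bound to an $L_p$-moment estimate for $\snorm{2}{\vec x}$ and then apply Markov's inequality; the moment estimate is Paouris' theorem \cite{Pao06}, whose proof is intricate, so I will only lay out the architecture. Write $I_p := \big(\E_{\vec x\sim\D_{\bx}}\snorm{2}{\vec x}^p\big)^{1/p}$ for $p\ge 1$. It suffices to show that there is an absolute constant $c_0$ with $I_p \le c_0(\sqrt d + p)$ for every $p\ge 1$. Indeed, granting this, fix $t>1$ and apply Markov to $\snorm{2}{\vec x}^p$ with exponent $p := t\sqrt d$ (so $p\ge\sqrt d$ and $\sqrt d + p\le 2p$): $\Prob_{\vec x\sim\D_{\bx}}[\snorm{2}{\vec x}\ge a]\le (I_p/a)^p\le (2c_0p/a)^p$, and the choice $a := 2ec_0p = 2ec_0t\sqrt d$ makes this at most $e^{-p}=e^{-t\sqrt d}$, which is the claimed bound with $c := 2ec_0$.

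Two ingredients of the moment estimate are elementary. First, by isotropy $\E\snorm{2}{\vec x}^2 = d$, so $\E\snorm{2}{\vec x}\le\sqrt d$, while the reverse H\"older inequality for Euclidean norms of log-concave vectors (Borell's lemma) gives a matching lower bound; hence $\E\snorm{2}{\vec x}=\Theta(\sqrt d)$. Second, for any unit vector $\vec\theta$ the marginal $\dotp{\vec x}{\vec\theta}$ is a one-dimensional isotropic log-concave random variable, and the one-dimensional Borell lemma yields $\big(\E|\dotp{\vec x}{\vec\theta}|^p\big)^{1/p}\le Cp$ for all $p\ge 1$; thus the ``weak $p$-th moment'' $\sigma(p):=\sup_{\vec\theta\in\Sp^{d-1}}\big(\E|\dotp{\vec x}{\vec\theta}|^p\big)^{1/p}$ satisfies $\sigma(p)\le Cp$. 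Therefore the target bound $I_p\le c_0(\sqrt d+p)$ follows once we have the comparison $I_p\le c_1\big(\E\snorm{2}{\vec x}+\sigma(p)\big)$ --- the statement that the strong $p$-th moment of $\snorm{2}{\vec x}$ is controlled by its first moment together with the weak $p$-th moments. This comparison (valid for all log-concave vectors) is the substance of Paouris' inequality.

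To prove the comparison one passes through the $L_p$-centroid bodies $Z_p = Z_p(\D_{\bx})$, the symmetric convex bodies with support function $h_{Z_p}(\vec\theta)=\big(\E|\dotp{\vec x}{\vec\theta}|^p\big)^{1/p}$; by the above, $Z_p\subseteq\sigma(p)B$ (with $B$ the Euclidean unit ball of $\R^d$) and $Z_p\supseteq B$ for $p\ge 2$. On one side, $I_p$ can be bounded in terms of the volume of $Z_p$ by integrating in polar coordinates against the definition of $Z_p$; on the other side, one must show that although $Z_p$ can protrude to distance $\sim\sigma(p)\sim p$ it does so only in a thin cone of directions, so that its volume stays comparable to that of the Euclidean ball up to a large critical exponent --- this volume estimate is established using the Lutwak--Yang--Zhang $L_p$ affine isoperimetric inequality together with an iteration over a scale parameter (Paouris' ``$q_c$''-argument; technically simpler proofs were given later). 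Combining the two estimates gives $I_p\le c_1\big(\E\snorm{2}{\vec x}+\sigma(p)\big)$ and hence Lemma~\ref{lem:Paouris}. \textbf{The main obstacle} is precisely this volume bound for $Z_p$ --- equivalently, locating the critical scale past which $Z_p$ ceases to be essentially a ball --- which is the genuinely deep content and the part we would import from \cite{Pao06}. (We note that for the only place this lemma enters here, namely tail bounds on two-dimensional projections of $\D_{\bx}$ as in Definition~\ref{def:bounds}, the elementary estimate $\Prob[\snorm{2}{\vec x}\ge s]\le e^{-cs}$ for $s\ge c'$ --- immediate from Borell's lemma applied to Euclidean balls --- already suffices and yields $t(\eps)=O(\log(1/\eps))$; we record the sharp $d$-dimensional form for completeness.)
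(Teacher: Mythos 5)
This lemma is quoted from the literature: the paper gives no proof and simply cites \cite{Pao06}, so there is nothing internal to compare your argument against step by step. Your write-up is consistent with that and is correct as far as it goes: the reduction from the moment form $I_p \le c_0(\sqrt{d}+p)$ to the tail bound via Markov with $p = t\sqrt{d}$ and $a = 2ec_0 t\sqrt{d}$ is the standard and correct derivation, the elementary ingredients (isotropy giving $\E\snorm{2}{\vec x} = \Theta(\sqrt d)$, Borell's lemma giving $\sigma(p) \le Cp$ for one-dimensional marginals) are right, and you correctly isolate the genuinely deep step --- the comparison $I_p \le c_1(\E\snorm{2}{\vec x} + \sigma(p))$ via the $L_p$-centroid bodies --- as the content imported from \cite{Pao06}, just as the paper does by citation. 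Your closing remark is also apt: since the paper only applies the lemma to two-dimensional projections (Fact~\ref{fact:RU-lc}), the fixed-dimension exponential tail $\Prob[\snorm{2}{\vec x} \ge s] \le e^{-cs}$, which follows from Borell's lemma alone, already yields $t(\eps) = O(\log(1/\eps))$, so the full strength of Paouris' inequality is not actually needed there.
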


\begin{fact}\label{fact:RU-lc}
An isotropic log-concave distribution on $\R^d$ is
$(e 2^{17}, 1/9, c \log(1/\eps) + 2 c)$-bounded,
where $c>0$ is the absolute constant of Lemma~\ref{lem:Paouris}.
\end{fact}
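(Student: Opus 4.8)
The plan is to reduce the claim to the two-dimensional case and then read off each of the three properties of Definition~\ref{def:bounds} directly from the known density and concentration bounds for log-concave measures. The enabling observation is that the class of isotropic log-concave distributions is closed under orthogonal projection: if $X \sim \D_{\bx}$ is isotropic log-concave on $\R^d$ and $V$ is a $2$-dimensional subspace, then $\proj_V(X)$ has a log-concave density $\gamma_V$ on $V \cong \R^2$ (marginals of log-concave densities are log-concave, by Pr\'ekopa's theorem), it has zero mean since $\E[\proj_V(X)] = \proj_V(\E[X]) = \vec 0$, and, in an orthonormal basis of $V$, its covariance is $\proj_V \mathrm{Cov}(X) \proj_V^{\top} = I_2$. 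Hence $\proj_V(X)$ is itself isotropic log-concave on $\R^2$, and it suffices to verify the three conditions for an arbitrary isotropic log-concave density $\gamma$ on $\R^2$ with the stated parameters.

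For anti-anti-concentration and anti-concentration I would simply invoke Lemma~\ref{lem:LogConcaveDensityBounds} with $d = 2$: the lower bound gives $\gamma(\vec x) \geq 2^{-6 \cdot 2} = 2^{-12}$ whenever $\snorm{2}{\vec x} \leq 1/9$, and since $2^{-12} > 1/(e\,2^{17})$ this yields Property~1 with $R = 1/9$ and $U = e\,2^{17}$; the upper bound gives $\gamma(\vec x) \leq e\,2^{8 \cdot 2}\, 2^{2/2} = e\,2^{17}$ for all $\vec x$, which is exactly Property~2 with the same $U$.

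For concentration I would apply Paouris' inequality (Lemma~\ref{lem:Paouris}) in dimension $d = 2$. Given $\eps \in (0,1)$, I would choose the parameter $t = \log(1/\eps)/\sqrt{2} + \sqrt{2}$, which is $> 1$ for every $\eps \in (0,1)$, so the lemma applies; it then gives $\Prob_{\vec x \sim \gamma}[\snorm{2}{\vec x} \geq c\,t\,\sqrt{2}] \leq \exp(-t\sqrt{2}) = \eps\, e^{-2} < \eps$, and since $c\,t\,\sqrt{2} = c\log(1/\eps) + 2c$, this is precisely Property~3 with $t(\eps) = c\log(1/\eps) + 2c$. Assembling the three parts establishes the claimed $(e\,2^{17},\,1/9,\,c\log(1/\eps)+2c)$-boundedness.

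There is no substantive obstacle here beyond bookkeeping; the only two points that deserve a sentence of care are the projection step --- in particular verifying that isotropy is preserved, i.e.\ that the projection of the identity covariance is the identity on the subspace --- and the choice of the free parameter in Paouris' bound so that the resulting threshold lands exactly at $c\log(1/\eps)+2c$ while keeping that parameter strictly above $1$.
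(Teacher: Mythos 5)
Your proposal is correct and follows essentially the same route as the paper, which simply cites Lemma~\ref{lem:LogConcaveDensityBounds} (with $d=2$), Paouris' inequality (Lemma~\ref{lem:Paouris}), and the fact that two-dimensional marginals of isotropic log-concave distributions are isotropic log-concave; you have merely filled in the arithmetic (e.g.\ $2^{-12}\geq 1/(e\,2^{17})$, $e\,2^{16}\cdot 2 = e\,2^{17}$, and the choice $t=\log(1/\eps)/\sqrt{2}+\sqrt{2}$ in Paouris) that the paper leaves implicit.
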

\begin{proof}
Follows immediately from Lemma~\ref{lem:LogConcaveDensityBounds},
Lemma~\ref{lem:Paouris}, and the fact that the marginals of
isotropic log-concave distributions are also isotropic log-concave.
\end{proof}

Now we are going to prove that $s$-concave are also $(U,R,t)$ bounded for all
$s\geq -\frac{1}{2d+3}$. We will require the following lemma:

\begin{lemma}[Theorem 3 \cite{BZ17}]\label{lem:s_con}
Let $\density(\bx)$ be an isotropic $s$-concave distribution density on $\R^d$,
then the marginal on a subspace of $\R^2$ is $\frac{s}{1+(d-2)s}$-concave.
\end{lemma}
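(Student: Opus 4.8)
The plan is to obtain this as a consequence of the classical dimension-reduction inequality for power-concave functions, the Borell--Brascamp--Lieb inequality, which states that if $h:\R^{k}\times\R^{2}\to\R_{+}$ is $s$-concave and $1+ks>0$, then the partial integral $\vec y\mapsto\int_{\R^{k}}h(\vec z,\vec y)\,\d\vec z$ is $\tfrac{s}{1+ks}$-concave on $\R^{2}$. We will use this with $k=d-2$, which produces exactly the exponent $\tfrac{s}{1+(d-2)s}$ in the statement. The admissibility hypothesis $1+(d-2)s>0$ is guaranteed by the assumed range $s\geq-\tfrac{1}{2d+3}$: for $d\geq 3$ one has $-\tfrac{1}{2d+3}>-\tfrac{1}{d-2}$, so in fact $1+(d-2)s\geq\tfrac{d+5}{2d+3}>0$, and for $d=2$ there is nothing to integrate out and $\tfrac{s}{1+0\cdot s}=s$ matches trivially.

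First I would reduce the statement to integrating over a coordinate subspace. The notion of $s$-concavity is affine invariant: if $g$ is $s$-concave and $A$ is an invertible linear map, then $\vec x\mapsto g(A\vec x)$ is $s$-concave, and multiplying by a positive constant (such as the Jacobian factor $|\det A|^{-1}$ relating $\density$ to the density of its linear pushforward) again preserves $s$-concavity. Hence we may apply an orthogonal change of coordinates sending the given $2$-dimensional subspace $V$ to $\mathrm{span}(\vec e_{1},\vec e_{2})$; the transformed density is still isotropic and $s$-concave, and its projection onto $V$ is
\[
\gamma_{V}(x_{1},x_{2})=\int_{\R^{d-2}}\density(x_{1},x_{2},x_{3},\dots,x_{d})\,\d x_{3}\cdots\d x_{d},
\]
which is precisely a partial integral of the form above with $(\vec z,\vec y)=\big((x_{3},\dots,x_{d}),(x_{1},x_{2})\big)$.

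The final step is to apply Borell--Brascamp--Lieb. One may either invoke its $(d-2)$-fold version directly, or iterate the single-variable version ($s$-concave implies each $1$-dimensional marginal is $\tfrac{s}{1+s}$-concave, valid when $1+s>0$) a total of $d-2$ times, using that composing $s\mapsto\tfrac{s}{1+s}$ with itself $j$ times gives $s\mapsto\tfrac{s}{1+js}$; a one-line induction confirms $1+js>0$ at every intermediate step as long as $1+(d-2)s>0$. Either route yields that $\gamma_{V}$ is $\tfrac{s}{1+(d-2)s}$-concave.

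The only non-routine ingredient is the Borell--Brascamp--Lieb inequality itself, a genuine (and standard) convexity/measure-theoretic result; in the write-up it should simply be cited (Borell; Brascamp--Lieb; or the exposition in \cite{BZ17}), after which the argument is just the affine-invariance reduction above together with the parameter bookkeeping. The only minor facts to note in passing are that $\gamma_{V}$ is finite and locally integrable (immediate since $\density$ is a probability density) and that the projection of an isotropic distribution onto a subspace is again isotropic, so that Definition~\ref{def:bounds} can subsequently be applied to $\gamma_{V}$.
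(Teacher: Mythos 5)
Your argument is correct, but note that the paper does not prove this statement at all: it is imported as a black-box citation (Theorem~3 of \cite{BZ17}), so there is no ``paper proof'' to compare against line by line. What you supply is the classical dimension-reduction route: reduce by an orthogonal change of coordinates (harmless, since $s$-concavity is affine invariant and the Jacobian is a positive constant) to integrating out $d-2$ coordinates, then apply the Borell--Brascamp--Lieb inequality, either in its $(d-2)$-fold form or by iterating the one-dimensional marginal step with the bookkeeping $s\mapsto s/(1+js)$; your check that $1+js>0$ at every intermediate stage whenever $1+(d-2)s>0$ is the right induction. This is indeed how such marginal-concavity statements are established in the $s$-concave literature, so your write-up effectively replaces the citation with a self-contained proof modulo BBL itself, which is a reasonable thing to cite. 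One small point of care: the lemma as stated in the paper carries no explicit restriction on $s$, whereas your argument (necessarily) requires the admissibility condition $1+(d-2)s>0$; you correctly observe that this holds in the only regime where the paper uses the lemma, namely $s\ge -\tfrac{1}{2d+3}$ in Lemma~\ref{lem:LRU_s}, so nothing in the paper's application is affected. The remarks about isotropy of the projection are true but not needed for the concavity claim itself.
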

\begin{lemma}[Theorem 5 \cite{BZ17}]\label{lem:s_con2}
	Let $\bx$ come from an isotropic distribution over $\R^d$, with $s$-concave density. Then for every $t\geq 16$, we have $$\pr[\snorm{2}{\bx} >\sqrt{d}t]\leq \left(1-  \frac{c s t}{1+ds}\right)^{(1+ds)/s} \;,$$ where $c$ is an absolute constant.
\end{lemma}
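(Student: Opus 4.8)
The plan is to transfer the $s$-concavity of the \emph{density} of $\bx$ to a concavity property of its \emph{law} $\mu$, and then apply a Borell-type large-deviation estimate for such measures to Euclidean balls, using isotropy only to supply a base event of constant probability. The first point is that the target inequality is really a statement about a convex measure. By the Borell--Brascamp--Lieb inequality, a density on $\R^d$ that is $s$-concave with $s \geq -1/(2d+3)$ (in particular $s > -1/d$) induces a probability measure $\mu$ that is $\kappa$-concave with $\kappa = s/(1+ds)$; that is, $\mu(\lambda A + (1-\lambda) B) \geq \big(\lambda\,\mu(A)^{\kappa} + (1-\lambda)\,\mu(B)^{\kappa}\big)^{1/\kappa}$ for all Borel $A, B$ and $\lambda \in [0,1]$. (Consistency check: a $2$-dimensional marginal is the image of $\mu$ under a linear map, hence again $\kappa$-concave as a measure; converting back to a density exponent gives exactly the value $s/(1+(d-2)s)$ of Lemma~\ref{lem:s_con}.) Since $1/\kappa = (1+ds)/s$ and $\kappa t = st/(1+ds)$, the claimed bound is precisely of the form $\pr[\|\bx\|_2 > \sqrt d\, t] \leq (1 - c\,\kappa\, t)^{1/\kappa}$, which degenerates to the familiar subexponential tail $e^{-ct}$ as $\kappa \to 0$ (the log-concave case). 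If $s > 0$ the density has bounded support and the statement is trivial, so we may assume $s \leq 0$, i.e.\ $\kappa \leq 0$ --- the only regime in which the tail is genuinely heavy.

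Next, isotropy gives $\E[\|\bx\|_2^2] = d$, so Markov's inequality yields $\pr[\|\bx\|_2 > 16\sqrt d] \leq 1/256$; hence $A \eqdef 16\sqrt d\, B_2^d$ is a symmetric convex body with $\mu(A) \geq 1 - 1/256 \geq 1/2$. I then invoke the $\kappa$-concave version of Borell's lemma (Borell; see also Gu\'{e}don and \cite{BZ17}): if $\mu$ is $\kappa$-concave, $A$ is symmetric convex with $\mu(A) \geq 1/2$, and $\tau \geq 1$, then $\mu\big((\tau A)^c\big) \leq \big(1 - c'\,\kappa\,\tau\big)^{1/\kappa}$ for an absolute constant $c'$. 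As in Borell's original argument, this comes from a purely convex-geometric covering of $(\tau A)^c$ by a Minkowski combination of $\tau A$ and $A^c$ with weights $\Theta(1/\tau)$ and $1 - \Theta(1/\tau)$, fed into the $\kappa$-concavity inequality in place of log-concavity. Taking $\tau = t/16$ (legitimate since $t \geq 16$), noting $(\tau A)^c = \{\,\|\bx\|_2 > \sqrt d\, t\,\}$, and absorbing the factor $16$ into the constant, gives the asserted bound.

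I expect the main obstacle to be the second step: obtaining the $\kappa$-concave Borell lemma with the right shape $(1 - c\kappa\tau)^{1/\kappa}$ --- rather than a lossier polynomial $(C/\tau)^{1/|\kappa|}$ --- and tracking the absolute constants through the covering argument and the rescaling $\tau \mapsto t$, including checking that $\mu(16\sqrt d\, B_2^d) \geq 1/2$ is compatible with the threshold $t \geq 16$ appearing in the statement. An alternative, slightly lossier route avoids Borell's lemma entirely: a Khinchine-type reverse H\"{o}lder inequality for $\kappa$-concave measures (Gu\'{e}don) gives $\E[\|\bx\|_2^q]^{1/q} \leq C\sqrt d$ for all $0 < q < -1/\kappa$, and applying Markov's inequality to the $q$-th moment with $q$ optimized near $-1/\kappa$ reproduces the tail up to the absolute constants in the exponent, which already suffices for the downstream application to $s$-concave marginals.
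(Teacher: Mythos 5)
The paper does not actually prove this statement: it is imported verbatim as Theorem~5 of \cite{BZ17}, so there is no internal proof to compare against. Judged on its own merits, your reconstruction is essentially the standard route in the $s$-concave literature and it does go through. The two pillars are sound: (i) Borell's characterization (via Borell--Brascamp--Lieb) converts an $s$-concave density with $s>-1/d$ (guaranteed here since the paper only uses $s\geq -1/(2d+3)$) into a $\kappa$-concave law with $\kappa=s/(1+ds)$, and the target bound is exactly $(1-c\kappa t)^{1/\kappa}$; (ii) isotropy plus Markov gives $\mu(16\sqrt d\,B_2^d)\geq 255/256$, and the classical Borell covering $A^c \supseteq \frac{2}{\tau+1}(\tau A)^c+\frac{\tau-1}{\tau+1}A$ fed into $\kappa$-concavity yields, for $q=\mu((\tau A)^c)$, $a=\mu(A)$, $b=1-a$, the bound $q\leq \bigl[\bigl(b^{\kappa}-a^{\kappa}\bigr)/\lambda+a^{\kappa}\bigr]^{1/\kappa}$ with $\lambda=2/(\tau+1)$, and since $b\leq 1/256$ the bracket is at least $1+c'|\kappa|\tau$, which is precisely the non-lossy shape $(1-c'\kappa\tau)^{1/\kappa}$ you were worried about; rescaling $\tau=t/16$ only shrinks the absolute constant. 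Two small corrections: the Borell-type lemma cannot be stated with merely $\mu(A)\geq 1/2$ and an absolute constant (at $\mu(A)=1/2$ the bound degenerates), so state it with $\mu(A)$ bounded away from $1/2$ --- which your Markov step supplies --- and the $s>0$ case is not literally ``trivial'' (bounded support alone does not give the bound at the stated radius), but it needs no separate treatment: the same covering-plus-$\kappa$-concavity manipulation works for $\kappa>0$, where the bracket going nonpositive simply forces the tail probability to vanish.
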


\begin{lemma}[Theorem 9 \cite{BZ17}]\label{lem:s_con_1} Let $\gamma:\R^d\rightarrow\R_+$ be an isotropic $s$-concave density. Then

	(a) Let $D(s,d)=(1+\alpha)^{-1/\alpha}\frac{1+3\beta}{3+3\beta}$, where $\beta=\frac{s}{1+(d-1)s}$,  $\alpha=\frac{\beta}{1+\beta}$ and $\zeta=(1+\alpha)^{-\frac{1}{\alpha}}\frac{1+3\beta}{3+3\beta}$. For any $\vec x\in\R^d$ such that $\|\vec x\|\le D(s,d)$, we have $\gamma(\vec x)\ge \left(\frac{\|\vec x\|}{\zeta}((2-2^{-(d+1)s})^{-1}-1)+1\right)^{1/s}\gamma(0)$.

	(b) $\gamma (\vec x)\le \gamma(0)\left[\left(\frac{1+\beta}{1+3\beta}\sqrt{3(1+\alpha)^{3/\alpha}}2^{d-1+1/s}\right)^{s}-1\right]^{1/s}$ for every $\vec x$.

	(c) $(4e\pi)^{-d/2}\left[\left(\frac{1+\beta}{1+3\beta}\sqrt{3(1+\alpha)^{3/\alpha}}2^{d-1+\frac{1}{s}}\right)^{s}-1\right]^{-\frac{1}{s}}<\gamma(0)\le (2-2^{-(d+1)s})^{1/s}\frac{d\Gamma(d/2)}{2\pi^{d/2}\zeta^d}$.

	(d) $\gamma(\vec x)\le (2-2^{-(d+1)s})^{1/s}\frac{d\Gamma(d/2)}{2\pi^{d/2}\zeta^d}\left[\left(\frac{1+\beta}{1+3\beta}\sqrt{3(1+\alpha)^{3/\alpha}}2^{d-1+1/s}\right)^{s}-1\right]^{1/s}$ for every $\vec x$.

\end{lemma}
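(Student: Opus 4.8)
The statement is a purely analytic fact about isotropic $s$-concave densities, so the plan is to argue directly from the definition together with the two normalizations $\int\gamma=1$ and $\E[\vec x\vec x^\top]=I$, and from the tail bound already available as Lemma~\ref{lem:s_con2}. Recall that for $s>0$ the function $\gamma^s$ is concave on its (compact) support, while for $s<0$ it is convex on all of $\R^d$; in both cases $\gamma((1-\lambda)\vec x+\lambda\vec y)\ge\big((1-\lambda)\gamma(\vec x)^s+\lambda\gamma(\vec y)^s\big)^{1/s}$. One has to carry $\mathrm{sign}(s)$ through every step: the one–dimensional inequalities reverse, and, more importantly, the effective support is compact when $s>0$ but only polynomially concentrated when $s<0$ — which is exactly why Lemma~\ref{lem:s_con2} will be needed. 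The auxiliary parameters $\beta=\frac{s}{1+(d-1)s}$, $\alpha=\frac{\beta}{1+\beta}$ appearing in the statement are what come out when $\gamma^s$ is restricted to a segment and a Borell-type monotonicity is applied to the resulting one–dimensional $s$-concave function.

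For part (a), I would fix $\vec x$ with $\|\vec x\|\le D(s,d)$ and restrict $\gamma$ to the ray from $\vec 0$ through $\vec x$; this restriction is again $s$-concave and equals $\gamma(\vec 0)$ at the origin. Applying the $s$-concave inequality on the segment joining $\vec 0$ to the far boundary of the one–dimensional support — whose distance from the origin is at least a definite multiple of $\zeta$, and here the factor $(2-2^{-(d+1)s})$ enters, bounding how short that support can be relative to $\gamma(\vec 0)$ — gives $\gamma(\vec x)^s\ge\big(1-\tfrac{\|\vec x\|}{\zeta}\big)\gamma(\vec 0)^s+(\text{nonneg.})$, and taking the $1/s$ power (monotone or order-reversing according to $\mathrm{sign}(s)$, so that the final inequality comes out in the stated direction) yields $\gamma(\vec x)\ge\big(\tfrac{\|\vec x\|}{\zeta}((2-2^{-(d+1)s})^{-1}-1)+1\big)^{1/s}\gamma(\vec 0)$.

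Parts (b), (c), (d) are intertwined. For the upper bound on $\sup\gamma$ (hence on $\gamma(\vec 0)$), use that the super-level sets $\{\gamma\ge\lambda\}$ are convex, so $1=\int\gamma$ confines most of the mass to the convex body $\{\gamma\ge\tfrac12\sup\gamma\}$, of volume $\le 2/\sup\gamma$; the second-moment identity $\int\|\vec x\|^2\gamma\,d\vec x=d$, together with Lemma~\ref{lem:s_con2} to preclude the variance being supplied by remote mass when $s<0$, forces this body to be large enough that $\sup\gamma\le(2-2^{-(d+1)s})^{1/s}\frac{d\,\Gamma(d/2)}{2\pi^{d/2}\zeta^d}$, the right half of (c). For the lower bound on $\gamma(\vec 0)$, combine (a) with a radius $r=O(\sqrt d)$ from Lemma~\ref{lem:s_con2} at which the mass outside $B_r$ is at most $\tfrac12$; then $\tfrac12\le\gamma(\vec 0)\cdot[\,\cdots\,]^{1/s}\cdot\mathrm{vol}(B_r)$, and $\mathrm{vol}(B_r)=O(\sqrt d)^d\pi^{d/2}/\Gamma(d/2+1)$ with $\Gamma(d/2+1)\asymp(d/2e)^{d/2}$ gives $\gamma(\vec 0)\ge(4e\pi)^{-d/2}[\,\cdots\,]^{-1/s}$. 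The ratio bound (b), $\gamma(\vec x)\le\gamma(\vec 0)[\,\cdots\,]^{1/s}$, then follows by bounding $\gamma(\vec x)\le\sup\gamma$ and dividing the two absolute estimates above, and (d) is immediate from (b) and the $\gamma(\vec 0)$ upper bound in (c).

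The main obstacle is not a single hard inequality but uniformity in $\mathrm{sign}(s)$ and the bookkeeping of constants so that the final forms $D(s,d)$, $\zeta$, $(4e\pi)^{-d/2}$, $[(\cdots)^s-1]^{1/s}$ come out exactly. The genuinely delicate point is that an isotropic $s$-concave density need not attain its maximum at the origin, so (b) is a real ratio bound rather than a triviality: proving it cleanly hinges on matching the $\sup\gamma$ and $\gamma(\vec 0)$ estimates up to the correct dimension-dependent constant, and that is the only place where the tail bound of Lemma~\ref{lem:s_con2} is genuinely indispensable, namely to tame the $s<0$ tails in the second-moment and mass-confinement steps.
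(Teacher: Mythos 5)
First, note that the paper does not prove this statement at all: it is quoted verbatim as Theorem 9 of \cite{BZ17} and used as a black box in the proof of Lemma~\ref{lem:LRU_s}, so there is no internal argument to compare yours against; what you have written is an attempted re-derivation of the cited result, and it has to stand on its own.

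As a proof it has a genuine gap, centered on part (b) and the way you interleave it with (c). Your lower bound on $\gamma(\vec 0)$ (the left half of (c)) needs a pointwise upper bound of the form $\gamma(\vec x)\le C\,\gamma(\vec 0)$ valid on a ball of radius $r=\Theta(\sqrt d)$, so that $\tfrac12\le\int_{B_r}\gamma\le C\,\gamma(\vec 0)\,\mathrm{vol}(B_r)$; that bound is exactly statement (b), not statement (a), which is a \emph{lower} bound and holds only for $\|\vec x\|\le D(s,d)=O(1)$. But you then propose to obtain (b) by ``dividing the two absolute estimates,'' i.e.\ the upper bound on $\sup\gamma$ from (c)-right by the lower bound on $\gamma(\vec 0)$ from (c)-left. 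This is circular, and even ignoring the circularity it cannot give the claimed inequality: the quotient of those two estimates is
$(2-2^{-(d+1)s})^{1/s}\tfrac{d\Gamma(d/2)}{2\pi^{d/2}\zeta^d}\cdot(4e\pi)^{d/2}\bigl[(\tfrac{1+\beta}{1+3\beta}\sqrt{3(1+\alpha)^{3/\alpha}}2^{d-1+1/s})^{s}-1\bigr]^{1/s}$,
which exceeds the constant in (b) by factors exponential in $d$, so the stated ratio bound $\gamma(\vec x)\le\gamma(\vec 0)[\cdots]^{1/s}$ would not follow. In \cite{BZ17} part (b) is proved directly (a one-dimensional restriction through the origin and the maximizer, using properties of one-dimensional $s$-concave functions), and only then do (c) and (d) follow by the mass/volume and composition arguments you describe. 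A secondary but real issue is part (a): the assertions that the support along the ray extends a definite multiple of $\zeta$ beyond the origin and that the density there is controlled relative to $\gamma(\vec 0)$ by the factor $(2-2^{-(d+1)s})$ are precisely the nontrivial content (they rest on the marginal-concavity machinery, e.g.\ Lemma~\ref{lem:s_con}, and isotropy), and in your plan they are stated rather than derived, so the exact constants $D(s,d)$, $\zeta$, $(2-2^{-(d+1)s})$ are not actually accounted for.
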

\begin{lemma}\label{lem:LRU_s}
Any isotropic $s$-concave distribution  on $\R^d$ with $s\geq -\frac{1}{2d+3}$, is $\big(\Theta(1),\Theta(1),c/\eps^{1/6}\big)$-bounded where $c$ is an absolute constant.
\end{lemma}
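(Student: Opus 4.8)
By Definition~\ref{def:bounds}, it suffices to verify the three required properties for the density $\gamma_V$ of the projection of $\D_{\bx}$ onto an \emph{arbitrary} $2$-dimensional subspace $V$ (we take $d\ge2$ so that such a $V$ exists; the case $s\ge0$ is already covered by Fact~\ref{fact:RU-lc}, since $\D_{\bx}$ is then log-concave and $c\log(1/\eps)+2c = O(\eps^{-1/6})$, but it is also subsumed below). The plan is to first reduce to a single, $d$-independent representative of the $s$-concave family. By Lemma~\ref{lem:s_con}, $\gamma_V$ is $s'$-concave with $s'=\tfrac{s}{1+(d-2)s}$, and being a $2$-dimensional projection of an isotropic distribution it is itself isotropic on $V\cong\R^2$. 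Since $s\ge-\tfrac1{2d+3}$ and $d\ge2$, a short computation gives $1+(d-2)s\ge\tfrac{d+5}{2d+3}>0$, hence $|s'|\le\tfrac1{d+5}\le\tfrac17$ and in particular $s'\ge-\tfrac17$. By the standard monotonicity of $s$-concavity --- a $t$-concave density is $t'$-concave for every $t'\le t$ --- it follows that $\gamma_V$ is $(-\tfrac17)$-concave. This is the crucial step: from here on we apply Lemmas~\ref{lem:s_con_1} and~\ref{lem:s_con2} with the \emph{fixed} parameters $(s,d)=(-\tfrac17,2)$, so that $\beta=-\tfrac16$, $\alpha=-\tfrac15$, $\zeta=D(-\tfrac17,2)$, and all density prefactors appearing there become concrete absolute constants, free of $d$ and $\eps$.

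With the reduction in hand, the three properties follow by substitution. For anti-anti-concentration: part~(c) of Lemma~\ref{lem:s_con_1} lower bounds $\gamma_V(\vec 0)$ by an absolute constant, and part~(a) then shows that for all $\vec x$ with $\snorm{2}{\vec x}\le R:=D(-\tfrac17,2)$ the factor multiplying $\gamma_V(\vec 0)$ is at least the positive absolute constant $(2-2^{3/7})^{7}$ (on this ball the base of the relevant $(-7)$-th power lies in $[1,(2-2^{3/7})^{-1}]$), whence $\gamma_V(\vec x)\ge1/U_1$ with $U_1,R=\Theta(1)$. For anti-concentration: part~(d) of Lemma~\ref{lem:s_con_1} bounds $\gamma_V(\vec x)$ above by an absolute constant $U_2$ for all $\vec x$. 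Taking $U=\max(U_1,U_2)=\Theta(1)$ settles properties~1 and~2. For concentration: Lemma~\ref{lem:s_con2} with $(s,d)=(-\tfrac17,2)$ gives, for $t\ge16$, $\pr_{\vec x\sim\gamma_V}[\snorm{2}{\vec x}>\sqrt2\,t]\le(1+ct/5)^{-5}$; solving $(1+ct/5)^{-5}\le\eps$ for $t$, folding in the $\sqrt2$ and the constraint $t\ge16$, and using Chebyshev's inequality together with isotropy for $\eps$ bounded away from $0$, yields a bound $t(\eps)=O(\eps^{-1/6})$ of the claimed form.

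I expect the only genuinely technical part to be the bookkeeping in the second paragraph: confirming that each of the rather messy closed forms in Lemma~\ref{lem:s_con_1} and Lemma~\ref{lem:s_con2}, evaluated at $(s,d)=(-\tfrac17,2)$, is a finite positive absolute constant (i.e., that nothing degenerates), and pushing the $\sqrt d\mapsto\sqrt2$ tail computation through to the stated polynomial rate. The reduction in the first paragraph is exactly what makes everything $d$- and $\eps$-free; without it one would have to contend with the apparent singularities of these formulas as $s\to0$ and with spurious $d$-dependence entering both through $s'=\tfrac{s}{1+(d-2)s}$ and through the $\sqrt d$ factor in Lemma~\ref{lem:s_con2}.
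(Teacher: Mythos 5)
Your route is essentially the paper's: project onto an arbitrary $2$-dimensional subspace, invoke Lemma~\ref{lem:s_con} to get an isotropic $s'$-concave marginal with $s'=\frac{s}{1+(d-2)s}$ bounded below by $-1/7$, and then read off anti-anti-concentration and anti-concentration from Lemma~\ref{lem:s_con_1} and the tail from Lemma~\ref{lem:s_con2}. Your one real twist --- downgrading $\gamma_V$ to exactly $(-\tfrac17)$-concave by monotonicity of $s$-concavity and evaluating all the closed forms at the single point $(s,d)=(-\tfrac17,2)$ --- is a tidy alternative to what the paper actually does, namely keeping the true $s'$ and bounding the messy expressions uniformly over $s'\geq-1/7$ (obtaining $\gamma_V\geq 10^{-7}$, $R\geq 0.065$, $\gamma_V\leq 3.3\cdot 10^7$). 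For properties 1 and 2 of Definition~\ref{def:bounds} your substitution into parts (a), (c), (d) is correct, including the $(2-2^{3/7})^{7}$ factor on the ball of radius $D(-\tfrac17,2)=\zeta$. (Small slip: $|s'|\leq \tfrac{1}{d+5}$ is false for $s>0$, e.g.\ $d=2$, but you only use the lower bound $s'\geq-\tfrac17$, so this is harmless.)

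The gap is in the concentration step. With $(s,d)=(-\tfrac17,2)$ the exponent in Lemma~\ref{lem:s_con2} is $(1+2s)/s=-5$, so solving $(1+ct/5)^{-5}\leq\eps$ gives $t=\Theta(\eps^{-1/5})$, which is \emph{not} $O(\eps^{-1/6})$ --- it grows faster as $\eps\to0$. As written, your argument therefore only establishes $\big(\Theta(1),\Theta(1),O(\eps^{-1/5})\big)$-boundedness, weaker than the stated lemma. The cause is that your blanket downgrade to $-\tfrac17$ throws away the stronger exponent available when $d\geq 3$: there $s'\geq -\tfrac{1}{d+5}\geq-\tfrac18$, so downgrading only to $-\tfrac18$ (or keeping the true $s'$) gives exponent $(1+2s')/s'\leq-(d+3)\leq-6$ and hence $t(\eps)=O(\eps^{-1/6})$; you should apply the monotonicity trick with that value in the tail step while keeping $-\tfrac17$ for the density bounds if you like. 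For $d=2$ the hypothesis only guarantees $s\geq-\tfrac17$ and the exponent is genuinely $-5$; note the paper's own proof also instantiates the tail at $s'=-\tfrac17$, so that residual mismatch with the stated $\eps^{-1/6}$ is inherited from the paper rather than introduced by you, but the $\eps^{-1/5}$ versus $\eps^{-1/6}$ arithmetic in your last sentence is an error you should fix rather than reproduce.
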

\begin{proof}
Set $\Gamma=\big((\frac{1+2s}{1+4s}\sqrt{3(1+s/(1+2s))^{(3 +6s)/s}}2^{1+1/s} )^s -1\big)^{1/s}$.
From Lemma~\ref{lem:s_con_1}, we have
\begin{enumerate}
\item For any $\vec x\in \R^2$ such that $\snorm{2}{\vec x}\leq
            (1+\frac{s}{1+2s})^{-\frac{1+2s}{s}}(\frac{1+4s}{3+6s})$, we have
            $\density(\bx)\geq \frac{1}{4e\pi \Gamma}$.
\item For any $\bx\in \R^2$, we have:
$\density(\bx)\leq \frac{(2^{3s+1}-1)^{1/s} (3+6s)^2\Gamma }{4 \pi (1+4s)^2
(\frac{1+3s}{1+2s})^{- \frac{1+2s}{s}}} $.
\end{enumerate}
From Lemma~\ref{lem:s_con}, we have that the marginals  of an isotropic $s$-concave distribution on $\R^d$, on a $2$-dimensional subspace,
are $s'$-concave where $s'=\frac{s}{1+(d-2)s}$. Using $s\geq -\frac{1}{2d+3}$, for $d\geq 3$, we have $s'>-\frac{1}{8}$ and when $d=2$, we have $s'=s\geq -1/7$. Thus, the value of $s'$ is lower bounded by $-1/7$.
To find the values $(U,R)$, we need to find a lower bound and an upper bound on density.
From the expression of $\Gamma$, we observe that for $s'\geq-1/7$ it holds $\Gamma<34\cdot 10^3$. Therefore, we obtain the following bounds
\begin{align*}
\gamma(\vec x) &\geq \frac{1}{4e\pi \Gamma}> \frac{1}{ 10^7} \;,\\
R&= \left(1+\frac{s'}{1+2s'}\right)^{-\frac{1+2s}{s'}}\frac{1+4s'}{3+6s'} \geq 0.065 \;,\\
\gamma(\vec x)&\leq   \frac{(2^{3s'+1}-1)^{1/s'} (3+6s')^2
\Gamma}{4 \pi (1+4s')^2  (\frac{1+3s'}{1+2s'})^{- \frac{1+2s'}{s'}}}< 3.3\cdot 10^7\;,
\end{align*}
where we simplified each expression using the bounds of $s'$.
 From Lemma~\ref{lem:s_con2} we get tail bounds, by taking the appropriate $s'$
 that maximizes the error in the tail bound (which is $s'=-1/7$).
This completes the proof.
\end{proof}
 \section{Omitted Proofs from Section~\ref{sec:alg}} \label{app:opt}
In Section~\ref{app:sgd}, we establish the convergence
properties of projected SGD that we require. Even though this lemma
should be folklore, we did not find an explicit reference.
In Section~\ref{app:smooth-lemma}, we establish the smoothness
of our non-convex surrogate function.

\subsection{Proof of Lemma~\ref{lem:PSGD}} \label{app:sgd}

For convenience, we restate the lemma here.

\begin{customlem}{\ref{lem:PSGD}}[PSGD]
  \textit{
  Let $f : \R^d \mapsto \R$ with $f( \bw) = \E_{\vec z\sim \D}[g(\vec z, \vec w)]$
  for some function $g:\R^d \times \R^d \mapsto \R$. Assume that for any
  vector $\vec w$, $g(\cdot,\bw)$ is positive homogeneous of degree-$0$ on $\bw$.
  Let $\mathcal{W} = \{\vec w \in \R^d: \snorm{2}{\vec w} \geq 1 \}$ and assume
  that $f, g$ are continuously differentiable functions on $\mathcal{W}$.
  Moreover, assume that $|f(\vec w)| \leq R$, $\nabla_{\bw} f(\vec w)$ is
  $L$-Lipschitz on $\mathcal{W}$, $\E_{\vec z \sim \D} \left[\snorm{2}{\nabla_{\vec w} g(\vec z, \vec w)}^2\right] \leq B$
  for all $\vec w \in \mathcal{W}$.  After $T$ iterations the output
  $({\vec w}^{(1)}, \ldots, {\vec w}^{(T)})$ of Algorithm~\ref{alg:PSGD} satisfies
  \[\E_{{\vec z}^{(1)}, \ldots, {\vec z}^{(T)} \sim \D}
    \lp[ \frac{1}{T} \sum_{i=1}^T \snorm{2}{\nabla_{\vec w} f({\vec w}^{(i)})}^2 \rp] \leq \sqrt{\frac{LBR}{2 T}} \;.
  \]
  If, additionally, $\snorm{2}{\E_{\vec z \sim \D} [\nabla_{\vec w} g(\vec z, \vec w)]}^2 \leq C$
  for all $\vec w \in \mathcal{W}$, we have that with
  $T = (2 L B R+ 8 C^2 \log(1/\delta))/\eps^4$ it holds
  $\min_{i=1,\ldots, T} \snorm{2}{\nabla_{\vec w} f(\vec w^{(i)})} \leq \eps,$
  with probability at least $1-\delta$.
}
\end{customlem}

\begin{proof}
  Consider the update ${\vec v}^{(i)} = {\vec w}^{(i-1)} - \beta \nabla g( \vec z^{(i)},\vec
  w^{(i-1)})$ at iteration $i$ of Algorithm \ref{alg:PSGD}.  The projection step on
  the unit sphere (line 6 of Algorithm~\ref{alg:PSGD}) ensures that
  $\snorm{2}{{\vec w}^{(i-1)}} = 1$.  Observe that, since $g(\vec z,{\vec w})$ is
  constant in the direction of ${\vec w}$, we have that $\nabla_{{\vec w}} g(\vec z,\vec
  w^{(i-1)})$ is perpendicular to ${\vec w}^{(i-1)}$.  Therefore, by the Pythagorean theorem,
  $\snorm{2}{{\vec v}^{(i)}}^2 = \snorm{2}{{\vec w}^{(i-1)}}^2 + \beta^2 \snorm{2}{\nabla g(\vec z^{(i)},\vec
  w^{(i-1)})}^2 > 1$ which implies that ${\vec v}^{(i)} \in \W$.
  Observe that the line that connects ${\vec v}^{(i)}$ and ${\vec w}^{(i-1)}$ is also
  contained in $\W$.  Therefore, we have
  \begin{align*}
    f({\vec v}^{(i)}) - f({\vec w}^{(i-1)})
&=\dotp{\nabla_{{\vec w}} f({\vec w}^{(i-1)})}{{\vec v}^{(i)} - {\vec w}^{(i-1)}}
\\
&+
\int_0^1 \dotp{ \nabla_{{\vec w}} f({\vec w}^{(i-1)} + t ({\vec v}^{(i)} - {\vec w}^{(i-1)}))
- \nabla_{{\vec w}} f({\vec w}^{(i-1)})}{({\vec v}^{(i)} - {\vec w}^{(i-1)})} \d t
\\
&\leq
-\beta \dotp{\nabla f({\vec w}^{(i-1)})}{\nabla_{{\vec w}} g(\vec z^{(i)},{\vec w}^{(i-1)})}
+ \frac{\beta^2 L}{2} \snorm{2}{\nabla_{{\vec w}} g(\vec z^{(i)},{\vec w}^{(i-1)})}^2.
  \end{align*}
  Observe now that, since $f$ does not depend on the length of its argument,
  we have $f({\vec v}^{(i)}) = f({\vec w}^{(i)})$ and therefore
  \begin{align*}
    f({\vec w}^{(i)}) - f({\vec w}^{(i-1)})
    \leq
    -\beta \dotp{\nabla f({\vec w}^{(i-1)})}{\nabla_{{\vec w}} g(\vec z^{(i)},{\vec w}^{(i-1)})}
    + \frac{\beta^2 L}{2} \snorm{2}{\nabla_{{\vec w}} g(\vec z^{(i)},{\vec w}^{(i-1)})}^2.
  \end{align*}
  Conditioning on the previous samples $\vec z^{(1)}, \ldots, \vec z^{(i-1)}$ we have
  \begin{align*}
    \E_{\vec z^{(i)}}[f({\vec w}^{(i)}) - f({\vec w}^{(i-1)}) | \vec z^{(1)},\ldots, \vec z^{(i-1)}]
&\leq
\beta \snorm{2}{\nabla_{{\vec w}} f({\vec w}^{(i-1)})}^2
+ \frac{\beta^2 L}{2} \E_{\vec z^{(i)}}\left[\snorm{2}
{\nabla_{{\vec w}} g(\vec z^{(i)},{\vec w}^{(i-1)})}^2\right]
\\
&\leq
-\beta \snorm{2}{\nabla_{{\vec w}} f({\vec w}^{(i-1)})}^2
+ \frac{\beta^2 L B}{2}.
  \end{align*}
  Rearranging the above inequality, taking the average over $T$ iterations and
  using the law of total expectation, we obtain that
  by setting $\beta = \sqrt{2 R/( L B T)}$.
To get the high-probability version, we set
  $$
  S_T(\vec w^{(1)},\ldots,\vec w^{(T)}) = (1/T) \sum_{i=1}^T \snorm{2}{\nabla f(\vec w^{(i)})}^2.
  $$
  Notice that with $T = 2 L B R/\eps^4$ from the previous argument we obtain
  that
  $ \E[S_T(\vec w^{(1)}, \ldots, \vec w^{(T)})] \leq \eps^2/2$.
  Observe that
  \begin{align*}
    \left|S_T(\vec w^{(1)}, \ldots, \vec w^{(i)}, \ldots, \vec w^{(T)})
    -
    S_T(\vec w^{(1)}, \ldots, {\vec w^{(i)}}', \ldots, \vec w^{(T)})
    \right|
&\leq
\frac{\lp|\snorm{2}{\nabla f(\vec w^{(i)})}^2 - \snorm{2}{\nabla f({\vec w^{(i)}}')}^2 \rp|}{T}
\\
&\leq \frac{2 C}{T}\;.
  \end{align*}
  \begin{lemma}[Theorem 2.2 of \cite{DL:01}]\label{lem:bounded_difference}  Suppose that $X_1,\ldots X_d \in \cal X$ are independent random variables, and let $f:{\cal X}^d \mapsto \R$. Let $c_1,\ldots,c_n$ satisfy
    $$\sup_{x_1,\ldots ,x_d , x_i'} |f(x_1,\ldots x_i,\ldots x_d) - f(x_1,\ldots x_i',\ldots x_d) |\leq c_i $$
    for $i\in[d]$. Then
    \begin{align*}
      \pr[ f(X) - \E[f(X)]\geq  t]
      \leq \exp\bigg(- 2t^2 /\sum_{i=1}^d c_i^2\bigg).
    \end{align*}
  \end{lemma}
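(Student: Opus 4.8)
This is the bounded-differences (McDiarmid) inequality, and the plan is to prove it by the standard Doob-martingale argument. Write $X = (X_1,\dots,X_d)$ and define $V_i = \E[f(X) \mid X_1,\dots,X_i]$ for $i = 0,1,\dots,d$, so that $V_0 = \E[f(X)]$ and $V_d = f(X)$; the sequence $(V_i)$ is a martingale with respect to the filtration generated by $X_1,\dots,X_d$. I want to bound $\pr[V_d - V_0 \ge t]$, and for this it suffices to control the conditional moment generating function of each increment $D_i = V_i - V_{i-1}$ and then apply the exponential Markov inequality.

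The crux is to show that, conditioned on $X_1,\dots,X_{i-1}$, the increment $D_i$ is a mean-zero random variable supported on an interval of length at most $c_i$. This is where independence enters: since $X_{i+1},\dots,X_d$ are independent of $X_1,\dots,X_i$, we have $V_i = \psi_i(X_1,\dots,X_i)$ with $\psi_i(x_1,\dots,x_i) = \E[f(x_1,\dots,x_i,X_{i+1},\dots,X_d)]$, so conditioning on $X_i = x_i$ just fixes that coordinate and averages the remaining ones against their (unchanged) product law. Integrating the hypothesis $\sup |f(\dots,x_i,\dots) - f(\dots,x_i',\dots)| \le c_i$ gives $|\psi_i(x_1,\dots,x_{i-1},x_i) - \psi_i(x_1,\dots,x_{i-1},x_i')| \le c_i$. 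Since $D_i = \psi_i(X_1,\dots,X_i) - \E[\psi_i(X_1,\dots,X_i) \mid X_1,\dots,X_{i-1}]$ and $\E[D_i \mid X_1,\dots,X_{i-1}] = 0$, we conclude that $D_i$ given $X_1,\dots,X_{i-1}$ has mean zero and range of length at most $c_i$.

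Next I would invoke Hoeffding's lemma: a mean-zero random variable taking values in an interval of length $\ell$ has moment generating function bounded by $e^{\lambda^2 \ell^2/8}$ for every $\lambda$. Applied conditionally, this yields $\E[e^{\lambda D_i} \mid X_1,\dots,X_{i-1}] \le e^{\lambda^2 c_i^2/8}$. Peeling the increments off one at a time with the tower property, $\E[e^{\lambda(V_d - V_0)}] = \E\big[ e^{\lambda \sum_{j<d} D_j}\, \E[e^{\lambda D_d} \mid X_1,\dots,X_{d-1}] \big] \le e^{\lambda^2 c_d^2/8}\, \E[e^{\lambda \sum_{j<d} D_j}] \le \dots \le \exp\big( \tfrac{\lambda^2}{8} \sum_{i=1}^d c_i^2 \big)$. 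Finally Markov's inequality gives $\pr[f(X) - \E[f(X)] \ge t] \le \exp\big( -\lambda t + \tfrac{\lambda^2}{8}\sum_i c_i^2 \big)$, and choosing $\lambda = 4t / \sum_i c_i^2$ yields the claimed bound $\exp(-2t^2/\sum_i c_i^2)$.

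I expect the interval-length claim of the second paragraph to be the only real obstacle: the bounded-differences hypothesis is about $f$ itself, and one must use the independence of the coordinates to transfer it to the martingale increments $D_i$. Everything else is the routine Azuma recipe (Hoeffding's lemma plus the exponential-Markov optimization), and the stated one-sided bound follows; a two-sided version, if wanted, comes from applying the same estimate to $-f$.
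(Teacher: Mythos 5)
Your proof is correct: it is the standard Doob-martingale (Azuma/McDiarmid) argument, with the one genuinely non-trivial step --- transferring the bounded-differences hypothesis on $f$ to the conditional increments $D_i$ via independence of the coordinates --- handled properly, and the final optimization $\lambda = 4t/\sum_i c_i^2$ does give the stated constant $2$. Note that the paper does not prove this lemma at all; it imports it by citation (Theorem 2.2 of \cite{DL:01}), and your argument is precisely the textbook proof of that cited result, so there is nothing to reconcile beyond the paper's typo $c_1,\ldots,c_n$ versus $i\in[d]$.
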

  \noindent Now using Lemma~\ref{lem:bounded_difference}, we obtain that
  $$
  \pr[ S_T(\vec w^{(1)}, \ldots, \vec w^{(T)}) - \E[ S_T(\vec w^{(1)}, \ldots, \vec w^{(T)})] > t]
  \leq \exp(- t^2 T/(2 C^2)).
  $$
  Choosing $T \geq 2 L BR/\eps^4 + 8 C^2 \log(1/\delta)/\eps^4$ and combining the above bounds,
  gives us that with probability at least $1-\delta$, it holds
  $ S_T(\vec w^{(1)}, \ldots, \vec w^{(T)}) \leq \eps^2$.
  Since the minimum element is at most the average, we obtain that with probability at least
  $1-\delta$ it holds
  $$
  \min_{i\in [T]} \snorm{2}{\nabla f(\vec w^{(i)})} \leq \eps \;.
  $$
 This completes the proof.
\end{proof}

\subsection{Proof of Lemma~\ref{lem:sigmoid_smoothness}} \label{app:smooth-lemma}

We start with the following more general lemma from which 
we can deduce Lemma~\ref{lem:sigmoid_smoothness}.

\begin{lemma}[Objective Properties] \label{lem:high_dimensional_objective_properties}
Let $\D$ be a distribution on $\R^d \times \{-1, +1\}$ such that the marginal
$\D_{\bx}$ on $\R^d$ is in isotropic position. 
Let $g(\vec x, y,\vec w) = f(-y \dotp{{\vec w}}{\vec x})$ and 
\[ \SL({\vec w}) = \E_{(\vec x, y) \sim \D} \lp[g( \vec x, y,\vec w) \rp] \;.  \]
Assume that $f$ is a twice differentiable function on $\R$ such that 
$|f(t)| \leq R$, $|f'(t)| \leq B$, and $f''(t) \leq  K$ for all $t \in \R$.
Then $\SL({\vec w})$ is continuously differentiable, 
$|\SL({\vec w})| \leq R$ for all ${\vec w}$ in $\W = \{{\vec w} : \snorm{2}{{\vec w}}\geq 1 \}$,
$\E_{(\bx, y) \sim \D}[\snorm{2}{\nabla_{{\vec w}} g( \vec x, y,\vec w)}^2] \leq 4 B^2 d$,
$\snorm{2}{\E_{(\bx, y) \sim \D}[\nabla_{{\vec w}} g( \vec x, y,\vec w)]}^2 \leq 3 B^2$,
and $\nabla_{{\vec w}} \SL({\vec w})$ is $(6B + 4K)$-Lipschitz.
\end{lemma}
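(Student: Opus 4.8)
The plan is to reduce everything to the first two $\vec w$-derivatives of $\SL$ and control them using only the zero-th, first, and second moments of the isotropic marginal $\D_{\bx}$. Write $\ell(\vec w,\vec x)=\dotp{\vec w}{\vec x}/\snorm{2}{\vec w}$, so that (matching $\SL$ as in Lemma~\ref{lem:sigmoid_smoothness}) $g(\vec x,y,\vec w)=f\!\left(-y\,\ell(\vec w,\vec x)\right)$ and $\SL(\vec w)=\E_{(\vec x,y)\sim\D}[g(\vec x,y,\vec w)]$. From Appendix~\ref{app:gradient_formula}, $\nabla_{\vec w}\ell(\vec w,\vec x)=\frac{1}{\snorm{2}{\vec w}}\proj_{\vec w^{\perp}}(\vec x)$, which is orthogonal to $\vec w$ and has norm at most $\snorm{2}{\vec x}/\snorm{2}{\vec w}$; a routine second differentiation gives
\[
\nabla^2_{\vec w}\ell(\vec w,\vec x)= -\frac{\vec x\vec w^{\top}+\vec w\vec x^{\top}}{\snorm{2}{\vec w}^{3}} + \frac{3\dotp{\vec w}{\vec x}}{\snorm{2}{\vec w}^{5}}\vec w\vec w^{\top} - \frac{\dotp{\vec w}{\vec x}}{\snorm{2}{\vec w}^{3}}\,I .
\]
Differentiating $g$ and using $y^2=1$ yields $\nabla_{\vec w}g=-y\,f'(-y\ell)\,\nabla_{\vec w}\ell$ and $\nabla^2_{\vec w}g=f''(-y\ell)\,(\nabla_{\vec w}\ell)(\nabla_{\vec w}\ell)^{\top}-y\,f'(-y\ell)\,\nabla^2_{\vec w}\ell$. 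Since $|f|,|f'|,|f''|$ are uniformly bounded and $\E_{\D_{\bx}}[\snorm{2}{\vec x}^2]=d$ by isotropy, on $\W$ the quantities $\snorm{2}{\nabla_{\vec w}g}$ and $\snorm{\mathrm{op}}{\nabla^2_{\vec w}g}$ are dominated by an integrable function, so we may differentiate under the expectation: $\SL$ is $C^2$ on $\W$ with $\nabla_{\vec w}\SL=\E[\nabla_{\vec w}g]$, $\nabla^2_{\vec w}\SL=\E[\nabla^2_{\vec w}g]$, and $|\SL(\vec w)|\le\E[|f(-y\ell)|]\le R$.

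The two moment bounds are then immediate. For the second moment, $\snorm{2}{\nabla_{\vec w}g}\le|f'(-y\ell)|\,\snorm{2}{\nabla_{\vec w}\ell}\le B\,\snorm{2}{\vec x}/\snorm{2}{\vec w}\le B\snorm{2}{\vec x}$ on $\W$, hence $\E[\snorm{2}{\nabla_{\vec w}g}^2]\le B^2\E[\snorm{2}{\vec x}^2]=B^2 d\le 4B^2 d$. For the mean-gradient bound, $\E[\nabla_{\vec w}g]=\tfrac{1}{\snorm{2}{\vec w}}\proj_{\vec w^{\perp}}\E[-y\,f'(-y\ell)\,\vec x]$, and for any unit $\vec u$, Cauchy--Schwarz together with isotropy ($\E[\dotp{\vec u}{\vec x}^2]=1$) gives $|\E[-y f'(-y\ell)\dotp{\vec u}{\vec x}]|\le\sqrt{\E[f'(-y\ell)^2]}\le B$, so $\snorm{2}{\E[\nabla_{\vec w}g]}\le B/\snorm{2}{\vec w}\le B$ and $\snorm{2}{\E[\nabla_{\vec w}g]}^2\le B^2\le 3B^2$.

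The main obstacle is the Lipschitz bound, i.e.\ $\sup_{\vec w\in\W}\snorm{\mathrm{op}}{\E[\nabla^2_{\vec w}g]}\le 6B+4K$: note that $\nabla^2_{\vec w}g$ itself is \emph{not} uniformly bounded on $\W$ (it scales like $\snorm{2}{\vec x}^2$), so one must take the expectation first and exploit the low-order moments of $\D_{\bx}$. For the $f''$-term, $(\nabla_{\vec w}\ell)(\nabla_{\vec w}\ell)^{\top}\succeq 0$ and $|f''(-y\ell)|\le K$ pointwise give $-K(\nabla_{\vec w}\ell)(\nabla_{\vec w}\ell)^{\top}\preceq f''(-y\ell)(\nabla_{\vec w}\ell)(\nabla_{\vec w}\ell)^{\top}\preceq K(\nabla_{\vec w}\ell)(\nabla_{\vec w}\ell)^{\top}$; taking expectations and using $\E[(\nabla_{\vec w}\ell)(\nabla_{\vec w}\ell)^{\top}]=\snorm{2}{\vec w}^{-2}\proj_{\vec w^{\perp}}\E[\vec x\vec x^{\top}]\proj_{\vec w^{\perp}}=\snorm{2}{\vec w}^{-2}\proj_{\vec w^{\perp}}$ (isotropy, zero mean), this term contributes $O(K)$ to the operator norm. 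For the $f'$-term, substitute the three-summand formula for $\nabla^2_{\vec w}\ell$: each summand is \emph{linear} in $\vec x$, with coefficient equal to a scalar that is bounded by $B$ and depends on $\vec x$ only through $\dotp{\vec w}{\vec x}$, times a power of $\snorm{2}{\vec w}^{-1}$, times a fixed matrix built from $\vec w$ (and $I$); so after taking expectations each reduces to $\vec m:=\E[-y f'(-y\ell)\vec x]$ (with $\snorm{2}{\vec m}\le B$ as above) or to $\E[-y f'(-y\ell)\dotp{\vec w}{\vec x}]$ (bounded by $B\snorm{2}{\vec w}$ by Cauchy--Schwarz plus isotropy). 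Collecting $\tfrac{\vec m\vec w^{\top}+\vec w\vec m^{\top}}{\snorm{2}{\vec w}^{3}}$, $\tfrac{3\,\E[-yf'\dotp{\vec w}{\vec x}]}{\snorm{2}{\vec w}^{5}}\vec w\vec w^{\top}$ and $\tfrac{\E[-yf'\dotp{\vec w}{\vec x}]}{\snorm{2}{\vec w}^{3}}I$, and using $\snorm{2}{\vec w}\ge 1$, bounds the $f'$-part by $2B+3B+B=6B$. Adding the two contributions gives $\snorm{\mathrm{op}}{\E[\nabla^2_{\vec w}g]}\le 6B+4K$ on $\W$, and since $\nabla_{\vec w}\SL$ is the integral of $\nabla^2_{\vec w}\SL$ along segments — and in Algorithm~\ref{alg:PSGD} the only segments we ever use, namely $\vec w^{(i-1)}$ to $\vec v^{(i)}$, stay in $\W$ because the stochastic gradient is orthogonal to $\vec w^{(i-1)}$ — this establishes $(6B+4K)$-Lipschitzness.

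Finally, Lemma~\ref{lem:sigmoid_smoothness} follows by specializing $f=S_\sigma$: one has $|S_\sigma|\le 1$, $|S_\sigma'|\le 1/\sigma$, and $|S_\sigma''|\le 1/\sigma^2$ (indeed $S_\sigma''(t)=\tfrac1{\sigma^2}S_\sigma(t)(1-S_\sigma(t))(1-2S_\sigma(t))$), so we may take $R=1$, $B=1/\sigma$, $K=3/\sigma^2$, which yields $\E[\snorm{2}{\nabla_{\vec w}g}^2]\le 4d/\sigma^2$, $\snorm{2}{\nabla_{\vec w}\SL}^2\le\snorm{2}{\E[\nabla_{\vec w}g]}^2\le 4/\sigma^2$, and $\nabla_{\vec w}\SL$ being $(6/\sigma+12/\sigma^2)$-Lipschitz, as claimed.
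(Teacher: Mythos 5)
Your proof is correct and follows essentially the same route as the paper's: differentiate under the expectation, bound the gradient moments by Cauchy--Schwarz and isotropy, and bound the operator norm of the Hessian of $\SL$ term by term on $\W$ using $\snorm{2}{\vec w}\ge 1$, arriving at the same constant $6B+4K$. The differences are only bookkeeping (the projection form of $\nabla_{\vec w}\ell$, the PSD sandwich for the $f''$ term, and reducing the $f'$ terms to the mean vector $\vec m$ before bounding), and your explicit remark that the Hessian bound yields Lipschitzness only along segments contained in the non-convex set $\W$ --- a point the paper addresses inside the proof of Lemma~\ref{lem:PSGD} --- is a welcome clarification rather than a deviation.
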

\begin{proof}
Write $g( \vec x, y,\vec w) = f(\ell ({\vec w}, \vec x) y )$, where
$\ell({\vec w}, \vec x) = \dotp{{\vec w}}{\vec x}/\snorm{2}{{\vec w}}$.
Note that $|g(\vec x, y,\vec w)| \leq R$. Therefore, $|\SL({\vec w})| \leq R$.

We now deal with the function 
$\ell({\vec w}, \vec x) = \dotp{{\vec w}}{\vec x}/\snorm{2}{\vec w}$.  
We have that 
$\nabla_{{\vec w}} \ell({\vec w}, \vec x) = \frac{\vec x}{\snorm{2}{{\vec w}}} - \dotp{{\vec w}}{\vec x} \frac{{\vec w}}{\snorm{2}{{\vec w}}^3}$. 
Observe that
$\snorm{2}{\nabla_{{\vec w}} \ell({\vec w},\vec x)} \leq 2 \snorm{2}{\vec x}/\snorm{2}{\vec w} \leq 2 \snorm{2}{\vec x}$.
Therefore, since $\D_{\bx}$ is isotropic, we get that 
$\E_{(\bx, y) \sim \D}[\snorm{2}{\nabla_{{\vec w}} g( \vec x, y,\vec w)}^2] \leq 4 B^2 \E_{(\bx, y) \sim \D}[\snorm{2}{\vec x}^2] = 4 B^2 d$.
Moreover, we have
\begin{align*}
\snorm{2}{\E_{(\bx, y) \sim \D}[\nabla_{{\vec w}} g( \vec x, y,\vec w)]}^2 
&=\left(\sup_{\snorm{2}{\vec v}=1} \E_{(\bx, y) \sim \D}[\dotp{\nabla_{{\vec w}} g( \vec x, y,\vec w)}{\vec v}]\right)^2\\
&\leq B^2 \left(\sup_{\snorm{2}{\vec v}=1} \E_{\bx\sim \D_{\bx}}[\dotp{\nabla_{{\vec w}} \ell({\vec w}, \vec x)}{\vec v}]\right)^2 \\	
&\leq B^2 \left(\sup_{\snorm{2}{\vec v}=1} \E_{\bx\sim \D_{\bx}}\left[\frac{|\dotp{\vec x}{ \vec v}|}{\snorm{2}{{\vec w}}} +
|\dotp{{\vec w}}{\vec x}| \frac{|\dotp{\vec w}{ \vec v}|}{\snorm{2}{{\vec w}}^3}\right]\right)^2 \\
&\leq  B^2 \left(2\sup_{\snorm{2}{\vec v}=1} \sqrt{\E_{\bx\sim \D_{\bx}}\left[|\dotp{\vec x}{ \vec v}|^2\right]}\right)^2
\leq 4B^2\;,
\end{align*}
where in the first inequality we used $f'(t)\leq B$ and in the third we used the Cauchy-Swartz inequality 
and that $\snorm{2}{\vec w}\geq 1$.

We finally prove that the gradient of $\SL$ is Lipschitz.  We have that
\[
\nabla^2_{{\vec w}} \ell({\vec w}, \vec x) = - \frac{\vec x {\vec w}^T}{\snorm{2}{{\vec w}}^3} 
- \frac{{\vec w} \vec x^T}{\snorm{2}{{\vec w}}^3} 
- \frac{ \dotp{\vec x}{ {\vec w}}}{\snorm{2}{{\vec w}}^3} \vec I
+ 3 \dotp{\vec x}{ {\vec w}} \frac{{\vec w}{\vec w}^T}{\snorm{2}{{\vec w}}^5} \;.
\]
Therefore,
  \begin{align*}
    \nabla_{{\vec w}}^2 g(\vec x, y,{\vec w})
    &=
    f''(y \ell({\vec w}, \vec x)) \nabla_{{\vec w}} \ell({\vec w}, \vec x)
    \nabla_{{\vec w}} \ell({\vec w}, \vec x)^T
    + f'(\ell({\vec w}, \vec x))
    \nabla_{{\vec w}}^2 \ell({\vec w}, \vec x)
    \\
    &=
    f''(y \ell({\vec w}, \vec x))
    \left(\frac{\vec x \vec x^T}{\snorm{2}{{\vec w}}^2}
      - \frac{\dotp{{\vec w}}{ \vec x}}{\snorm{2}{{\vec w}}^4} {\vec w} \vec x^T
      - \frac{\dotp{{\vec w}}{ \vec x}}{\snorm{2}{{\vec w}}^4} \vec x {\vec w}^T
      + \frac{\dotp{{\vec w}}{ \vec x}^2}{\snorm{2}{{\vec w}}^6} {\vec w} {\vec w}^T
    \right)
    \\
    &+
    f'(y \ell({\vec w}, x)) y\ \nabla^2_{{\vec w}} \ell({\vec w}, \vec x).
  \end{align*}
  To prove that $\SL({\vec w})$ has Lipschitz gradient, we will bound
  $\snorm{2}{\nabla_{{\vec w}}^2 \SL(\vec w)}$.
  Let ${\vec v} \in \mathbb{S}^{d-1}$. We have
  \begin{align*}
    \left| \dotp{{\vec v}} {\E_{(\bx, y) \sim \D}\left[
      \frac{f''(y \ell({\vec w}, \vec x))}{\snorm{2}{{\vec w}}^2} \vec x \vec x^T\right]
    {\vec v}}
    \right|
    \leq
    \E_{(\bx, y) \sim \D}\left[\frac{
    |f''(y \ell({\vec w}, \vec x))|}{\snorm{2}{{\vec w}}^2} \dotp{\vec x} {{\vec v}}^2\right]
    \\
    \leq
    \frac{K}{\snorm{2}{{\vec w}}^2}
    \E_{(\bx, y) \sim \D}\left[\dotp{\vec x} {{\vec v}}^2\right]
    \leq \frac{K}{\snorm{2}{{\vec w}}^2} \;,
  \end{align*}
where we used the fact that $|f''(t)| \leq K$ for all $t$.
To get the last equality, we used the fact that the marginal distribution on $\vec x$ is isotropic.
Similarly, we have
  \begin{align*}
    \left| \dotp{{\vec v}} {\E_{(\bx, y) \sim \D}
      \left[
      \frac{f''(y \ell({\vec w}, \vec x))}{\snorm{2}{{\vec w}}^4} \dotp{{\vec w}}{ \vec x} {\vec w} \vec x^T}
    \right]
    {\vec v}
    \right|
    \leq
    \E_{(\bx, y) \sim \D}
    \left[
      \frac{|f''(y \ell({\vec w}, \vec x))|}{\snorm{2}{{\vec w}}^4}
      |\dotp{{\vec w}} {\vec x}| |\dotp{{\vec v}}{ {\vec w}}|
      |\dotp{\vec x}{{\vec v}}|
    \right]
    \\
    \leq
    \frac{K}{\snorm{2}{{\vec w}}^3}
    \E_{(\bx, y) \sim \D}
    \left[
      |\dotp{{\vec w}}{ \vec x}| |\dotp{\vec x}{\vec v}|
    \right]
    \leq
    \frac{K}{\snorm{2}{{\vec w}}^3}
    \sqrt{ \E_{(\bx, y) \sim \D} \left[\dotp{\vec w}{ \vec x}^2\right] }
    \sqrt{ \E_{(\bx, y) \sim \D} \left[\dotp{\vec x}{\vec v}^2\right] }
    \leq
    \frac{K}{\snorm{2}{{\vec w}}^3} \;,
  \end{align*}
where the last step follows because the distribution $\D_{\bx}$ is isotropic. 
Similarly, we can bound the rest of the terms of $|{\vec v}^T \nabla_{{\vec w}}^2 \SL({\vec w}) {\vec v}|$ 
to obtain
\[
    |{\vec v}^T \nabla_{{\vec w}}^2 \SL({\vec w}) {\vec v}| \leq
    B
    \lp( \frac{2}{\snorm{2}{{\vec w}}^2}
    + \frac{4}{\snorm{2}{{\vec w}}^3}
    \rp)
    + K
    \lp( \frac{1}{\snorm{2}{{\vec w}}^2}
    + \frac{2}{\snorm{2}{{\vec w}}^3}
    + \frac{1}{\snorm{2}{{\vec w}}^4}
    \rp)
    \leq 6 B + 4 K \;,
\]
  where we used the fact that $\snorm{2}{{\vec w}} \geq 1$.
\end{proof}

Our desired lemma now follows as a corollary.

\begin{customlem}{\ref{lem:sigmoid_smoothness}}[Sigmoid Smoothness]
  \textit{
  Let $S_\sigma(t) = 1/(1 + e^{-t/\sigma})$ and $\SL(\bw) = \E_{(\vec x,y) \sim
    \D} \left[S_{\sigma}\left(-y \frac{\dotp{\vec w}{\vec x}}{\snorm{2}{\vec w}}
    \right)\right]$, for $\bw \in \W$, where $\W=\{\bw\in \R^d:
  \snorm{2}{\bw}\geq 1\}$.  We have that $\SL(\vec w)$ is continuously
  differentiable in $\W$, $|\SL(\vec w)| \leq 1$, $\E_{(\bx, y) \sim
  \D}[\snorm{2}{\nabla_{{\vec w}} S_{\sigma}({\vec w}, \vec x, y)}^2] \leq
  4d/\sigma^2$, $\snorm{2}{\nabla_{\vec w} \SL(\vec w)}^2 \leq 4/\sigma^2$, and
  $\nabla_{\vec w} \SL(\vec w)$ is $(6/\sigma + 12 /\sigma^2)$-Lipschitz.
}
\end{customlem}

\begin{proof}
  We first observe that $|S_{\sigma}(t)| \leq 1$ for all $t$ in $\R$.  Moreover, $S_{\sigma}$
  is continuously differentiable. The first and the second derivative of $S_{\sigma}$ with respect to $t$ is
  $$
  S'_{\sigma}(t) = S_{\sigma}^2(t) \frac{e^{-t/\sigma}}{\sigma}
  \quad \text{and}\quad
  S''_{\sigma}(t) = S_{\sigma}^3(t) \frac{2 e^{-2 t/\sigma}}{\sigma^2}
  - S_{\sigma}^2(t) \frac{e^{-t/\sigma}}{\sigma^2} \;.
  $$
  We have that $S'_{\sigma}(t) \leq S'_{\sigma}(0) = 1/\sigma$
  and $S''_{\sigma}(t) \leq 3/\sigma^2$.
  The result follows by applying Lemma~\ref{lem:high_dimensional_objective_properties}.
\end{proof}

\appendix

\end{document}